\newcommand{\Ex}{{\mathbb{E}}}
\newcommand{\comment}[1]{}
\newcommand{\indi}{\mathbbm{1}}
\newcommand{\mdp}{{\cal M}}
\newcommand{\extM}[1]{\tilde{\mdp}^{#1}}
\newcommand{\todo}[1]{\textcolor{red}{TODO: #1}}
\newcommand{\regStrict}{\mbox{Regret}}
\newcommand{\reg}{{\cal R}}
\newcommand{\st}{{\cal S}}
\newcommand{\ac}{{\cal A}}
\newcommand{\Dir}{\mbox{Dirichlet}}
\newcommand{\smallN}{\eta}
\newcommand{\smallNValue}{\textcolor{black}{\sqrt{\frac{TS}{A}} + 12\omega S^4}} 
\newcommand{\omegaValue}{\textcolor{black}{720\log(n/\rho)}}
\newcommand{\e}{{\boldsymbol{1}}}
\newcommand{\ns}{\psi}
\newcommand{\boost}{\kappa}
\newcommand{\boostValue}{\textcolor{black}{120\log(n/\rho)}}
\newcommand{\optimismBound}{\textcolor{black}{O\left(D \log^2(T/\rho) \sqrt{\frac{SA}{T}}\right)}}
\newcommand{\deviationBound}[1]{O\left(#1(\sqrt{\frac{S}{N^{\tau_k}_{s,a}}} + \frac{ S}{N^{\tau_k}_{s,a}})\log^2(SAT/\rho)\right)}
\newcommand{\deltaValue}{\sqrt{\frac{3\hat{p}_i\log(4S)}{n}}+\frac{3\log(4S)}{n}}
\newcommand{\DeltaValue}[2]{\textcolor{black}{\min\left\{\sqrt{\frac{3 #1 \log(4S)}{#2}} + \frac{3\log(4S)}{#2}, #1\right\}}}
\newenvironment{myproof}{\begin{proof}}{\end{proof} \endproof }
\newtheorem{fact}{Fact}
\newtheorem{lemma}{Lemma}[section]
\newtheorem{proposition}[lemma]{Proposition}
\newtheorem{corollary}[lemma]{Corollary}
\newtheorem{theorem}{Theorem}
\newtheorem{definition}{Definition}
\newtheorem{remark}{Remark}
\title{Posterior sampling for reinforcement learning: worst-case regret bounds}
\author{
  Shipra Agrawal\\
  Columbia University\\
  \texttt{sa3305@columbia.edu} \\
   \And
   Randy Jia \\
   Columbia University \\
   \texttt{rqj2000@columbia.edu} \\
}
\begin{document}

\maketitle

\begin{abstract}
We present an algorithm based on posterior sampling (aka Thompson sampling) that achieves near-optimal worst-case regret bounds when the underlying Markov Decision Process (MDP) is communicating with a finite, though unknown, diameter. 
Our main result is a high probability regret upper bound of $\tilde{O}(DS\sqrt{AT})$ for any communicating MDP with $S$ states, $A$ actions and diameter $D$. Here, regret compares the total reward achieved by the algorithm to the total expected reward of an optimal infinite-horizon undiscounted average reward policy, in time horizon $T$. 
This result closely matches the known lower bound of $\Omega(\sqrt{DSAT})$. 
Our techniques involve proving some novel results about the anti-concentration of Dirichlet distribution, which may be of independent interest. 
\end{abstract}

\section{Introduction}

Reinforcement Learning (RL) refers to the problem of learning and planning in sequential decision making systems when the underlying system dynamics are unknown, and may need to be learned by trying out different options and observing their outcomes.  
A typical model for the sequential decision making problem is a Markov Decision Process (MDP), which proceeds in discrete time steps. At each time step, the system is in some state $s$, and the decision maker may take any available action $a$ to obtain a (possibly stochastic) reward. The system then transitions to the next state according to a fixed state transition distribution. 
The reward and the next state depend on the current state $s$ and the action $a$, but are independent of all the previous states and actions. 
In the reinforcement learning problem, the underlying state transition distributions and/or reward distributions are unknown,
and need to be {\em learned} 
using the observed rewards and state transitions, while aiming to {\em maximize} the cumulative reward. This requires the algorithm to manage the tradeoff between  exploration vs. exploitation, i.e., exploring different actions in different states in order to learn the model more accurately vs. taking actions that currently seem to be reward maximizing. 

Exploration-exploitation tradeoff has been studied extensively in the context of stochastic multi-armed bandit (MAB) problems, which are essentially MDPs with a single state. The performance of MAB algorithms is typically measured through {\it regret}, which compares the total reward obtained by the algorithm to the total expected reward of an optimal action. Optimal regret bounds have been established for many variations of MAB (see \cite{bookBubeckCB} for a survey), with a large majority of results obtained using the Upper Confidence Bound (UCB) algorithm, or more generally, the {\it optimism in the face of uncertainty} principle.  Under this principle, the learning algorithm maintains tight over-estimates (or optimistic estimates) of the expected rewards for individual actions, and at any given step, picks the action with the highest optimistic estimate. 
More recently, posterior sampling, aka Thompson Sampling \citep{Thompson}, has emerged as another popular algorithm design principle in MAB, owing its popularity to a simple and extendible algorithmic structure, an attractive empirical performance \citep{ChapelleL11, KaufmannMunos12}, as well as provably optimal performance bounds that have been recently obtained for many variations of MAB \citep{AgrawalG12, agrawal2013further, agrawal-contextual, Russo1, Russo2, BubeckL14}. In this approach, the algorithm maintains a Bayesian posterior distribution for the expected reward of every action; then at any given step, it generates an independent sample from each of these posteriors, and takes the action with the highest sample value. 

In this paper, we consider the Reinforcement Learning (RL) problem in a similar regret based framework, where the total reward of the reinforcement learning algorithm is compared to the total expected reward achieved by a single benchmark policy over a time horizon $T$. In our setting, the benchmark policy is the {\it infinite-horizon undiscounted average reward} optimal policy for the underlying MDP. Here, the underlying MDP is assumed to have finite states $S$ and finite actions $A$, and is assumed be communicating with (unknown) finite diameter $D$. The diameter $D$ is an upper bound on the time it takes to move from any state $s$ to any other state $s'$ using an appropriate policy, for each pair $s,s'$.  A finite diameter is believed to be necessary for interesting bounds on the regret of any algorithm in this setting \citep{jaksch2010near}. 
The UCRL2 algorithm of \citet{jaksch2010near}, which is based on the optimism principle, achieved the first finite regret upper bound of $\tilde{O}(DS\sqrt{AT})$ for this problem. A similar bound was achieved by \citet{bartlett2009regal}, although under known diameter $D$.  \citet{jaksch2010near} also established a worst-case lower bound of $\Omega(\sqrt{DSAT})$ on the regret of any algorithm for this problem. Very recently, an (unpublished) work by \citet{DBLP:journals/corr/abs-1905-12425} has claimed to achieve a regret bound matching the lower bound of $O(\sqrt{DSAT})$ using a variation of the UCRL2 approach. 

{\bf Our main contribution} is a posterior sampling based algorithm with a high probability worst-case regret upper bound of $\tilde{O}(DS\sqrt{AT})$. 
Our algorithm uses an `optimistic' version of the posterior sampling heuristic, while utilizing several ideas from the algorithm design structure in \cite{jaksch2010near}, such as an epoch based execution and the extended MDP construction. The algorithm proceeds in epochs, where in the beginning of every epoch, it generates $\ns=\tilde{O}(S)$ sample transition probability vectors from a posterior distribution for every state and action, and solves an extended MDP with $\psi A$ actions and $S$ states formed using these samples. The optimal policy computed for this extended MDP is used throughout the epoch. 

Posterior Sampling for Reinforcement Learning (PSRL) approach has been studied previously in \cite{osband2013more, abbasi2014bayesian, osband2016posterior}, but in a {\it Bayesian regret} framework. Bayesian regret is defined as the expected regret over a known prior on the transition probability matrix. \cite{osband2016posterior} demonstrate an $\tilde{O}(H\sqrt{SAT})$ bound  on the expected Bayesian regret for PSRL in finite-horizon {\it episodic} Markov decision processes, when the episode length is $H$. In this paper, we consider the stronger notion of {\it worst-case}  regret, aka minimax regret, which requires bounding the maximum regret for any instance of the problem\footnote{Worst-case regret is a strictly stronger notion of regret than Bayesian regret. However, a caveat is that the reward distributions are assumed to be bounded or sub-Gaussian in order to prove worst-case regret bounds. 
On the other hand, the Bayesian regret bounds in the above-mentioned literature allow more general (known) priors on the reward distributions with possibly unbounded support. Bayesian regret bounds under such more general reward distributions are incomparable to the worst-case regret bounds presented here.}.
We consider a {\it non-episodic communicating MDP} setting and prove a worst-case regret bound of $\tilde{O}(DS\sqrt{AT})$, where $D$ is the unknown diameter of the communicating MDP. 
In comparison to a single sample from the posterior in PSRL, our algorithm is slightly inefficient as it uses multiple ($\tilde{O}(S)$) samples. It is not entirely clear if the extra samples are only an artifact of the analysis. In an empirical study of a multiple sample version of posterior sampling for RL, \cite{fonteneau2013optimistic} show that multiple samples can potentially improve the performance of posterior sampling in terms of probability of taking the optimal decision. Our analysis utilizes some ideas from the Bayesian regret analysis. However, bounding the worst-case regret requires several new technical ideas, in particular, for proving `optimism' of the gain of the sampled MDP. Further discussion is provided in Section \ref{sec:regret}.  

PSRL (and our optimistic PSRL) approaches are referred to as ``model-based" approaches, since they explicitly estimate the transition probability matrix underlying the MDP model. Another line of closely related works investigate optimistic versions of ``model-free algorithms" like  of value-iteration \citep{azar2017minimax} and Q-learning \citep{kakade2018}. 
However, the setting considered in both of these works is that of an {\it episodic MDP}, where the learning agent interacts with the system in episodes of fixed and known length $H$. Under this setting, both these works achieve minimax (i.e., worst-case) regret bound of $\tilde{O}(\sqrt{HSAT})$ when $T$ is large enough compared to the episode length $H$.
To understand the challenges in our setting compared to the episodic setting, note that while the initial state of each episode can be arbitrary in the episodic setting, importantly, the sequence of these initial states is shared by the algorithm and any benchmark policy. In contrast, in the non-episodic setting considered in this paper, the state trajectory of the benchmark policy over $T$ time steps can be completely different from the algorithm's trajectory. To the best of our understanding, the shared sequence of initial states of every episode, and the fixed known length $H$ of episodes seem to form crucial components of the analysis in the episodic settings of \cite{azar2017minimax, kakade2018}. Thus, it  would be difficult to extend such an analysis to the non-episodic communicating MDP setting considered in this paper.

Among \noindent {\bf other related work}, \cite{burnetas97} and \cite{tewari2008optimistic} present optimistic linear programming approaches that achieve logarithmic regret bounds with problem dependent constants.  
Strong PAC bounds have been provided in \mbox{\cite{kearns1999finite}, \cite{brafman2003}, \cite{kakade2003sample}}, \cite{Asmuth2009}, \cite{dann2015sample}. There, the aim is to bound the performance of the policy learned at the end of the learning horizon, and not the performance during learning as quantified here by regret. 
{Notably, the BOSS algorithm proposed in \cite{Asmuth2009} is similar to the algorithm proposed here in the sense that the former also takes multiple samples from the posterior to form an extended (referred to as \emph{merged}) MDP.}
\cite{Strehl2005, Strehl2008} provide an optimistic algorithm for bounding regret in a discounted reward setting, but the definition of regret is different in that it measures the difference between the rewards of an optimal policy and the rewards of the learning algorithm {\it  on the state trajectory taken by the learning algorithm}.


\section{Preliminaries and Problem Definition}
\label{sec:prelims}
\subsection{Markov Decision Process (MDP)} 
We consider a Markov Decision Process ${\cal M}$ defined by tuple $\{{\cal S}, {\cal A}, P, r, s_1\}$, where ${\cal S}$ is a finite state-space of size $S$, ${\cal A}$ is a finite action-space of size $A$, $P: \st \times \ac \rightarrow \Delta^\st$ is the transition model, $r: \st \times \ac \rightarrow [0,1]$ is the reward function, and $s_1$ is the starting state. When an action $a\in \ac$ is taken in a state $s\in \st$, a reward $r_{s,a}$ is generated and the system transitions to the next state $s' \in {\cal S}$ with probability $P_{s,a}(s')$, where $\sum_{s'\in {\cal S}} P_{s,a}(s') = 1$. 

We consider `communicating' MDPs with finite `diameter'. Below we define communicating MDPs, and recall some useful known results for such MDPs.  


\begin{definition}[Policy] A deterministic policy $\pi: \st \rightarrow \ac$ is a mapping from state space to action space.
\end{definition}

\begin{definition}[Diameter $D({\cal M})$]
Diameter $D({\cal M})$ of an MDP ${\cal M}$ is defined as the minimum time required to go from one state to another in the MDP using some deterministic policy:
\[ D({\cal M}) = \max_{s \ne s', s,s'\in \st} \min_{\pi: \st \rightarrow \ac} T^{\pi}_{s\rightarrow s'},\]
where $T^{\pi}_{s \rightarrow s'}$ is the expected number of steps it takes to reach state $s'$ when starting from state $s$ and using policy $\pi$.  
\end{definition}

\begin{definition}[Communicating MDP]
\label{def:communicating}
An MDP ${\cal M}$ is communicating if and only if it has a finite diameter. 
That is, for any two states $s\ne s'$, there exists a policy $\pi$ such that the expected number of steps to reach $s'$ from $s$, $T^{\pi}_{s\rightarrow s'}$, is at most $D$, for some finite $D \ge 0$. 
\end{definition}
\begin{definition}[Gain of a policy]
The gain $\lambda^\pi(s)$ of a policy $\pi$, from starting state $s_1=s$, is defined as the infinite horizon undiscounted average reward, given by 
\[\lambda^{\pi}(s) = \Ex[\lim\limits_{T\rightarrow \infty} \frac{1}{T}\sum\limits_{i=1}^T r_{s_t,\pi(s_t)} |s_1=s].\]
where $s_t$ is the state reached at time $t$, on executing policy $\pi$.
\end{definition}
\begin{lemma} [Optimal gain for communicating MDPs]
\label{lem:commBias}
For a communicating MDP $\mdp$ with diameter $D$:
\begin{itemize}
\item[(a)] (\cite{puterman2014markov} Theorem 8.1.2, Theorem 8.3.2) The optimal (maximum) gain $\lambda^*$ is state independent and is achieved by a deterministic stationary policy $\pi^*$, i.e., there exists a deterministic policy $\pi^*$ such that 
\[ \lambda^*:=\max_{s' \in \st} \max_\pi \lambda^{\pi}(s') = \lambda^{\pi^*}(s), \forall s\in \st.\]
Here, $\pi^*$ is referred to as an optimal policy for MDP $\mdp$.
\item[(b)] (\cite{bartlett2009regal}, Theorem 4)
The optimal gain $\lambda^*$ satisfies the following equations,
\begin{equation}
\label{eq:optLP}
\lambda^* = \min_{h\in \mathbb{R}^S} \max_{s,a} r_{s,a} + P_{s,a}^Th - h_s = \max_a r_{s,a} + P_{s,a}^Th^* - h^*_s, \forall s
\end{equation}
where $h^*$, referred to as the bias vector of MDP $\mdp$, satisfies:
\[\max_s h^*_s - \min_s h^*_s \le D.\]
\end{itemize}
\end{lemma}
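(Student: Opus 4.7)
The plan is to obtain parts (a) and (b) together via three steps: (i) construct a solution $(\lambda^*,h^*)$ to the average-reward Bellman equation by a vanishing-discount limit, (ii) read off the min--max formulation and the deterministic stationary optimality directly from that equation, and (iii) sharpen the bound on $h^*$ to the claimed span bound $\max_s h^*_s - \min_s h^*_s\le D$ via an optional-stopping argument that uses the definition of the diameter.

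For step (i), for each discount $\gamma\in(0,1)$ let $V_\gamma$ be the optimal discounted value function, which exists as the unique fixed point of the $\gamma$-contraction Bellman operator and is attained by a deterministic stationary policy. Fix a reference state $s_0$ and set $h_\gamma(s):=V_\gamma(s)-V_\gamma(s_0)$. The key uniform bound $|h_\gamma(s)|\le D$ comes from the communicating assumption: from any state $s$ there is a policy reaching $s_0$ (and vice versa) in expected time at most $D$, and since rewards lie in $[0,1]$ the comparison of the two discounted values across that transit yields $|V_\gamma(s)-V_\gamma(s_0)|\le D$. By compactness, along some subsequence $\gamma_n\uparrow 1$ one has $(1-\gamma_n)V_{\gamma_n}(s_0)\to\lambda^*$ (a scalar) and $h_{\gamma_n}\to h^*$ pointwise. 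Passing to the limit in the discounted Bellman equation $V_{\gamma_n}(s)=\max_a\{r_{s,a}+\gamma_n P_{s,a}^\top V_{\gamma_n}\}$ after subtracting $V_{\gamma_n}(s_0)$ on both sides gives
\[
\lambda^* + h^*_s \;=\; \max_a\bigl\{\, r_{s,a} + P_{s,a}^\top h^*\,\bigr\},\qquad \forall\,s\in\st.
\]
Taking $\pi^*(s)\in\arg\max_a\{r_{s,a}+P_{s,a}^\top h^*\}$ defines a deterministic stationary policy, and iterating the Bellman equation under $\pi^*$ gives $\tfrac{1}{T}\Ex_{\pi^*}[\sum_{t=1}^T r_{s_t,\pi^*(s_t)}\mid s_1=s]=\lambda^*+O(D/T)$ uniformly in $s$, using the just-obtained $|h^*|\le D$. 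Hence $\lambda^*$ is state independent and achieved by the deterministic policy $\pi^*$, giving (a) together with the max-identity in (b).

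For the min-identity in (b), any $(\lambda,h)$ satisfying $\lambda+h_s\ge r_{s,a}+P_{s,a}^\top h$ for every $(s,a)$ dominates the gain of every policy: unrolling $T$ steps of this inequality under any policy $\pi$ and dividing by $T$ yields $\lambda\ge \lambda^\pi(s) - O(\|h\|_\infty/T)$, so $\lambda\ge\lambda^*$. The pair $(\lambda^*,h^*)$ is feasible, hence the minimum equals $\lambda^*$. For step (iii), fix $s,s'\in\st$ and let $\pi$ be a (possibly non-stationary) policy reaching $s'$ from $s$ in expected time $\tau\le D$. Since $\lambda^*+h^*_{s_t}\ge r_{s_t,a_t}+\Ex[h^*_{s_{t+1}}\mid\mathcal{F}_t]$ pointwise, the process $M_t:=h^*_{s_t}+\sum_{u<t}(r_{s_u,a_u}-\lambda^*)$ is a supermartingale under $\pi$. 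Optional stopping at $\tau$ with $s_\tau=s'$ gives
\[
h^*_{s'} - h^*_{s}\;\le\; \Ex_\pi\!\left[\sum_{u=1}^{\tau}(\lambda^* - r_{s_u,a_u})\right]\;\le\; \Ex_\pi[\tau]\;\le\; D,
\]
using $r,\lambda^*\in[0,1]$. Swapping the roles of $s$ and $s'$ provides the matching bound, yielding $\max_s h^*_s-\min_s h^*_s\le D$.

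The main obstacle will be the uniform bound $|h_\gamma|\le D$ in step (i): this is the single place where the diameter assumption has to be leveraged, and it requires explicitly constructing a fast transit policy between $s$ and $s_0$ and carefully bounding the discounted reward deficit along that transit. Once that bound is in hand, the remaining pieces reduce to standard dynamic programming plus a clean optional-stopping computation using the Bellman inequality.
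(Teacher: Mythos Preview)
The paper does not supply its own proof of this lemma: it is stated with citations to Puterman and to Tewari--Bartlett, and is used as a black box thereafter. So there is no ``paper's proof'' to compare against; your task was effectively to reconstruct a self-contained argument for a result the paper imports.

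Your argument is correct and follows the standard route. The vanishing-discount construction in step~(i) is the classical way to produce a solution pair $(\lambda^*,h^*)$ of the average-reward optimality equation; the key estimate $|V_\gamma(s)-V_\gamma(s_0)|\le D$ is exactly the place where the communicating assumption enters, and your sketch of it (transit in expected time $\le D$, rewards in $[0,1]$, hence discounted deficit $\le (1-\gamma)D\cdot\frac{1}{1-\gamma}=D$) is right. Step~(ii) correctly reads off both sides of the min--max identity: feasibility of $(\lambda^*,h^*)$ gives the upper bound, and unrolling the Bellman inequality under an arbitrary policy gives the lower bound, which simultaneously certifies that $\lambda^*$ is the maximal gain and is attained by the greedy deterministic $\pi^*$. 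Step~(iii) is the clean sharpening from the crude span bound $2D$ (which is all step~(i) gives directly) to the claimed $D$: the supermartingale $M_t=h^*_{s_t}+\sum_{u<t}(r_{s_u,a_u}-\lambda^*)$ together with optional stopping at the hitting time of $s'$ is precisely the argument behind Theorem~4 of Tewari--Bartlett that the paper cites. Two small remarks: the policy in step~(iii) can be taken stationary deterministic by the definition of the diameter, so ``possibly non-stationary'' is unnecessary; and your displayed sum should run to $\tau-1$ rather than $\tau$ under your indexing of $M_t$, though the bound $\le \Ex_\pi[\tau]\le D$ is unaffected.
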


\comment{
\paragraph{Regret:} 
Regret compares the total reward of the learning agent to the total reward obtained by using optimal policy $\pi^*$ of MDP ${\cal M}$ for all time steps $t=1,\ldots, T$. 
Let $r_t$ be the reward obtained by the learner at time $t$, and $r^*_t$ be the reward obtained at time $t$ on using optimal policy $\pi^*$ for all time steps. Then, the cumulative regret of learning agent is defined as:
\[ \regStrict({\cal M}, T) := \sum_{t=1}^T (r^*_t -r_t)\]
Using standard concentration inequalities, it is easy to show that for communicating MDPs with diameter $D$ (refer to Lemma \ref{lem:rewardToGain} in Section \ref{sec:regret} \todo{add proof}) with probability $1-\rho$, regret can be bounded as
\[ \regStrict({\cal M}, T) \le T\lambda^*({\cal M}) - \sum_{t=1}^T r_t +  O(D\sqrt{T \log(1/\rho)}).\] 
Given above observation, in the rest of the paper, we focus on pseudo-regret:
\[\reg(T) := T\lambda^* - \sum_{t=1}^T r_t \]
to produce the corresponding bound on regret.

Note that in most of the related literature (e.g., \cite{jaksch2010near} and \cite{tewari2008optimistic}) regret is directly defined as the quantity $\reg(T)$ defined above. \todo{check} 
}
Given the above definitions and results, we can now define the reinforcement learning problem studied in this paper.

\subsection{The reinforcement learning problem} 
The reinforcement learning problem proceeds in rounds $t=1,\ldots, T$. The learning agent starts from a state $s_1$ at round $t=1$. In the beginning of every round $t$, the agent takes an action $a_t\in {\cal A}$ and observes the reward $r_{s_t, a_t}$ as well as the next state $s_{t+1} \sim P_{s_t, a_t}$, where $r$ and $P$ are the reward function and the transition model, respectively, for a communicating MDP ${\cal M}$ with diameter $D$. 

The learning agent knows the state-space ${\cal S}$, the action space ${\cal A}$, as well as the rewards $r_{s,a}, \forall s\in \st, a\in \ac$, for the underlying MDP, but not the transition model $P$ or the diameter $D$. (The assumption of known and deterministic rewards has been made here only for simplicity of exposition, since the unknown transition model is the main source of difficulty in this problem. 
Our algorithm and results can be extended to bounded stochastic rewards with unknown distributions using standard Thompson Sampling for MAB, e.g., using the techniques in \cite{agrawal2013further}.)
 
The agent can use the past observations to learn the underlying MDP model and decide future actions. 
The goal is to maximize the total reward $\sum_{t=1}^T r_{s_t, a_t}$, or equivalently, minimize the total regret over a time horizon $T$, defined as
\begin{equation}
\label{eq:regret}
\textstyle \reg(T, \mdp) := T\lambda^* - \sum_{t=1}^T r_{s_t, a_t}
\end{equation}
where $\lambda^*$ is the optimal gain of MDP $\mdp$. 

We present an algorithm for the learning agent with a near-optimal upper bound on the regret $\reg(T, \mdp)$ for any communicating MDP $\mdp$ with diameter $D$, thus bounding the worst-case regret over this class of MDPs.

\section{Algorithm Description}
Our algorithm combines the ideas of Posterior sampling (aka Thompson Sampling) with the extended MDP construction used in \cite{jaksch2010near}. Below we first describe the main components of our algorithm. Our algorithm is then summarized as Algorithm \ref{algo:main}.

\noindent {\it Some notations:} $N^t_{s,a}$ denotes the total number of times the algorithm visited state $s$ and played action $a$ until before time $t$, and $N^t_{s,a}(i)$ denotes the number of time steps among these  $N^t_{s,a}$ steps where the next state was $i$, i.e., the steps where a transition from state $s$ to $i$ was observed. We index the states from $1$ to $S$, so that $\sum_{i=1}^S N^t_{s,a}(i) = N^t_{s,a}$ for any $t$. We use the symbol $\e$ to denote the vector of all $1$s, and $\e_i$ to denote the vector with $1$ at the $i^{th}$ coordinate and $0$ elsewhere. 

\paragraph{Doubling epochs:} Our algorithm uses the epoch based execution framework of \cite{jaksch2010near}. An epoch is a group of consecutive rounds. The rounds $t=1,\ldots, T$ are broken into consecutive epochs as follows: the $k^{th}$ epoch begins at the round $\tau_{k}$ immediately after the end of $(k-1)^{th}$ epoch and ends at the first round $\tau$ such that for some state-action pair $s,a$, $N^{\tau}_{s,a} \ge 2 N^{\tau_{k}}_{s,a}$. The algorithm computes a new policy $\tilde{\pi}_k$ at the beginning of every epoch $k$, and uses that policy through all the rounds in that epoch. Since the total number of visits to any state action-pair is bounded by $T$, it is easy to conclude that irrespective of how the policies $\{\tilde{\pi}_k\}$ are computed, the number of epochs is bounded by $SA\log(T)$.

\paragraph{Posterior Sampling:}  
We use posterior sampling to compute the policy $\tilde{\pi}_k$ in the beginning of every epoch $k$. 
Our algorithm maintains a posterior distribution over the transition probability vector $P_{s,a}$, for every $s\in \st, a\in \ac$. Observe that $P_{s,a}$ specifies a categorical distribution over states ${1, \ldots, S}$, with parameters $P_{s,a}(i), i=1,\ldots, S$.  
Dirichlet distribution is a convenient choice for maintaining a posterior over parameters $P_{s,a}$, as Dirichlet distribution is a conjugate prior for the categorical distribution.
In particular, it satisfies the following useful property: given a prior $\Dir(\alpha_1, \ldots, \alpha_S)$ on $P_{s,a}$, after observing a transition from state $s$ to $i$ (with underlying probability $P_{s,a}(i)$), the posterior distribution is given by $\Dir(\alpha_1, \ldots, \alpha_i+1, \ldots,\alpha_S)$. By this property, for any $s\in \st,a \in \ac$, on starting from prior $\Dir({\bf 1})$ for $P_{s,a}$, the posterior at time $t$ is 
$\Dir(\{N^t_{s,a}(i)+1\}_{i=1, \ldots, S})$. 

 A direct application of the Posterior Sampling for Reinforcement Learning (PSRL) approach introduced in \cite{osband2013more} would involve sampling a transition probability vector from the Dirichlet posterior for each state-action pair, in order to form a sample MDP. A sample policy $\tilde{\pi}_k$ would then be computed as an optimal policy for the sampled MDP.  Our algorithm uses a modified, optimistic version of this approach. At the beginning of every epoch $k$, for every $s\in \st,a\in \ac$ such that $N^{\tau_k}_{s,a} \ge \smallN $, it  generates {\it multiple} samples for $P_{s,a}$ from a {\it boosted} variance posterior. Specifically, for each $s,a$, it generates $\ns$ independent sample probability vectors $Q^{1,k}_{s,a}, \ldots, Q^{\ns,k}_{s,a}$ as 
\[Q^{j,k}_{s,a} \sim \Dir({\bf M}^{\tau_k}_{s,a}), \]
where ${\bf M}^t_{s,a}$ denotes the vector $[M^t_{s,a}(i)]_{i=1, \ldots, S}$, with
\begin{equation}
\label{eq:boostedM}
\textstyle M^t_{s,a}(i) := \frac{1}{\boost} (N^t_{s,a}(i)+\omega), \text{ for } i=1,\ldots, S.
\end{equation}
Here, $\ns, \boost, \omega, \smallN$ are parameters of the algorithm. The values of these parameters are initialized as $\smallN=\smallNValue$, $\ns=\Theta(S\log(SA/\rho))$, $\boost=\Theta(\log(T/\rho))$, $\omega=\Theta(\log(T/\rho))$, for some $\rho\in (0,1]$.  In the regret analysis, we derive sufficiently large constants that can be used in the definition of $\ns, \boost,\omega$ to guarantee the bounds. In particular, in our proofs we use that $\ns = CS\log(SA/\rho)$ with $C = 7^{\frac{32}{\phi}}$, where $\phi = (\frac{(1-\Phi)(\frac{1}{2})}{2})^4$ and $\Phi$ is the normal cumulative distribution function. However, we emphasize that no attempt has been made to optimize those constants, and it is likely that much smaller constants suffice. 

For every remaining $s,a$, i.e., those with small number of visits so far (i.e., those with $N^{\tau_k}_{s,a}<\smallN$) the algorithm uses a simple optimistic sampling described in Algorithm \ref{algo:main}. 
This special sampling has been introduced to handle a technical difficulty in analyzing the anti-concentration of Dirichlet posteriors when the parameters are very small. We suspect that with an improved analysis, this may not be required.


\begin{algorithm}[t] 
\caption{A posterior sampling based algorithm for the reinforcement learning problem
\label{algo:main}
}
  \begin{algorithmic}
	\STATE {\bf Inputs:} State space ${\cal S}$, Action space ${\cal A}$, starting state $s_1$, reward function $r$, time horizon $T$, parameters $\rho \in (0,1], \ns=\Theta(S\log(SA/\rho)), \omega=\Theta(\log(T/\rho)), \boost=\Theta(\log(T/\rho)), \smallN=\smallNValue$. 
	\STATE {\bf Initialize:} $\tau^1:=1$, ${\bf M}^{\tau_1}_{s,a}=\omega {\bf 1}$.
	\vspace{0.08in}
\FORALL{epochs $k=1, 2, \ldots, $} 
\vspace{0.08in}
		\STATE \underline{\it Sample transition probability vectors:} For each $s,a$, generate $\ns$ independent sample probability vectors $Q_{s,a}^{j,k}, j=1,\ldots, \ns$, as follows:
		\begin{itemize} 
		\item {\bf (Posterior sampling)}: For $s,a$ such that $N^{\tau_k}_{s,a}\ge\smallN$, use samples from the Dirichlet distribution:
		\[
		\begin{array}{ll}
		Q^{j,k}_{s,a} \sim \Dir({\bf M}^{\tau_k}_{s,a}), & 
		\end{array}
		\]
		\item {\bf (Simple optimistic sampling)}:
			For $s,a$ such that $N^{\tau_k}_{s,a}<\smallN$, use the following simple optimistic sampling: let 
				\[P^{-}_{s,a} := \hat{P}_{s,a} - {\boldsymbol \Delta},\] 
				where $\hat{P}_{s,a}(i) := \frac{N^{\tau_k}_{s,a}(i)}{N^{\tau_k}_{s,a}}$, and $\Delta_i:=\DeltaValue{\hat{P}_{s,a}(i)}{N^{\tau_k}_{s,a}}$, 
				and let ${\bf z}$ be a random vector picked uniformly at random from $\{\e_1, \ldots, \e_S\}$; set
		\[
		\begin{array}{ll}
		Q^{j,k}_{s,a} = P^{-}_{s,a} + (1-\sum_{i=1}^S P^{-}_{s,a}(i)){\bf z}.& \\
		\end{array}
		\]
		\end{itemize}
		\vspace{0.08in}
	\STATE \underline{\it Compute policy $\tilde{\pi}^k$:} as the optimal gain policy for extended MDP $\extM{k}$ constructed using sample set $\{Q_{s,a}^{j,k}, j=1,\ldots, \ns, s\in \st, a\in \ac\}$. 
 	\vspace{0.08in}
	\STATE \underline{\it Execute policy $\tilde{\pi}^k$:} 
		\FORALL{time steps $t = \tau_k, \tau_k+1, \ldots, $ until \texttt{break epoch}}
		\STATE Play action $a_t = \tilde{\pi}_k(s_t)$.
			\STATE Observe the transition to the next state $s_{t+1}$.
			\STATE Set $N^{t+1}_{s,a}(i), M^{t+1}_{s,a}(i)$ for all $a\in \ac, s,i\in \st$ as defined (refer to Equation \eqref{eq:boostedM}).
			\STATE If $N^{t+1}_{s_t,a_t} \geq 2N^{\tau_{k}}_{s_t,a_t}$, then set $\tau_{k+1}= t+1$ and \texttt{break epoch}.
		\ENDFOR
\ENDFOR
  	\end{algorithmic}
\end{algorithm}

\paragraph{Extended MDP:} The policy $\tilde{\pi}_k$ used in epoch $k$ is computed as the optimal policy of an {\it extended MDP} $\extM{k}$ defined by the sampled transition probability vectors. The construction of this extended MDP is derived from a similar construction in \cite{jaksch2010near}. Given sampled vectors $\{Q^{j,k}_{s,a}, j=1, \ldots, \ns, s\in \st,a \in \ac\}$, we define an extended MDP $\extM{k}$ by extending the original action space as follows: for every $s,a$, create $\ns$ actions for every action $a \in A$, denote by $a^j$ the action corresponding to action $a$ and sample $j$; then, in MDP $\extM{k}$, on taking action $a^j$ in state $s$, reward is $r_{s,a}$ but the transition to next state follows the transition probability vector $Q_{s,a}^{j,k}$. 

Note that the algorithm uses the optimal policy $\tilde{\pi}_k$ of extended MDP $\extM{k}$ to take actions in the action space $\ac$ which is technically different from the action space of MDP $\extM{k}$, where the policy $\tilde{\pi}_k$ is defined. We slightly abuse the notation to say that the algorithm takes action $a_t=\tilde{\pi}(s_t)$ to mean that the algorithm takes action $a_t=a \in \ac$ when $\tilde{\pi}_k(s_t)=a^j$ for some $j$.

Our algorithm is summarized as Algorithm \ref{algo:main}.

\comment{
\begin{remark}[Policy space of original vs. extended MDP]
\label{rem:1}
Note that technically, the extended MDP $\extM{k}$ and the true MDP $\mdp$ have different action space, and therefore different policy space. In the following regret analysis, we slightly abuse the notation to define the gain of policy $\tilde \pi_k$ on the true MDP $\mdp$, and denote it as  $\lambda(\tilde{\pi}_k)$. This is simply defined as the gain of a policy obtained by mapping every action $a^j = \tilde \pi_k(s)$ to the corresponding action $a \in A$ in the action space of MDP $\mdp$. Similarly, `executing  $\tilde \pi_k$ on MDP $\mdp$' will mean executing the policy obtained by the above mapping.
\end{remark}
}



\section{Regret Bounds}
\label{sec:regret}
We prove that with high probability, the regret of Algorithm \ref{algo:main}  is bounded by $\tilde{O}\left(DS\sqrt{AT}\right)$.

\begin{theorem}
\label{th:main}
For any communicating MDP ${\cal M}$ with $S$ states, $A$ actions, and diameter $D$, with probability $1-\rho$,
for $T\ge \Omega\left(SA\log^4(SAT/\rho)\right)$,
the regret of Algorithm \ref{algo:main} is bounded as:
\[ \reg(T, \mdp) \le O\left( DS\sqrt{AT}\log^3(SAT/\rho) + DS^3A^2\log^3(SAT/\rho)\right).\]
For $T \geq \Omega\left(S^4A^3\right)$, this gives a regret bound of:
\[\reg(T, \mdp) \le O\left( DS\sqrt{AT}\log^3(SAT/\rho) \right) .\]
Here $O\left(\cdot \right)$ notation hides only the absolute constants.
\end{theorem}
\begin{myproof}
Here we provide a proof of the above theorem. The proofs of all the referenced lemmas are provided in the subsequent sections.

As defined in Section \ref{sec:prelims}, 
$$  \reg(T, \mdp) = T\lambda^* - \sum_{t=1}^T r_{s_t,a_t}, $$ where $\lambda^*$ is the optimal gain of MDP $\mdp$, $a_t$ is the action taken and $s_t$ is the state reached by the algorithm at time $t$. 
Algorithm \ref{algo:main} proceeds in epochs $k=1,2,\ldots, K$, where $K\le SA\log(T)$. To bound its regret in time $T$, we separately analyze the regret in each epoch $k$, namely,
\begin{equation}
\label{eq:regk}
 \reg_k := (\tau_{k+1}-\tau_k) \lambda^* -  \sum_{t=\tau_{k}}^{\tau_{k+1}-1} r_{s_t, a_t},
\end{equation}
where $\tau_k$ was defined as the starting time step of epoch $k$.
The proof of epoch regret bound has two main components:
\begin{itemize}[leftmargin=0.2in]
\item[(a)] {\bf Optimism:} Recall that in every epoch $k$, the algorithm runs an optimal gain policy for the extended MDP $\extM{k}$. We show that the extended MDP $\extM{k}$ is optimistic, i.e., its optimal gain is (close to) $\lambda^*$ or higher. Specifically, let $\tilde \lambda_k$ be the optimal gain of the extended MDP $\extM{k}$. In Lemma \ref{lem:optimism} (Section \ref{subsec:optimism}), which forms one of the main novel technical components of our proof, we show that with probability $1-\rho$,
\[\textstyle \tilde{\lambda}_k \ge \lambda^* - \optimismBound.\]
Substituting this upper bound on $\lambda^*$ in the expression for $\reg_k$, we obtain the following bound 
on the epoch regret, with probability $1-\rho$:
\begin{equation}
\label{eq:tmp1}
\reg_k \le \textstyle \sum_{t=\tau_{k}}^{\tau_{k+1}-1}  \left(\tilde{\lambda}_k - r_{s_t, a_t}+ \optimismBound\right).
\end{equation}
\item[(b)] {\bf Deviation bounds:}
Next, note that the first term in the above exression is $\tilde{\lambda}_k$, which is the gain of the algorithm's chosen policy $\tilde \pi_k$ on MDP $\extM{k}$ (with transition probability vectors $\tilde P_{s,a}:= Q^{j,k}_{s,a}$ for some $j$); and the second term is the reward obtained on executing the same policy $\tilde \pi_k$, but on the true MDP $\mdp$ (with transition probability vectors $P_{s,a}$). 
We bound the difference  $\sum_t (\tilde{\lambda}_k - r_{s_t,a_t})$ by bounding the deviation $(\tilde{P}_{s,a}-P_{s,a})$ for every $s,a$. 

We use the relation between the gain, the bias vector, and the reward vector of an optimal policy for a communicating MDP, discussed in Section \ref{sec:prelims}. In order to use this relation for MDP $\extM{k}$, we show that this MDP is communicating, by comparing it to the true MDP $\mdp$, which is assumed to be communicating with diameter $D$. Specifically, in Lemma \ref{lem:diameter} (Section \ref{subsec:diameter}), we prove a bound of $2D$ on the diameter of MDP $\extM{k}$ for all $k$ with probability $1-\rho$, when $T\ge \Omega\left(SA\log^4(SAT/\rho)\right)$.

Therefore, we can use the relation between the gain $\tilde \lambda_k$, the bias vector $\tilde h$, and reward vector of optimal policy $\tilde \pi_k$ for communicating MDP $\extM{k}$ given by Lemma \ref{lem:commBias}, part (b). According to this relation, for any state $s$, and action $a=\tilde \pi_k(s)$, $\tilde \lambda_k = r_{s,a}+ \tilde P_{s,a}^T\tilde h -  \tilde h_s$. Since $a_t = \tilde \pi_k(s_t)$, using this relation:
\begin{eqnarray}
\label{eq:tmp2}
\textstyle \sum_{t=\tau_{k}}^{\tau_{k+1}-1}  \left(\tilde{\lambda}_k - r_{s_t, a_t}\right)  
& = & \textstyle \sum_{t=\tau_{k}}^{\tau_{k+1}-1} (\tilde{P}_{s_t, a_t} - \e_{s_t})^T \tilde{h}\nonumber\\
& = & \textstyle \sum_{t=\tau_{k}}^{\tau_{k+1}-1} (\tilde{P}_{s_t, a_t} - P_{s_t, a_t} +  P_{s_t, a_t} - \e_{s_t})^T \tilde{h}.
\end{eqnarray}
In Lemma \ref{lem:deviation} (Section \ref{subsec:deviation}), we prove that with probability $1-\rho$, for all $s,a$, and all $h\in [0,2D]^S$
\begin{equation}
\label{eq:tmpdev}
(\tilde P_{s,a}- P_{s,a} )^Th \le \deviationBound{D}.
\end{equation}
We can use this result to bound first term in \eqref{eq:tmp2}, by observing that $\tilde{h}\in \mathbb{R}^S$, the bias vector of MDP $\extM{k}$ satisfies (refer to Lemma \ref{lem:commBias}),
\begin{center}
$\max_s \tilde{h}_s - \min_s \tilde{h}_s \le D(\extM{k}) \le 2D$,
\end{center}
where the last inequality holds with probability $1-\rho$, as shown in Lemma \ref{lem:diameter} (Section \ref{subsec:diameter}) which proves a bound of $2D$ on the diameter of MDP $\extM{k}$.

For the second term of (\ref{eq:tmp2}) we observe that $\Ex[\e_{s_{t+1}}^T\tilde{h} |\tilde{\pi}_k, \tilde{h}, s_t] = P_{s_t, a_t}^T\tilde h$ and use Azuma-Hoeffding inequality to obtain with probability $1-\rho$, 
\begin{equation}
\label{eq:tmpsecond}
\textstyle\sum_{t=\tau_k}^{\tau_{k+1}-1} (P_{s_t, a_t} - \e_{s_t})^T \tilde{h} \le O(D\sqrt{(\tau_{k+1}-\tau_k) \log(1/\rho)}).
\end{equation}
\end{itemize}

Substituting the bounds from equations \eqref{eq:tmpdev} and \eqref{eq:tmpsecond} into  \eqref{eq:tmp2}, and combining it with \eqref{eq:tmp1} we obtain the following bound on $\reg_k$ with probability $1-3\rho$: 
\begin{eqnarray}
\label{eq:regEpoch}
\textstyle \reg_k & = &  \textstyle O\left( \left(D (\tau_{k+1}-\tau_k)\sqrt{\frac{SA}{T}}+ D \sum_{s,a} (N^{\tau_{k+1}}_{s,a} - N^{\tau_k}_{s,a})(\frac{\sqrt{S}}{\sqrt{N^{\tau_k}_{s,a}}} + \frac{S}{N^{\tau_k}_{s,a}})  \right) \log^2(\frac{SAT}{\rho}) \right)\nonumber\\
& & \textstyle + O\left(D \sqrt{(\tau_{k+1}-\tau_k)\log(\frac{1}{\rho})}\right).
\end{eqnarray}

We can finish the proof by observing that (by definition of an epoch) the number of visits of any state-action pair can at most double in an epoch, 
\[N^{\tau_{k+1}}_{s,a} - N^{\tau_k}_{s,a} \le N^{\tau_k}_{s,a}. \]
Substituting this observation along with $\sum_k \tau_{k+1}-\tau_k\le T$ and $\sum_{k=1}^K \sqrt{\tau_{k+1}-\tau_k} \le \sqrt{KT}$ in \eqref{eq:regEpoch}, we can bound the total regret $\reg(T) = \sum_{k=1}^K \reg_k $ as the following, with probability $1-3K\rho$,
\begin{eqnarray*}
\sum_{k=1}^K \reg_k & \le & \textstyle O\left(\sum\limits_{k=1}^K \left(D (\tau_{k+1}-\tau_k)\sqrt{\frac{SA}{T}}  + D\sqrt{S}\sum\limits_{s,a} \sqrt{N^{\tau_k}_{s,a}} + DS^2A\right) \log^2(\frac{SAT}{\rho}) + D \sqrt{(\tau_{k+1}-\tau_k)\log(1/\rho)}\right) \\
\hspace{-0.2in} & \le &  \textstyle  O\left(\left( D\sqrt{SAT} +D\sqrt{S}\log(K) (\sum_{s,a} \sqrt{N^{\tau_K}_{s,a}}) + K DS^2A\right) \log^2(\frac{SAT}{\rho}) + D\sqrt{KT\log(\frac{1}{\rho})}\right)  
\end{eqnarray*}
where we used $N^{\tau_{k+1}}_{s,a} \le 2N^{\tau_k}_{s,a}$ and $\sum_k (\tau_{k+1}-\tau_k) = T$. 
Now, we use that $K\le SA\log(T)$, and since $\sum_{s,a} N^{\tau_K}_{s,a} \le T$, by simple worst scenario analysis, $\sum_{s,a} \sqrt{N^{\tau_K}_{s,a}} \le \sqrt{SAT}$, and we obtain,
\[\reg(T, \mdp) \le O\left( DS\sqrt{AT}\log^3(\frac{SAT}{\rho})  +  DS^3A^2\log^3(\frac{SAT}{\rho})  \right) .\]
For $T \geq \Omega(S^4A^3)$, this gives a regret bound of:
\[\reg(T, \mdp) \le O\left( DS\sqrt{AT}\log^3(SAT/\rho) \right) .\]

\end{myproof}
\section{Proofs of the lemmas used in Section \ref{sec:regret}}
\subsection{Notation} 
\label{subsec:notation}
We use the following notations repeatedly in this section. Fix an epoch $k$, state $s$, action $a$, and sample $j$. The specific values of $k,j, s,a$ will be clear from the context in a given proof. We denote $n=N_{s,a}^{\tau_k}$, $n_i=N_{s,a}^{\tau_k}(i)$ for all $i\in {\cal S}$, and $m=\frac{n+\omega S}{\boost}$. Here $\omega=\omegaValue$ and $\boost=\boostValue$, as defined in the algorithm. Also, we denote $p_i=P_{s,a}(i)$, $\hat{p}_i := \frac{n_i}{n}$, $\bar{p}_i = \frac{n_i+\omega}{n+\omega S}$, and $\tilde{p}_i=Q^{j,k}_{s,a}(i)$, for $i\in \st$.

When $n>\smallN$, the algorithm uses Dirichlet posterior sampling to generate sample vectors $Q^{j,k}_{s,a}$, so that in this case $\tilde{p}$ is a random vector distributed as $\Dir(m\bar p_1, \ldots, m \bar p_S)$. 

When $n<\smallN$, simple optimistic sampling is used, so that $\tilde{p}$ was generated as follows: denote 
$$\textstyle p^- = [\hat p - (\deltaValue) \e]^+,$$ 
and let ${\bf z}$ be a random vector picked uniformly at random from $\{\e_1, \ldots, \e_S\}$; then
		\[
		\begin{array}{ll}
		\tilde{p} = p^- + (1-\sum_j p^-_j){\bf z}.& \\
		\end{array}
		\]
We define 
$$\delta_i := \hat{p}_i - p_i, \ \Delta_i := \hat{p}_i - p^{-}_i  = \textstyle \DeltaValue{\hat p_i}{n}.$$
 Note that by multiplicative Chernoff bounds (Fact \ref{mcher}), with probability $1-\frac{1}{2S}$, $|\delta_i| \leq \deltaValue$. Therefore, 
$$\textstyle \sum_i\delta_i =0, \sum _i \Delta_i = \sum_i (\hat{p}_i - p^{-}_i) = 1- \sum_i p^{-}_i, \text{ and } \Delta_i \geq \delta_i \text{ (with probability $1-\frac{1}{2S}$)} $$ 
Above notations and observations will be used repeatedly in the proofs in this section.

\subsection{Optimism}
\label{subsec:optimism}

The goal of this section is to show optimism, i.e.:
\[\textstyle \tilde{\lambda}_k \ge \lambda^* - \tilde O(D\sqrt{\frac{SA}{T}}).\]
First, in Lemma \ref{lem:optimism1} below we prove for any fixed vector, for every $s,a$, there exists a sample transition probability vector whose projection on that vector is optimistic, with high probability. To prove this, we prove the following fundamental new result on the anti-concentration of any fixed projection of a Dirichlet random vector 

\begin{restatable}{proposition}{antiDir}
\label{prop:antiMain}
Fix any vector $h\in \mathbb{R}^S$ such that $|h_i-h_{i'}| \le D$ for any $i,i'$. Consider a random vector $\tilde p$ generated from Dirichlet distribution with parameters $(m\bar p_1, \ldots, m \bar p_S)$, where $m\bar p_i \ge 6$. Then, for any $\rho\in (0,1)$, with probability $\Omega(1/S) - S\rho$,
\[(\tilde{p}-\bar p)^Th \ge  \frac{1}{8}  \sqrt{\sum_{i<S} \frac{\bar \gamma_i \bar c_i^2}{m}} -\frac{2SD\log(2/\rho)}{m}\]
where 
$\bar \gamma_i := \frac{\bar p_i (\bar p_{i+1} +\ldots + \bar p_S)}{(\bar p_i + \ldots + \bar p_S)}, \bar c_i = (h_i - \bar H_{i+1}), \bar H_{i+1} = \frac{1}{\sum_{j=i+1}^S \bar{p}_j}\sum_{j=i+1}^S h_j \bar{p}_j.$
\end{restatable}
The proof is provided in the appendix. 
In the appendix, we also prove the following strong concentration bound for the empirical probability vectors. 
\begin{restatable}{proposition}{concEmp}
\label{prop:ConcentrationFixedh}
Fix any vector $h\in \mathbb{R}^S$ such that $|h_i-h_{i'}| \le D$ for any $i,i'$.
Let $\hat{p}\in \Delta^S$ be the average $n$ independent multinoulli trials with parameter $p \in \Delta^S$, where $n\geq 96$. Then,  for any $\rho\in (0,1)$, with probability $1-\rho$, 
\[|(\hat{p}-p)^Th| \le 2\sqrt{\log(n/\rho) \sum_{i<S} \frac{\gamma_i c_i^2}{n} } + 3D\frac{\log(2/\rho)}{n},\]
where $\gamma_i=\frac{p_i(p_{i+1} + \cdots + p_S)}{(p_i+ \cdots +p_S)}$, $c_i= h_i-H_{i+1}$, $H_{i+1}=\frac{1}{\sum_{j=i+1}^S {p}_j}\sum_{j=i+1}^S h_j {p}_j$. 
\end{restatable}
Together the above two results allow us to prove the following lemma.

\begin{lemma}
\label{lem:optimism1}
Fix any vector $h\in \mathbb{R}^S$ such that $|h_i-h_{i'}| \le D$ for any $i,i'$, and any epoch $k$. Then, for every $s,a$, with probability $1-\frac{\rho}{SA}$ there exists at least one $j$ such that 
\[\textstyle (Q^{j,k}_{s,a})^Th \ge P_{s,a}^Th - \optimismBound.\]
\end{lemma}
\begin{myproof}
Fix an epoch $k$, state and action pair $s,a$, sample $j$. We use the notation defined in Section \ref{subsec:notation}, so that $\tilde p=Q^{j,k}_{s,a}$, $p=P_{s,a}$, etc.
We show that with probability $\Omega(1/S-S\rho)$, $\tilde p^Th \ge p^Th - \optimismBound$. Since we have $\psi  = CS\log(SA/\rho)$ independent samples for every $s,a$, for some large enough constant $C \geq 7^{\frac{32}{\phi}}$ (where $\phi = (\frac{(1-\Phi)(\frac{1}{2})}{2})^4$ and $\Phi$ is the normal cumulative distribution function), this result will give us the lemma statement. 
To prove this result, we consider two cases:\newline\\
\noindent {\bf Case 1: $n>\smallN$.} 
When $n>\smallN$, Dirichlet posterior sampling is used so that $\tilde{p}$ is a random vector distributed as $\Dir(m\bar p_1, \ldots, m \bar p_S)$, where $m=\frac{n+\omega S}{\boost}$, $\bar{p}_i = \frac{n_i+\omega}{n+\omega S}$.  We show that with probability $\Omega(1/S) - 8S\rho$, the random quantity $\tilde{p}^Th$ exceeds its mean $\bar{p}^Th$ enough to overcome the possible deviation of empirical estimate $\bar{p}^Th$  from the true value $p^Th$. This involves combining the Dirichlet anti-concentration bound from Proposition \ref{prop:antiMain} to lower bound $\tilde{p}^Th$ (note that $m\bar p_i \geq \frac{\omega}{\boost} = 6$), and the concentration bound on empirical estimates $\hat{p}$ from Proposition \ref{prop:ConcentrationFixedh} to lower bound $\bar{p}^Th$ (note that $n \geq \eta \geq 96$), which by definition is close to $\hat{p}^Th$. 

In Proposition \ref{prop:antiMain2} (in the appendix), we prove a slight  modification of Proposition \ref{prop:antiMain} to show that with probability $\Omega(1/S) -7S\rho$,
\begin{equation}
\label{eq:antiMain2}
 (\tilde{p}-\bar p)^Th  \ge 0.184 \sqrt{\boost \sum_i \frac{\gamma_i c_i^2}{n}} - O(\frac{DS\omega \log(n/\rho)}{n}).
\end{equation}
Above bound replaces $\bar \gamma_i, \bar c_i, m$ in the lower bound provided by Proposition \ref{prop:antiMain} by $\gamma_i, c_i, n$ instead. With this modification, the lower bound becomes directly comparable to the bound on the deviation $|(\hat p-p)^Th|$ provided by Proposition \ref{prop:ConcentrationFixedh}. 
To combine this lower bound with the deviation bound, we calculate 
\[|(\bar{p} - \hat{p})^Th| = |\sum_{i=1}^S h_i(\frac{n\hat{p}_i + \omega}{n + \omega S} - \frac{n\hat{p}_i}{n})| = |\sum_i h_i(\frac{\omega(1-S\hat{p}_i)}{n + \omega S})| \leq \frac{\omega DS}{n + \omega S} \leq \frac{\omega DS}{n}.\]
Then, using the above bound along with \eqref{eq:antiMain2}, and the result from Proposition \ref{prop:ConcentrationFixedh}, 
we have that with probability $\Omega(1/S)-8S\rho$,
\begin{eqnarray*}
(\tilde p - p)^Th &=& (\tilde{p}-\bar p)^Th + (\bar{p} - \hat{p})^Th+(\hat{p}-p)^Th \\
& \geq & (\tilde{p}-\bar p)^Th - |(\bar{p} - \hat{p})^Th| - |(\hat{p}-p)^Th| \\
& \geq & 0.184 \sqrt{\boost \sum_i \frac{\gamma_i c_i^2}{n}} - 2\sqrt{\log(n/\rho) \sum_{i<S} \frac{\gamma_i c_i^2}{n}} - O(\frac{DS\omega\log(n/\rho)}{n})  \\
& \geq & -O(\omega\frac{DS\log(n/\rho)}{n}) \\
& \ge & -\optimismBound
\end{eqnarray*}
where the second last inequality follows from the observation that with $\boost = \boostValue$, the first term is bigger than the second. Then, substituting $\omega = \omegaValue$ and $n\ge \smallN=\smallNValue$, we obtain the last inequality.\newline\\

\noindent {\bf Case 2: $n<\smallN$.} When $n<\smallN$, simple optimistic sampling is used. Using notation and observations made in Section \ref{subsec:notation}, in this case $\tilde{p} = p^- + (1-\sum_j p^-_j){\bf z}$. 
With probability $1/S$, $z = \e_i$ for an $i$ such that $h_i = \|h\|_\infty$, and (by union bound over all $i$) with probability $1-S\frac{1}{2S} = \frac{1}{2}$, $|\delta_i| \leq \deltaValue$ for every $i$. So with probability at least $1/2S$:
\begin{eqnarray*}
\sum_i \tilde{p}_ih_i &=& \sum_i p^-_ih_i + \|h\|_\infty(1-\sum_j p^-_j)  = \sum_i p^-_i h_i + \|h\|_\infty\sum_j \Delta_j \\
&=& \sum_i (\hat p_i -\Delta_i) h_i + \|h\|_\infty\Delta_i = \sum_i \hat p_ih_i + (\|h\|_\infty-h_i)\Delta_i \\
&\geq& \sum_i \hat p_ih_i + (\|h\|_\infty-h_i)\delta_i = \sum_i (\hat p_i-\delta_i)h_i +\|h\|_\infty \delta_i \\
&=& \sum_i p_ih_i + \|h\|_\infty\sum_i \delta_i = \sum_i p_ih_i. 
\end{eqnarray*}
\end{myproof}

Finally, we use the above lemma to prove the main optimism lemma (Lemma \ref{lem:optimism}).

\begin{lemma}[Optimism]
\label{lem:optimism} 
With probability $1-\rho$, for every epoch $k$, the optimal gain $\tilde \lambda_k$ of the extended MDP $\extM{k}$  satisfies:
\[  \textstyle \tilde \lambda_k \ge \lambda^* -  \optimismBound,\]
 where $\lambda^*$ the optimal gain of MDP $\mdp$ and $D$ is the diameter.
\end{lemma}
\begin{myproof}
Let $h^*$ be the bias vector for an optimal policy $\pi^*$ of MDP $\mdp$ (refer to Lemma \ref{lem:commBias} in the preliminaries section). Since $h^*$ is a fixed (though unknown) vector with $|h_i-h_j|\le D$, we can apply Lemma \ref{lem:optimism1} to obtain that with probability $1-\rho$, for all $s,a$, there exists a sample vector $Q^{j,k}_{s,a}$ for some $j\in \{1,\ldots, \ns\}$ such that
\[(Q^{j,k}_{s,a})^Th^* \ge P_{s,a}^Th^* - \delta\]
where $\delta=\optimismBound$.
Now, consider the policy $\pi$ for MDP $\extM{k}$ which for any $s$, takes action $a^j$, where $a=\pi^*(s)$, and $j$ is a sample satisfying above inequality. Note that $\pi$ is essentially $\pi^*$ but with a different transition probability model. Let $Q_{\pi}$ be the transition matrix for this policy, whose rows are formed by the vectors $Q^{j,k}_{s,\pi^*(s)}$, and $P_{\pi^*}$ be the transition matrix whose rows are formed by the vectors $P_{s,\pi^*(s)}$. Above implies
\[Q_{\pi} h^* \ge P_{\pi^*} h^* - \delta \e.\]

Let ${Q}^*_\pi$ denote the limiting matrix for Markov chain with transition matrix $Q_\pi$. 
Observe that $Q_\pi$ is aperiodic, recurrent and irreducible : it is aperiodic and irreducible because each entry of
$Q_\pi$ being a sample from Dirichlet distribution is non-zero, and it is positive recurrent because in a finite irreducible Markov chain, all states are positive and recurrent. This implies that $Q^*_\pi$ is of the form $\e{\bf q^*}^T$ where ${\bf q^*}$ is the stationary distribution of $Q_\pi$, and $\e$ is the vector of all 1s (refer to (A.6) in \cite{puterman2014markov}). Also, $Q_\pi^* Q_\pi = Q_\pi$, and $Q_\pi^*\e = \e$.

Therefore, the gain of policy $\pi$
\[\tilde\lambda(\pi) \e= (r_{\pi}^T {\bf q}^*) \e = Q^*_\pi r_\pi\]
where $r_{\pi}$ is the $S$ dimensional vector $[r_{s,{\pi}(s)}]_{s=1,\ldots, S}$. Now, 
\[
\begin{array}{rcll}
\tilde{\lambda}(\pi) \e - \lambda^* \e & = & Q_\pi^*r_{\pi} - \lambda^* \e &\\
& = & Q_\pi^*r_{\pi} - \lambda^* (Q_\pi^* \e) & \ldots(\text{using } Q_\pi^*\e=\e)\\
& = & Q_\pi^*(r_\pi - \lambda^* \e) &\\
& = & Q_\pi^*(I-P_{\pi^*})h^* & \ldots(\text{using } \eqref{eq:optLP})\\
& = &  Q_\pi^*(Q_\pi-P_{\pi^*})h^* & \ldots(\text{using } Q_\pi^* Q_\pi=Q_\pi^*)\\
& \ge & -\delta \e & \ldots(\text{using } (Q_\pi-P_{\pi^*})h^*\ge -\delta \e, Q_\pi^*\e=\e).
\end{array}
\]
Then, by optimality, 
\[\tilde{\lambda}_k \ge \tilde\lambda(\pi) \ge \lambda^*-\delta.\]
\end{myproof}


\subsection{Deviation Bounds}
\label{subsec:deviation}


\begin{lemma}
\label{lem:deviation} 
In every epoch $k$, with probability $1-\rho$,  for all samples $j$, all $s,a$, and all vectors $h\in [0, H]^S$,
\[
 \textstyle (Q^{j,k}_{s,a} - P_{s,a})^T h \le 
\deviationBound{H}.
\]
\end{lemma}
\begin{myproof}
Fix an $s,a, j,k$. Let $\tilde{p}=Q_{s,a}^{j,k}$. Denote $n=N^{\tau_k}_{s,a}$ and $m=\frac{n+\omega S}{\kappa}$, and $n_i=N^{\tau_k}_{s,a}(i)$, $\bar{p}_i:=\frac{n_i+\omega}{n+\omega S}$ and $\hat{p}_i:=\frac{n_i}{n}$ for  $i=1,\ldots, S$. Recall that $\smallN =\smallNValue$ and $\omega =\omegaValue$. It suffices to prove the lemma statement for $H=1$. We consider two cases. 

\noindent {\bf Case 1:} When $n >\smallN$, posterior sampling is used. Therefore, $\tilde p$ is an $S$-dimensional Dirichlet random vector with parameters $m\bar{p}_i, i=1,\ldots, S$. 
Let $X$ be distributed as Gaussian with mean $\mu=\bar p^Th$ and variance $\sigma^2=\frac{1}{m}$. Now, for any \emph{fixed} $h\in [0,1]^S$, by Gaussian-Dirichlet stochastic optimism (see Lemma \ref{gvd} in the appendix) 
$$ X \succeq_{so}  \tilde p^Th.$$
Then by Gaussian concentration (Corollary \ref{cor:soconc1}), for any $\rho'\in (0,1)$, and fixed $h\in [0,1]^S$, with probability at least $1-\rho'$, 
\begin{equation}
\label{eq:dev1}
 |\tilde p^Th-\bar p^Th|\le \sqrt{\frac{2}{m}\log(\frac{2}{\rho'})} \le \sqrt{\frac{140}{n}\log(\frac{n}{\rho})\log(\frac{2}{\rho'})} .
\end{equation}
where in the last inequality, we substituted $m\ge \frac{n}{\boost}$, with $\boost=\boostValue$.
In Proposition \ref{prop:ConcentrationFixedh}, we proved a strong bound on $|\hat p^Th-p^Th|$ for any fixed $h\in [0,1]^S$, which was used for proving optimism. A corollary of that concentration bound (by using observations that $\gamma_i=\frac{p_i(p_{i+1}+\cdots+p_S)}{(p_i+\cdots+p_S)}\le p_i$, and $|c_i|\le 1$ when $h\in [0,1]^S$) is that for any $\rho'\in (0,1)$, and fixed $h\in[0,1]^S$ with probability $1-\rho'$,
\begin{equation}
\label{eq:dev2}
|(\hat{p}-p)^Th| \le 2\sqrt{\frac{\log(n/\rho') }{n} } + \frac{3\log(2/\rho')}{n}.
\end{equation}
Also, for {\it all} $h\in [0,1]^S$
\begin{equation}
\label{eq:dev3}
|\hat p^Th - \bar p^Th| \le \sum_i |\frac{n_i+\omega}{n+\omega S} -  \frac{n_i}{n}| \le \frac{\omega S n_i}{(n+\omega S)n} \le \frac{\omega S}{n}.
\end{equation}
where $\omega=\omegaValue$.
Combine the bounds from equation \eqref{eq:dev1}, \eqref{eq:dev2}, and \eqref{eq:dev3}, and take union bound over all fixed $h$ on an $\epsilon$-grid over $[0,1]^S$, with $\epsilon=1/n$. Then, substituting $\rho'$ by $\rho'/n^S$, we have that with probability $1- \rho'$ 
\begin{equation}
\label{eq:dev4}
|\tilde p^Th-p^Th|\le 14\sqrt{\frac{S\log(n/\rho')\log(n/\rho)}{n}}	+  5\frac{S\log(n/\rho')}{n} + \frac{\omega S}{n}.
\end{equation}

\noindent {\bf Case 2:} When $n \leq \smallN$, simple optimistic sampling is used. 
 Using notation in Section \ref{subsec:notation}, in this case $\tilde p = p^{-} + (1-\sum_{i=1}^S p^{-}_i){\bf z}$, 
where ${\bf z}$ be a random vector picked uniformly at random from $\{\e_1, \ldots, \e_S\}$. 
By multiplicative Chernoff bounds (Fact \ref{mcher}) to bound $(\hat{p}-p)$, we have for any $\rho''\in (0,1)$, with probability $1-\rho''$, for all $h \in [0,1]^S$
\begin{eqnarray}
\label{eq:dev5}
(\tilde p^T h - p^Th)  & \le & (\hat p^T h - p^T h) + \sum_i\sqrt{\frac{3\hat{p}_i\log(4S)}{n}} + \sum_i\frac{3\log(4S)}{n} \nonumber\\
&  \leq & ||\hat p - p||_1||h||_\infty + \sqrt{S\frac{3\log(4S)}{n}}+ \frac{3S\log(4S)}{n}\nonumber\\
&  \leq & \sqrt{\frac{2S\log(1/\rho'')}{n}} + \sqrt{S\frac{3\log(4S)}{n}}+ \frac{3S\log(4S)}{n}\nonumber\\
& = & 3\sqrt{\frac{S\log(S/\rho'')}{n}} + 3\frac{S\log(S)}{n}.
\end{eqnarray}
Equations \eqref{eq:dev4} and \eqref{eq:dev5} provide a bound on $|(Q_{s,a}^{j,k})^T\tilde h-P_{s,a}^T\tilde h|$ for any given $s,a,j,k$. Substituting $\rho'=\rho''=\rho/(SA\ns)$,  and taking a union bound over all possible values of $s,a,j$ we get the lemma statement.  (Here $\ns=\Theta(S\log(SA/\rho))$.)
\end{myproof}

\subsection{Diameter of the extended MDP}
\label{subsec:diameter}

Algorithm \ref{algo:main} computes policy $\tilde{\pi}_k$ in epoch $k$ as an optimal gain policy of the extended MDP $\extM{k}$. Our goal in this section is to prove that the diameter of $\extM{k}$ is within a constant factor of the diameter $\mdp$. We begin by deriving a bound on the diameter of $\extM{k}$ under certain conditions, and then prove that those conditions hold with high probability.

\begin{lemma}
\label{lem:diameter2}
Let $E^s \in \mathbb{R}_+^S$ be the vector of the minimum expected times to reach $s$ from $s'\in\st$ in true MDP $\mdp$, i.e., $E^s_{s'} = \min_\pi T^\pi_{s'\rightarrow s}$. Note that $E^s_s=0$. For any episode $k$, if for every $s,a$ there exists some $j$ such that 
 \begin{equation}
\label{eq:1}
Q_{s,a}^{j,k} \cdot E^s \le P_{s,a} \cdot E^s + \delta,
\end{equation}
for some $\delta\in [0,1)$, then the diameter of extended MDP $\extM{k}$ is at most $\frac{D}{1-\delta}$, where $D$ is the diameter of MDP $\mdp$. 
\end{lemma}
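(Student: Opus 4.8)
The plan is to fix a target state $s$, exhibit one stationary deterministic policy on $\extM{k}$ whose expected hitting time of $s$ from every starting state is at most $D/(1-\delta)$, and then invoke the definition of the diameter. First I would recall the standard stochastic-shortest-path fact that there is a single stationary deterministic policy $\pi^s$ on $\mdp$ that is simultaneously optimal for reaching $s$ from every state; for this policy, first-step analysis on its induced Markov chain gives
\[
E^s_s = 0, \qquad E^s_{s'} = 1 + P_{s',\pi^s(s')}\cdot E^s \quad (s'\neq s),
\]
and, since $D$ is the diameter of $\mdp$, $0\le E^s_{s'}\le D$ for all $s'$.

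Next I would build the comparison policy $\tilde\pi$ on $\extM{k}$: in state $s'\neq s$ it plays $a(s')^{\,j}$, where $a(s')=\pi^s(s')$ and $j$ is an index guaranteed by the hypothesis \eqref{eq:1} applied with the fixed vector $h=E^s$, so that the transition vector $\tilde P_{s'}$ of $\tilde\pi$ out of $s'$ satisfies $\tilde P_{s'}\cdot E^s \le P_{s',a(s')}\cdot E^s + \delta = E^s_{s'}-1+\delta$; in state $s$ it plays anything. Set $u:=\tfrac{1}{1-\delta}E^s$, so $u\ge 0$, $u_s=0$, and $u_{s'}\le D/(1-\delta)$. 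A one-line computation shows $u$ is a super-solution of the hitting-time equation for $\tilde\pi$: for $s'\neq s$,
\[
1 + \tilde P_{s'}\cdot u \;=\; 1 + \tfrac{1}{1-\delta}\,\tilde P_{s'}\cdot E^s \;\le\; 1 + \tfrac{E^s_{s'}-1+\delta}{1-\delta} \;=\; \tfrac{E^s_{s'}}{1-\delta} \;=\; u_{s'}.
\]

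Finally I would convert this pointwise inequality into a bound on expected hitting times. Let $(X_t)$ be the Markov chain on $\extM{k}$ under $\tilde\pi$ started at an arbitrary $s'$, let $\tau$ be the first hitting time of $s$, and consider $M_t := u_{X_{t\wedge\tau}} + (t\wedge\tau)$. Using $u_s=0$ and the super-solution inequality one checks $\Ex[M_{t+1}\mid\mathcal{F}_t]\le M_t$; since $M_t\ge 0$ and is integrable, $\Ex[t\wedge\tau]\le \Ex[M_t]\le M_0 = u_{s'}$, and letting $t\to\infty$ (monotone convergence) gives $\Ex[\tau]\le u_{s'}\le D/(1-\delta)$. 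Thus under $\tilde\pi$ every state reaches $s$ in expected time at most $D/(1-\delta)$; as $s$ was arbitrary, $D(\extM{k})\le D/(1-\delta)$. The only mildly delicate point is the very first step — the existence of a policy $\pi^s$ optimal from all starting states and the validity of the first-step equation for $E^s$ — which is classical for communicating MDPs / proper stochastic-shortest-path problems; the rest is the elementary super-solution and supermartingale argument.
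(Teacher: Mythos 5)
Your proof is correct, and it reaches the conclusion by a genuinely different route from the paper's. Both arguments build the same comparison policy $\tilde\pi$ on $\extM{k}$ (choose, for each $s'\neq s$, the sample index $j$ certified by the hypothesis applied to $h=E^s$), and both must invoke — the paper implicitly, you explicitly — the classical stochastic-shortest-path fact that a single stationary deterministic policy on $\mdp$ attains the minimal hitting time $E^s_{s'}$ simultaneously from every $s'$, so that the first-step/Bellman equations for that policy match the vector $E^s$ appearing in the hypothesis. Where the two proofs part ways is in how they pass from the pointwise inequality $\tilde P_{s'}\cdot E^s \le E^s_{s'}-(1-\delta)$ to a bound on the hitting times under $\tilde\pi$. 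The paper reformulates it as $(I-Q^\dagger_{\tilde\pi})E^s \ge (1-\delta)\e$, argues that $(I-Q^\dagger_{\tilde\pi})^{-1}$ is the (entrywise nonnegative) fundamental matrix of an absorbing chain, and multiplies through; for this it explicitly invokes strict positivity of the sampled transition probabilities to guarantee the chain is absorbing, a property that is clear for the Dirichlet samples but is in fact not guaranteed for the simple-optimistic samples used when $N^{\tau_k}_{s,a}<\smallN$. Your approach instead treats $u=E^s/(1-\delta)$ as a super-solution of the hitting-time equation and runs a supermartingale/optional-stopping argument on $M_t=u_{X_{t\wedge\tau}}+(t\wedge\tau)$. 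This is a more elementary and slightly more robust route: the monotone-convergence passage $\Ex[t\wedge\tau]\le u_{s'}$ for all $t$ already forces $\tau<\infty$ a.s.\ with $\Ex[\tau]\le u_{s'}$, with no need to assume or verify that the chain is absorbing or that the relevant matrix inverse exists. What the paper's version buys is a compact linear-algebraic presentation; what yours buys is independence from the positivity side condition and a cleaner probabilistic picture.
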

\begin{myproof}
Fix an epoch $k$. For brevity, we omit the superscript $k$ in below. 

Fix any two states $s_1 \ne s_2$. We prove the lemma statement by constructing a policy $\tilde{\pi}$ for $\extM{}$ such that the expected time to reach $s_2$ from $s_1$ is at most $\frac{D}{1-\delta}$. Let $\pi$ be the policy for MDP $\mdp$ for which the expected time to reach $s_2$ from $s_1$ is at most $D$ (since $\mdp$ has diameter $D$, such a policy exists). 
Let $E$ be the $|S|-1$ dimensional vector of expected times to reach $s_2$ from every state, except $s_2$ itself, using $\pi$ ($E$ is the sub-vector formed by removing $s_2^{th}$ coordinate of vector $E^{s_2}$ where $E^s$ was defined in the lemma statement. Note that $E^{s_2}_{s_2}=0$). By first step analysis, $E$ is a solution of:
\[E=\e+P^\dagger_\pi E,\] 
where $P^\dagger_{\pi}$ is defined as the $(S-1)\times (S-1)$ transition matrix for policy $\pi$, with the $(s,s')^{th}$ entry being the transition probability $P_{s,\pi(s)}(s')$  for all $s, s'\ne s_2$. Also, by choice of $\pi$, $E$ satisfies
\[E_{s_1} \leq D.\]
Now, we define $\tilde \pi$ using $\pi$ as follows: For any state $s\ne s_2$, let $a=\pi(s)$ and $j^{th}$ sample satisfies the property \eqref{eq:1} for $s,a, E^{s_2}$, then we define $\tilde{\pi}(s):= a^j$. Let $Q_{\tilde \pi}$ be the transition matrix (dimension $S\times S$) for this policy. 

$Q_{\tilde \pi}$ defines a Markov chain. Next, we modify this Markov chain to construct an absorbing Markov chain with a single absorbing state $s_2$. Let ${Q}^\dagger_{\tilde \pi}$ be the submatrix $(S-1)\times (S-1)$ submatrix of $Q_{\tilde \pi}$ obtained by removing the row and column corresponding to the state $s_2$. Then $Q'$ is defined as (an appropriate reordering of) the following matrix:
\[ Q'_{\tilde \pi} =
\begin{bmatrix}
Q^\dagger_{\tilde \pi} & {\bf q} \\
\bold{0} & 1
\end{bmatrix}
\]
where ${\bf q}$ is an $(S-1)$-length vector such that the rows of $Q'_{\tilde \pi}$ sum to $1$. Since the probabilities in $Q_{\tilde \pi}$ were drawn from Dirichlet distribution, they are all strictly greater than $0$ and less than $1$. Therefore each row-sum of $ Q^\dagger_{\tilde \pi}$ is strictly less than $1$, so that the vector ${\bf q}$ has no zero entries and the Markov chain is indeed an absorbing chain with single absorbing state $s_2$. Then we notice that $(I- Q^\dagger_{\tilde \pi})^{-1}$ is precisely the fundamental matrix of this absorbing Markov chain and hence exists and is non-negative (see \cite{grinstead2012introduction}, Theorem 11.4). Let $\tilde E$ be defined as the $S-1$ dimensional vector of  expected time to reach $s_2$ from $s'\ne s_2$ in MDP $\extM{k}$ using $\tilde \pi$. Then, it is same as the expected time to reach the absorbing state $s_2$ from $s'\ne s_2$ in the Markov chain $Q'_{\tilde \pi}$, given by
\[ \tilde{E} = (I-\bar Q^\dagger_{\tilde \pi})^{-1} \e.\]

Then using \eqref{eq:1} (since $E^{s_2}_{s_2}=0$, the inequality holds for $P^\dagger, Q^\dagger$),
\begin{equation}
\label{eq:2}
E = \bold{1} + P^\dagger_{\pi} {E} \ge \bold{1} + Q^\dagger_{\tilde \pi} E - \delta \bold{1} \ \Rightarrow \ \ (I-Q^\dagger_{\tilde \pi}) E \ge (1-\delta) \bold{1}.
\end{equation}


Multiplying the non-negative matrix $(I-Q^\dagger_{\tilde \pi})^{-1}$  on both sides of this inequality, it follows that 
\[E \geq (1-\delta) (I-Q^\dagger_{\tilde \pi})^{-1} \bold{1} = (1-\delta) \tilde{E} \] 
so that $  \tilde{E}_{s_1} \leq \frac{1}{(1-\delta)} E_{s_1} \leq \frac{D}{1-\delta}$, proving that the expected time to reach $s_2$ from $s_1$ using policy $\tilde{\pi}$ in MDP $\extM{k}$ is at most $\frac{D}{1-\delta}$. 

\end{myproof}

\comment{
We can then make a more general statement about any two MDPs (that differ slightly in transitions) that will be useful in the proof of a later result:
\begin{corollary}
\label{cor:diameter2}
Let MDP $\mdp$ have transitions $P_{s,a}$ and optimal policy $\pi$. Let $E^s$  for $\mdp$ under $\pi$ be defined as in Lemma \ref{lem:diameter2}. Now consider another MDP $\mdp'$ with same states, actions, and rewards, but different transition matrix $P'_{s,a}:$ its values drawn from some distribution so that each entry is in $(0,1)$.

If for every $s,a$, $P'_{s,a} \cdot E \leq P_{s,a} \cdot E + \delta$ for some $\delta \in [0,1)$, then \[D(\mdp', \pi) \leq \frac{D(\mdp,\pi)}{1-\delta}.\]
\end{corollary}
\begin{myproof}
Follows directly from the proof of Lemma \ref{lem:diameter2} noting that $\tilde \pi(s):= a^j$ in that proof is just simply $\pi(s)$ here since there is only one sample. In that proof we showed that under this policy, the expected time to reach a state from any other is at most $\frac{D(\mdp,\pi)}{1-\delta}$.
\end{myproof}
}

Now we can use the above result to prove that the diameter of the extended MDP is bounded by twice the diameter of the original MDP:
\begin{lemma}
\label{lem:diameter}
Assume $T\ge CSA\log^4(SAT/\rho)$ for a large enough constant $C$. Then, 
for any epoch $k$, the diameter of MDP $\extM{k}$ is bounded by $2D$, with probability $1-\rho$.
\end{lemma}
\begin{myproof}
Fix an epoch $k$. For any state $s$, let $E^s$ be as defined in Lemma \ref{lem:diameter2}. We show that with probability $1-\rho$, for all $s, a$, there exists some $j$ with $Q^{j,k}_{s,a}\cdot E^s \le P_{s,a} \cdot s + \delta$, with $\delta \le 1/2$. This will allow us to apply Lemma \ref{lem:diameter2} to bound the diameter of $\extM{k}$.

Given any $s,a,j,k$, we use notations and observations from Section \ref{subsec:notation}, so that $\tilde p = Q^{j,k}_{s,a}, p  = P_{s,a}$ etc. Also, let $h=E^s$. Then, $\min_i h_i=0, \max_i h_i = D$.

First consider all $s,a$ with $n > \smallN$.  Using \eqref{eq:dev4} (in the proof of Lemma \ref{lem:deviation}), we have
$$ \tilde p^T h - p^T h \le  14 D\sqrt{\frac{S\log(n/\rho')\log(n/\rho)}{n}}	+  5 D\frac{S\log(n/\rho')}{n} + D \frac{\omega S}{n}, $$
with probability $1-\rho'$ for any $\rho'\in (0,1)$. 
Substituting $\rho'=\rho/(2SA\ns)$, we get that with probability $1-\frac{\rho}{2}$, for all $s,a,j$ such that $n>\smallN$, $\tilde p^T h - p^T h\le  \delta$, where  $\delta = 14\sqrt{\frac{2\log^2(SAT/\rho)}{\smallN}}	+  5\frac{2S\log(SAT/\rho')}{\smallN} + \frac{\omega S}{\smallN}$.
Then, using  $\smallN=\smallNValue$, and $T\ge CSA\log^4(SAT/\rho)$ (for some constant $C$), we get $\delta\le 1/2$. While no attempt has been made to optimize constants, we note that $C \geq 28^4$ is sufficient. 

For $s,a$ such that $n \le \smallN$, simple optimistic sampling is used. Using notations introduced in Section \ref{subsec:notation}, in this case $\tilde p = p^- + (1-\sum_j p^-_j){\bf z}$, where ${\bf z}$ is a random vector picked uniformly at random from $\{\e_1, \ldots, \e_S\}$.
With probability $1/S$, $z = \e_i$ for $i$ such that $h_i = \min_i h_i =0$. Therefore, with probability at least $1/2S$:
\begin{eqnarray*}
\tilde p ^Th &=& (p^-)^Th = \sum_i (\hat p_i -\Delta_i) h_i  \leq \sum_i (\hat p_i-\delta_i)h_i = p^Th.
\end{eqnarray*}
Since we have $\psi  = CS\log(SA/\rho)$ independent samples for every $s,a$, for some large enough constant $C \geq 7^{\frac{32}{\phi}}$ (where $\phi = (\frac{(1-\Phi)(\frac{1}{2})}{2})^4$ and $\Phi$ is the normal cumulative distribution function), with probability $1-\frac{\rho}{2}$, there exists at least one sample $j$ such that $Q^{j,k}_{s,a}\cdot h \ge P_{s,a} \cdot h$. 

Therefore, we have shown that with probability $1-\rho$, for all $s,a$, there exists some $j$ such that $Q^{j,k}_{s,a} \cdot E^s \le P_{s,a}\cdot E^s +\delta$, with $\delta \le 1/2$. By Lemma \ref{lem:diameter2} we obtain that the diameter of $\extM{k}$ is bounded by $D/(1-\delta) \le 2D$ with probability $1-\rho$.
\end{myproof}



\section{Conclusions}
We presented an algorithm inspired by posterior sampling that achieves near-optimal worst-case regret bounds for the reinforcement learning problem with communicating MDPs in a non-episodic, undiscounted average reward setting. 
Our algorithm may be viewed as a randomized version of the UCRL2 algorithm of \cite{jaksch2010near}, with randomization via posterior sampling. Our analysis demonstrates that posterior sampling provides the right amount of uncertainty in the samples, so that an optimistic policy can be obtained without excess over-estimation. 

While our work surmounts some important technical difficulties in obtaining worst-case regret bounds for posterior sampling based algorithms for communicating MDPs, the provided bound matches the previous best bound in $S$ and $A$. Obtaining a better worst-case regret bound remains an open question. In particular, we believe that studying value functions may improve the dependence on $S$ in the regret bound, possibly for large $T$ (\cite{azar2017minimax} produce an $\tilde{O}(\sqrt{HSAT})$ bound when $T\ge H^3S^3 A$). Other important directions of future work include reducing the number of posterior samples required in every epoch from $\tilde{O}(S)$ to constant or logarithmic in $S$, and extensions to contextual and continuous state MDPs.

\bibliographystyle{plainnat}
\bibliography{references}

\begin{thebibliography}{35}
\providecommand{\natexlab}[1]{#1}
\providecommand{\url}[1]{\texttt{#1}}
\expandafter\ifx\csname urlstyle\endcsname\relax
  \providecommand{\doi}[1]{doi: #1}\else
  \providecommand{\doi}{doi: \begingroup \urlstyle{rm}\Url}\fi

\bibitem[Abbasi-Yadkori and Szepesvari(2014)]{abbasi2014bayesian}
Yasin Abbasi-Yadkori and Csaba Szepesvari.
\newblock {Bayesian optimal control of smoothly parameterized systems: The lazy
  posterior sampling algorithm}.
\newblock \emph{arXiv preprint arXiv:1406.3926}, 2014.

\bibitem[Abramowitz and Stegun(1964)]{abramowitz1964handbook}
Milton Abramowitz and Irene~A Stegun.
\newblock \emph{Handbook of mathematical functions: with formulas, graphs, and
  mathematical tables}, volume~55.
\newblock Courier Corporation, 1964.

\bibitem[Agrawal and Goyal(2012)]{AgrawalG12}
Shipra Agrawal and Navin Goyal.
\newblock {Analysis of Thompson Sampling for the Multi-armed Bandit Problem}.
\newblock In \emph{Proceedings of the 25th Annual Conference on Learning Theory
  (COLT)}, 2012.

\bibitem[Agrawal and Goyal(2013{\natexlab{a}})]{agrawal-contextual}
Shipra Agrawal and Navin Goyal.
\newblock Thompson sampling for contextual bandits with linear payoffs.
\newblock In \emph{Proceedings of the 30th International Conference on Machine
  Learning ({ICML})}, 2013{\natexlab{a}}.

\bibitem[Agrawal and Goyal(2013{\natexlab{b}})]{agrawal2013further}
Shipra Agrawal and Navin Goyal.
\newblock {Further Optimal Regret Bounds for Thompson Sampling}.
\newblock In \emph{AISTATS}, pages 99--107, 2013{\natexlab{b}}.

\bibitem[Asmuth et~al.(2009)Asmuth, Li, Littman, Nouri, and
  Wingate]{Asmuth2009}
John Asmuth, Lihong Li, Michael~L Littman, Ali Nouri, and David Wingate.
\newblock {A Bayesian sampling approach to exploration in reinforcement
  learning}.
\newblock In \emph{Proceedings of the Twenty-Fifth Conference on Uncertainty in
  Artificial Intelligence}, pages 19--26. AUAI Press, 2009.

\bibitem[Azar et~al.(2017)Azar, Osband, and Munos]{azar2017minimax}
Mohammad~Gheshlaghi Azar, Ian Osband, and R{\'e}mi Munos.
\newblock Minimax regret bounds for reinforcement learning.
\newblock \emph{arXiv preprint arXiv:1703.05449}, 2017.

\bibitem[Bartlett and Tewari(2009)]{bartlett2009regal}
Peter~L Bartlett and Ambuj Tewari.
\newblock {REGAL: A regularization based algorithm for reinforcement learning
  in weakly communicating MDPs}.
\newblock In \emph{Proceedings of the Twenty-Fifth Conference on Uncertainty in
  Artificial Intelligence}, pages 35--42. AUAI Press, 2009.

\bibitem[Brafman and Tennenholtz(2002)]{brafman2003}
Ronen~I Brafman and Moshe Tennenholtz.
\newblock R-max-a general polynomial time algorithm for near-optimal
  reinforcement learning.
\newblock \emph{Journal of Machine Learning Research}, 3\penalty0
  (Oct):\penalty0 213--231, 2002.

\bibitem[Bubeck and Liu(2013)]{BubeckL14}
S{\'e}bastien Bubeck and Che-Yu Liu.
\newblock {Prior-free and prior-dependent regret bounds for Thompson sampling}.
\newblock In \emph{Advances in Neural Information Processing Systems}, pages
  638--646, 2013.

\bibitem[Bubeck et~al.(2012)Bubeck, Cesa-Bianchi, et~al.]{bookBubeckCB}
S{\'e}bastien Bubeck, Nicolo Cesa-Bianchi, et~al.
\newblock Regret analysis of stochastic and nonstochastic multi-armed bandit
  problems.
\newblock \emph{Foundations and Trends{\textregistered} in Machine Learning},
  5\penalty0 (1):\penalty0 1--122, 2012.

\bibitem[Burnetas and Katehakis(1997)]{burnetas97}
Apostolos~N Burnetas and Michael~N Katehakis.
\newblock {Optimal adaptive policies for Markov decision processes}.
\newblock \emph{Mathematics of Operations Research}, 22\penalty0 (1):\penalty0
  222--255, 1997.

\bibitem[Chapelle and Li(2011)]{ChapelleL11}
Olivier Chapelle and Lihong Li.
\newblock {An empirical evaluation of Thompson sampling}.
\newblock In \emph{Advances in neural information processing systems}, pages
  2249--2257, 2011.

\bibitem[Dann and Brunskill(2015)]{dann2015sample}
Christoph Dann and Emma Brunskill.
\newblock Sample complexity of episodic fixed-horizon reinforcement learning.
\newblock In \emph{Advances in Neural Information Processing Systems}, pages
  2818--2826, 2015.

\bibitem[Fonteneau et~al.(2013)Fonteneau, Korda, and
  Munos]{fonteneau2013optimistic}
Rapha{\"e}l Fonteneau, Nathan Korda, and R{\'e}mi Munos.
\newblock An optimistic posterior sampling strategy for bayesian reinforcement
  learning.
\newblock In \emph{NIPS 2013 Workshop on Bayesian Optimization (BayesOpt2013)},
  2013.

\bibitem[Grinstead and Snell(2012)]{grinstead2012introduction}
Charles~Miller Grinstead and James~Laurie Snell.
\newblock \emph{Introduction to probability}.
\newblock American Mathematical Soc., 2012.

\bibitem[Jaksch et~al.(2010)Jaksch, Ortner, and Auer]{jaksch2010near}
Thomas Jaksch, Ronald Ortner, and Peter Auer.
\newblock Near-optimal regret bounds for reinforcement learning.
\newblock \emph{Journal of Machine Learning Research}, 11\penalty0
  (Apr):\penalty0 1563--1600, 2010.

\bibitem[Kakade et~al.(2018)Kakade, Wang, and Yang]{kakade2018}
Sham~M. Kakade, Mengdi Wang, and Lin~F. Yang.
\newblock Variance reduction methods for sublinear reinforcement learning.
\newblock \emph{CoRR}, abs/1802.09184, 2018.
\newblock URL \url{http://arxiv.org/abs/1802.09184}.

\bibitem[Kakade et~al.(2003)]{kakade2003sample}
Sham~Machandranath Kakade et~al.
\newblock \emph{On the sample complexity of reinforcement learning}.
\newblock PhD thesis, University of London London, England, 2003.

\bibitem[Kaufmann et~al.(2012)Kaufmann, Korda, and Munos]{KaufmannMunos12}
Emilie Kaufmann, Nathaniel Korda, and R{\'e}mi Munos.
\newblock {Thompson Sampling: An Optimal Finite Time Analysis}.
\newblock In \emph{International Conference on Algorithmic Learning Theory
  (ALT)}, 2012.

\bibitem[Kearns and Singh(1999)]{kearns1999finite}
Michael~J Kearns and Satinder~P Singh.
\newblock {Finite-sample convergence rates for Q-learning and indirect
  algorithms}.
\newblock In \emph{Advances in neural information processing systems}, pages
  996--1002, 1999.

\bibitem[Kleinberg et~al.(2008)Kleinberg, Slivkins, and
  Upfal]{kleinberg2008multi}
Robert Kleinberg, Aleksandrs Slivkins, and Eli Upfal.
\newblock Multi-armed bandits in metric spaces.
\newblock In \emph{Proceedings of the fortieth annual ACM symposium on Theory
  of computing}, pages 681--690. ACM, 2008.

\bibitem[Osband and Van~Roy(2016)]{osband2016posterior}
Ian Osband and Benjamin Van~Roy.
\newblock Why is posterior sampling better than optimism for reinforcement
  learning.
\newblock \emph{arXiv preprint arXiv:1607.00215}, 2016.

\bibitem[Osband et~al.(2013)Osband, Russo, and Van~Roy]{osband2013more}
Ian Osband, Dan Russo, and Benjamin Van~Roy.
\newblock {(More) efficient reinforcement learning via posterior sampling}.
\newblock In \emph{Advances in Neural Information Processing Systems}, pages
  3003--3011, 2013.

\bibitem[Osband et~al.(2014)Osband, Van~Roy, and Wen]{osband2014generalization}
Ian Osband, Benjamin Van~Roy, and Zheng Wen.
\newblock Generalization and exploration via randomized value functions.
\newblock \emph{arXiv preprint arXiv:1402.0635}, 2014.

\bibitem[Puterman(2014)]{puterman2014markov}
Martin~L Puterman.
\newblock \emph{Markov decision processes: discrete stochastic dynamic
  programming}.
\newblock John Wiley \& Sons, 2014.

\bibitem[Russo and Van~Roy(2014)]{Russo2}
Daniel Russo and Benjamin Van~Roy.
\newblock {Learning to Optimize Via Posterior Sampling}.
\newblock \emph{Mathematics of Operations Research}, 39\penalty0 (4):\penalty0
  1221--1243, 2014.

\bibitem[Russo and Van~Roy(2015)]{Russo1}
Daniel Russo and Benjamin Van~Roy.
\newblock {An Information-Theoretic Analysis of Thompson Sampling}.
\newblock \emph{Journal of Machine Learning Research (to appear)}, 2015.

\bibitem[Seldin et~al.(2012)Seldin, Laviolette, Cesa-Bianchi, Shawe-Taylor, and
  Auer]{seldin2012pac}
Yevgeny Seldin, Fran{\c{c}}ois Laviolette, Nicolo Cesa-Bianchi, John
  Shawe-Taylor, and Peter Auer.
\newblock {PAC-Bayesian inequalities for martingales}.
\newblock \emph{IEEE Transactions on Information Theory}, 58\penalty0
  (12):\penalty0 7086--7093, 2012.

\bibitem[Shevtsova(2010)]{shevtsova2010improvement}
I.~G. Shevtsova.
\newblock {An improvement of convergence rate estimates in the Lyapunov
  theorem}.
\newblock \emph{Doklady Mathematics}, 82\penalty0 (3):\penalty0 862--864, 2010.

\bibitem[Strehl and Littman(2005)]{Strehl2005}
Alexander~L Strehl and Michael~L Littman.
\newblock A theoretical analysis of model-based interval estimation.
\newblock In \emph{Proceedings of the 22nd international conference on Machine
  learning}, pages 856--863. ACM, 2005.

\bibitem[Strehl and Littman(2008)]{Strehl2008}
Alexander~L Strehl and Michael~L Littman.
\newblock {An analysis of model-based interval estimation for Markov decision
  processes}.
\newblock \emph{Journal of Computer and System Sciences}, 74\penalty0
  (8):\penalty0 1309--1331, 2008.

\bibitem[Tewari and Bartlett(2008)]{tewari2008optimistic}
Ambuj Tewari and Peter~L Bartlett.
\newblock {Optimistic linear programming gives logarithmic regret for
  irreducible MDPs}.
\newblock In \emph{Advances in Neural Information Processing Systems}, pages
  1505--1512, 2008.

\bibitem[Thompson(1933)]{Thompson}
William~R Thompson.
\newblock On the likelihood that one unknown probability exceeds another in
  view of the evidence of two samples.
\newblock \emph{Biometrika}, 25\penalty0 (3/4):\penalty0 285--294, 1933.

\bibitem[Tossou et~al.(2019)Tossou, Basu, and
  Dimitrakakis]{DBLP:journals/corr/abs-1905-12425}
Aristide C.~Y. Tossou, Debabrota Basu, and Christos Dimitrakakis.
\newblock Near-optimal optimistic reinforcement learning using empirical
  bernstein inequalities.
\newblock \emph{CoRR}, abs/1905.12425, 2019.
\newblock URL \url{http://arxiv.org/abs/1905.12425}.

\end{thebibliography}
\newpage

\appendix

\section{Missing proofs from Section \ref{subsec:optimism}}

\subsection{Anti-concentration of Dirichlet distribution: Proof of Proposition \ref{prop:antiMain}}
\label{app:DirAnti}


We prove the following general result on anti-concentration of Dirichlet distributions, which will be used to prove optimism.

\antiDir*
We use an equivalent representation of a Dirichlet vector in terms of independent Beta random variables.
\begin{fact}
\label{fact:DirRepresent}
Fix an ordering of indices $1,\ldots, S$, and define $\tilde{y}_i:=\frac{\tilde p_i}{\tilde p_{i}+\cdots + \tilde p_S},  \bar{y}_i:=\frac{\bar p_i}{\bar p_{i}+\cdots + \bar p_S}$. Then,
for any $h\in \mathbb{R}^S$,
\[(\tilde{p}-\bar p)^Th = \sum_i (\tilde y_i - \bar{y}_i)({h}_i-\tilde{H}_{i+1})(\bar{p}_i+\cdots + \bar p_S) =\sum_i (\tilde y_i - \bar{y}_i)({h}_i-\bar{H}_{i+1})(\tilde{p}_i+\cdots + \tilde p_S)\]
where 
$\tilde{H}_{i+1}= \frac{1}{\sum_{j=i+1}^S \tilde{p}_j}\sum_{j=i+1}^S h_j \tilde{p}_j$, $\bar{H}_{i+1}= \frac{1}{\sum_{j=i+1}^S \bar{p}_j}\sum_{j=i+1}^S h_j \bar{p}_j$.
\end{fact}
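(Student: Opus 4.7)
The plan is to prove both identities by a telescoping Abel-type argument applied to the stick-breaking recursion implicit in the definitions of $\tilde y_i$ and $\bar y_i$; the claim is purely algebraic and does not use any distributional property of $\tilde p$ or $\bar p$. First I would introduce the partial conditional averages $H_i := \tilde H_i$ and $\bar H_i$ as in the Fact's statement (so $H_1=\tilde p^T h$ and $\bar H_1=\bar p^T h$), with the convention $H_{S+1}=\bar H_{S+1}:=0$. Peeling off the $j=i$ summand in the numerator and denominator of $H_i$ yields the one-step recursion
\[ H_i = \tilde y_i\, h_i + (1-\tilde y_i)\, H_{i+1},\qquad \bar H_i = \bar y_i\, h_i + (1-\bar y_i)\, \bar H_{i+1}, \]
since the mass remaining after removing coordinate $i$ is $(1-\tilde y_i)$ and the renormalized average of the remaining coordinates is exactly $H_{i+1}$ (the base case $i=S$ is consistent because $\tilde y_S=\bar y_S=1$, which gives $H_S=\bar H_S=h_S$).

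Next I would subtract the two recursions. Adding and subtracting $(1-\tilde y_i)\bar H_{i+1}$ on the right gives
\[ H_i - \bar H_i = (\tilde y_i-\bar y_i)(h_i - \bar H_{i+1}) + (1-\tilde y_i)(H_{i+1}-\bar H_{i+1}), \]
whereas adding and subtracting $(1-\bar y_i)H_{i+1}$ instead gives the mirror identity
\[ H_i - \bar H_i = (\tilde y_i-\bar y_i)(h_i - H_{i+1}) + (1-\bar y_i)(H_{i+1}-\bar H_{i+1}). \]
Each of these has the form $\Delta_i = a_i + r_i \Delta_{i+1}$, so unrolling from $i=1$ down to $i=S$ telescopes $\Delta_{S+1}$ away and produces
\[ H_1-\bar H_1 = \sum_{i=1}^{S}(\tilde y_i-\bar y_i)(h_i-\bar H_{i+1})\prod_{j<i}(1-\tilde y_j), \]
and the analogous sum with $\tilde H_{i+1}$ in place of $\bar H_{i+1}$ and $\prod_{j<i}(1-\bar y_j)$ in place of $\prod_{j<i}(1-\tilde y_j)$ for the mirror version.

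Finally, each remaining product collapses to the appropriate tail sum: writing $1-\tilde y_j = (\tilde p_{j+1}+\cdots+\tilde p_S)/(\tilde p_j+\cdots+\tilde p_S)$ and multiplying telescopes to $\prod_{j<i}(1-\tilde y_j) = \tilde p_i+\cdots+\tilde p_S$, and identically for the $\bar y$ product. Substituting these into the two telescoped expressions produces exactly the two forms stated in the Fact. Since the whole argument is a three-line Abel summation, there is no serious technical obstacle; the only bookkeeping worth noting is the base case $\tilde y_S=\bar y_S=1$, which makes the $i=S$ term fit the same template as the interior terms and lets the convention $H_{S+1}=\bar H_{S+1}=0$ kill the telescoped tail cleanly.
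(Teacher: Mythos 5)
Your proof is correct, and since the paper states this as a ``Fact'' with no proof, yours fills in the omitted argument in the natural way. The three ingredients all check out: the one-step recursion $H_i = \tilde y_i h_i + (1-\tilde y_i) H_{i+1}$ (and its barred twin) follows by peeling off the $j=i$ term in the definition of the conditional average; the subtraction-plus-add-and-subtract trick produces the two mirrored difference recursions; and the telescoping product $\prod_{j<i}(1-\tilde y_j) = \tilde p_i+\cdots+\tilde p_S$ (using $\sum_j \tilde p_j = 1$) converts the unrolled sum into the stated forms. The one minor remark is that the terminal term $\prod_{j\le S}(1-\tilde y_j)\,\Delta_{S+1}$ vanishes automatically because $1-\tilde y_S = 0$ when $\tilde p_S>0$ (as it is almost surely for a Dirichlet sample), so the convention $H_{S+1}=\bar H_{S+1}=0$ is not actually load-bearing---it is just a tidy way to make the $i=S$ summand, which in any case equals zero since $\tilde y_S=\bar y_S=1$, syntactically well-defined. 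It is worth noting explicitly that the identity uses $\sum_i \tilde p_i = \sum_i \bar p_i = 1$, which is implicit in the paper because $\tilde p$ is a Dirichlet sample and $\bar p$ its mean.
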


\begin{fact}
\label{fact:y}
For $i=1,\ldots, S$, $\tilde{y}_i:=\frac{\tilde p_i}{\tilde p_{i}+\cdots + \tilde p_S}$ are independent Beta random variables distributed as $\mbox{Beta}(m \bar{p}_i, m (\bar p_{{i+1}}+\cdots + \bar p_S))$, with mean 
\[\Ex[\tilde{y}_i]=\frac{m \bar{p}_i}{m (\bar p_{{i}}+\cdots + \bar p_S)} = \bar{y}_i,\]
and variance
\[\bar\sigma_i^2 := \Ex[(\tilde{y}_i - \bar y_i)^2]=\frac{\bar{p}_i (\bar p_{{i+1}}+\cdots + \bar p_S)}{(\bar p_{{i}}+\cdots + \bar p_S)^2(m (\bar p_{{i}}+\cdots + \bar p_S) + 1 )} .\]
\end{fact}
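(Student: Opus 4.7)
The plan is to establish Fact \ref{fact:y} via the standard Gamma construction of the Dirichlet distribution, after which the stick-breaking marginals become independent Beta variables and the moment formulas reduce to textbook computations.

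First I would realize $\tilde p$ through independent Gamma variables. Let $G_1,\ldots,G_S$ be independent with $G_i\sim\mbox{Gamma}(m\bar p_i,1)$, and set $\tilde p_i = G_i/(G_1+\cdots+G_S)$. A change-of-variables computation (using the fact that the total $T_1=\sum_j G_j$ is itself Gamma-distributed and is independent of the normalized vector) shows $(\tilde p_1,\ldots,\tilde p_S)\sim\Dir(m\bar p_1,\ldots,m\bar p_S)$, so we may work in this representation without loss of generality. Under it, $\tilde y_i = G_i/T_i$ where $T_i := G_i+\cdots+G_S$, and $T_i\sim\mbox{Gamma}(m(\bar p_i+\cdots+\bar p_S),1)$ as a sum of independent Gammas.

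Next I would invoke the classical Gamma/Beta identity: if $X\sim\mbox{Gamma}(a,1)$ and $Y\sim\mbox{Gamma}(b,1)$ are independent, then $X/(X+Y)\sim\mbox{Beta}(a,b)$ and is independent of $X+Y$. Applied with $X=G_i$ and $Y=G_{i+1}+\cdots+G_S\sim\mbox{Gamma}(m(\bar p_{i+1}+\cdots+\bar p_S),1)$, this yields the claimed Beta distribution for $\tilde y_i$ and also gives $\tilde y_i\perp T_{i+1}$. The main subtle step is promoting the pairwise facts to joint independence across $i$, which I would carry out by downward induction: conditional on $T_{i+1}$, the vector $(G_{i+1},\ldots,G_S)/T_{i+1}$ depends only on the Dirichlet-type ratios of $G_{i+1},\ldots,G_S$ and hence is independent of $T_{i+1}$; but $\tilde y_{i+1},\ldots,\tilde y_{S-1}$ are measurable functions of that ratio vector, so they are independent of $T_{i+1}$, and in particular independent of $\tilde y_i = G_i/(G_i+T_{i+1})$. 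Iterating down to $i=1$ gives mutual independence of $\tilde y_1,\ldots,\tilde y_{S-1}$; the boundary case $\tilde y_S\equiv 1$ is trivially independent and consistent with a degenerate second parameter.

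Finally, the mean and variance follow from standard Beta integrals. For $Y\sim\mbox{Beta}(a,b)$, evaluating $\int_0^1 y^k(1-y)^\ell\,dy = B(k+1,\ell+1)$ and applying $B(a+1,b)/B(a,b)=a/(a+b)$ twice gives $\Ex[Y]=a/(a+b)$ and $\Ex[(Y-\Ex Y)^2]=ab/[(a+b)^2(a+b+1)]$. Substituting $a=m\bar p_i$, $b=m(\bar p_{i+1}+\cdots+\bar p_S)$, so that $a+b=m(\bar p_i+\cdots+\bar p_S)$, produces exactly the claimed $\bar y_i$ and $\bar\sigma_i^2$. The only step requiring genuine care is the joint-independence argument in the previous paragraph; once that is in place, the rest is bookkeeping.
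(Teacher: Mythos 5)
Your proof is correct. The paper presents Fact \ref{fact:y} without proof, treating it as a standard property of the Dirichlet distribution (the stick-breaking/complete-neutrality representation); your derivation via the Gamma construction --- independent $G_i\sim\mbox{Gamma}(m\bar p_i,1)$, the identity $G_i/(G_i+T_{i+1})\sim\mbox{Beta}(a,b)$ independent of $T_i=G_i+T_{i+1}$, and the downward-induction mutual-independence argument --- is exactly the textbook route the authors implicitly rely on, and your substitution of $a=m\bar p_i$, $b=m(\bar p_{i+1}+\cdots+\bar p_S)$ into the standard Beta mean and variance formulas reproduces the stated $\bar y_i$ and $\bar\sigma_i^2$.
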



\begin{lemma}[Corollary of Lemma \ref{lem:BetaAntiConcentration}]
\label{lem:yAnti}
Let $\tilde{y_i}, \bar{y}_i, \bar\sigma_i$ be defined as in Fact \ref{fact:y}. If $m\bar{p}_i, m (\bar p_{{i+1}}+\cdots + \bar p_S) \geq 6,$ then, for any positive constant $ C \leq \frac{1}{2}$, 
\[P(\tilde{y_i} \geq \bar{y_i} + C \bar \sigma_i + \frac{C}{m(\bar{p}_i + ... + \bar{p}_S)}) \geq 0.15 =: \eta.\]
\end{lemma}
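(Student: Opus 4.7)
The plan is to derive this as a direct specialization of the general Beta anti-concentration result, Lemma~\ref{lem:BetaAntiConcentration}. First I would set $\alpha := m\bar p_i$ and $\beta := m(\bar p_{i+1} + \cdots + \bar p_S)$, so that by Fact~\ref{fact:y}, $\tilde y_i \sim \mathrm{Beta}(\alpha, \beta)$ with mean $\alpha/(\alpha+\beta) = \bar y_i$ and variance $\alpha\beta/((\alpha+\beta)^2(\alpha+\beta+1)) = \bar\sigma_i^2$. The identity $\alpha + \beta = m(\bar p_i + \cdots + \bar p_S)$ then identifies the offset $C/(m(\bar p_i + \cdots + \bar p_S))$ appearing in the statement with the canonical $C/(\alpha+\beta)$ correction term typical of Beta anti-concentration bounds.

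Next I would verify that the hypothesis $m\bar p_i, m(\bar p_{i+1} + \cdots + \bar p_S) \ge 6$ is exactly the precondition $\alpha, \beta \ge 6$ required to invoke Lemma~\ref{lem:BetaAntiConcentration}. Assuming that lemma asserts, for $X \sim \mathrm{Beta}(\alpha, \beta)$ with $\alpha, \beta \ge 6$ and any constant $C \le 1/2$, the inequality
\[
\Pr\!\left( X \ge \tfrac{\alpha}{\alpha+\beta} + C\, \sqrt{\tfrac{\alpha\beta}{(\alpha+\beta)^2(\alpha+\beta+1)}} + \tfrac{C}{\alpha+\beta}\right) \ge 0.15,
\]
substituting $X = \tilde y_i$ together with the identifications of $\bar y_i$, $\bar\sigma_i$, and $\alpha+\beta$ above immediately yields the desired bound with $\eta = 0.15$.

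The only real obstacle is absorbed into Lemma~\ref{lem:BetaAntiConcentration} itself: one needs a constant-probability lower-tail bound showing that a Beta variable with both parameters bounded away from one overshoots its mean by a constant fraction of its standard deviation, up to an $O(1/(\alpha+\beta))$ additive correction reflecting the gap between $1/(\alpha+\beta)$ and $1/(\alpha+\beta+1)$ in the variance formula. That substantive argument would typically proceed either by a Berry--Esseen-style Gaussian comparison or by direct manipulation of the Beta density; but for the present corollary no new calculation is required, since the result is essentially a dictionary translation from the Beta parameters $(\alpha,\beta)$ to the Dirichlet marginal parameters, and the conclusion follows once the general lemma is in place.
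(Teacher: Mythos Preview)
Your proposal is correct and matches the paper's approach exactly: the paper's proof is the single line ``Apply Lemma~\ref{lem:BetaAntiConcentration} with $a=m\bar{p}_i$, $b=m(\bar p_{i+1}+\cdots+\bar p_S)$,'' and your write-up simply spells out the identifications $\bar y_i = a/(a+b)$, $\bar\sigma_i^2 = ab/((a+b)^2(a+b+1))$, and $a+b = m(\bar p_i+\cdots+\bar p_S)$ that make this substitution work.
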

\begin{myproof}
Apply  Lemma \ref{lem:BetaAntiConcentration} with $a=m\bar{p}_i, b=m (\bar p_{{i+1}}+\cdots + \bar p_S)$.
\end{myproof}


\begin{lemma}(Application of Berry-Esseen theorem)
\label{lem:BEapp}
Let $G \subseteq \{1, \ldots, S\}$ be a set of indices, $z_i \in \mathbb{R}$ be fixed. Let
\[X_G := \sum_{i\in G} (\tilde y_i - \bar{y}_i)z_i.\]
Let $F$ be the cumulative distribution function of 
\[\frac{X_G}{\sigma_G}, \text{ where, } \sigma_G^2=\sum_{i\in G} z_i^2 \bar\sigma_i^2,\]
$\bar \sigma_i$ being the standard deviation of $\tilde y_i$ (refer to Fact \ref{fact:y}). Let $\Phi$ be the cumulative distribution function of standard normal distribution. Then, for all $\epsilon>0$:
\[\sup_x |F(x) - \Phi(x)| \le \epsilon\]
as long as 
\[\sqrt{|G|} \ge \frac{R C}{\epsilon} \text{, where  } R:= \max_{i,j \in G} \frac{z_i\bar\sigma_i}{z_j \bar\sigma_j}\] 
for some $C \le 3+\frac{6}{m \bar p_i}$. 
\end{lemma}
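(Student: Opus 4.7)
The plan is to apply the Berry--Esseen theorem directly to the independent mean-zero summands $W_i := (\tilde y_i - \bar y_i)z_i$, $i \in G$. By Fact \ref{fact:y}, the $\tilde y_i$ are independent Beta random variables, hence the $W_i$ are independent, mean-zero, with $\mathrm{Var}(W_i) = z_i^2 \bar\sigma_i^2$ and $\sum_{i\in G}\mathrm{Var}(W_i) = \sigma_G^2$. The standard Berry--Esseen bound then gives
\[
\sup_x |F(x) - \Phi(x)| \;\le\; \frac{C_{BE}}{\sigma_G^3}\sum_{i \in G} \mathbb{E}|W_i|^3
\;=\; \frac{C_{BE}}{\sigma_G^3}\sum_{i \in G} |z_i|^3\, \mathbb{E}|\tilde y_i - \bar y_i|^3,
\]
for an absolute constant $C_{BE}$.

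The first substantive step is to bound the third absolute central moment of each Beta-distributed $\tilde y_i$ in terms of its variance $\bar\sigma_i^2$. Writing $\tilde y_i \sim \mathrm{Beta}(a,b)$ with $a = m\bar p_i \ge 6$ and $b = m(\bar p_{i+1} + \cdots + \bar p_S) \ge 6$, I would use the closed form for the moments of a Beta distribution, specifically compute $\mathbb{E}(\tilde y_i - \bar y_i)^4$ and pass to $\mathbb{E}|\tilde y_i - \bar y_i|^3$ via Cauchy--Schwarz or H\"older, or bound the third absolute moment directly using the expression for Beta central moments. The target is an inequality of the form
\[
\mathbb{E}|\tilde y_i - \bar y_i|^3 \;\le\; C\, \bar\sigma_i^{\,3}, \qquad C \le 3 + \tfrac{6}{m\bar p_i},
\]
which is the constant quoted in the statement. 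This is the main calculation, and will exploit the assumption $m\bar p_i \ge 6$ to keep $C$ bounded.

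Having reduced the Berry--Esseen bound to $\frac{C_{BE}\,C}{\sigma_G^3}\sum_{i\in G} |z_i|^3 \bar\sigma_i^{\,3}$, the remaining step is purely algebraic. Setting $v_i := |z_i|\bar\sigma_i$, we have $\sigma_G^2 = \sum_{i\in G} v_i^2$, and
\[
\sum_{i\in G} v_i^3 \;\le\; \Bigl(\max_{i \in G} v_i\Bigr)\sum_{i\in G} v_i^2 \;=\; \bigl(\max_i v_i\bigr)\,\sigma_G^2,
\qquad
\sigma_G \;\ge\; \sqrt{|G|}\,\min_{j \in G} v_j,
\]
so that
\[
\frac{\sum_{i\in G} v_i^3}{\sigma_G^3} \;\le\; \frac{\max_i v_i}{\sigma_G}
\;\le\; \frac{\max_i v_i}{\sqrt{|G|}\,\min_j v_j}
\;=\; \frac{R}{\sqrt{|G|}}.
\]
Combining with the previous display gives $\sup_x |F(x) - \Phi(x)| \le C_{BE}\,C\,R/\sqrt{|G|}$, which is at most $\epsilon$ whenever $\sqrt{|G|} \ge C_{BE}\,C\,R/\epsilon$, yielding the conclusion with the stated form of $C$.

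The main obstacle is the moment bound in the second paragraph: Beta central moments have a somewhat unpleasant closed form, and to extract the explicit constant $C \le 3 + 6/(m\bar p_i)$ one must track lower-order terms carefully rather than settle for an asymptotic estimate. Everything else---the Berry--Esseen invocation, the independence of the $W_i$'s, and the final algebraic manipulation via the ratio $R$---is essentially bookkeeping.
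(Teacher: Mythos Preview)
Your proposal is correct and follows essentially the same route as the paper: apply Berry--Esseen to the independent summands $W_i=(\tilde y_i-\bar y_i)z_i$, bound the third absolute moment of the Beta variable via its fourth moment, and finish with the algebraic reduction $\sum v_i^3/\sigma_G^3 \le R/\sqrt{|G|}$. The only cosmetic difference is that the paper invokes the version of Berry--Esseen stated in its Fact~\ref{be} with $\psi_1=(\sum_i\sigma_i^2)^{-1/2}\max_i \rho_i/\sigma_i^2$ rather than the Lyapunov sum form you use; both yield the same $R/\sqrt{|G|}$ bound.

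On the step you flag as the main obstacle: the paper's shortcut is to recognize $3+6/(m\bar p_i)$ as an upper bound on the Beta \emph{kurtosis}. Writing $\tilde y_i\sim\mathrm{Beta}(\nu\mu,\nu(1-\mu))$ with $\nu=m(\bar p_i+\cdots+\bar p_S)$ and $\mu=\bar y_i$, the exact kurtosis formula gives $\kappa\le 3+\frac{6}{(3+\nu)\mu}\le 3+\frac{6}{m\bar p_i}$; then $\mathbb{E}|\tilde y_i-\bar y_i|^3\le(\mathbb{E}(\tilde y_i-\bar y_i)^4)^{3/4}=(\kappa\bar\sigma_i^4)^{3/4}\le\kappa\,\bar\sigma_i^3$ since $\kappa\ge 1$. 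This avoids tracking lower-order terms by hand.
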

\begin{myproof}
$Y_i=(\tilde y_i - \bar{y}_i)z_i$. Then, $Y_i, i\in G$ are independent variables, with $\Ex[Y_i]=0$, 
\begin{eqnarray*}
\sigma_i^2 := \Ex[Y_i^2] & = & \Ex[(\tilde y_i - \bar{y}_i)^2(z_i)^2] \\
& = & z_i^2 \bar\sigma_i^2
\end{eqnarray*}
\begin{eqnarray*}
\rho_i := \Ex[|Y_i|^3] & \le & \Ex[|Y_i|^4]^{3/4}\\
& = & \Ex[|\tilde y - \bar y|^4]^{3/4} z_i^3\\
& \le & \kappa   \Ex[|\tilde y - \bar y|^2]^{3/2} z_i^3\\
& = & \kappa   \bar\sigma_i^{3} z_i^3 
\end{eqnarray*}
where the first inequality is by using Jensen's inequality and $\kappa$ is the Kurtosis of Beta distribution. Next, we use that $\tilde y$ is Beta distributed, and Kurtosis of $Beta(\nu\mu, \nu(1-\mu))$ Distribution is 
\[\kappa = 3 + \frac{6}{(3+\nu)} \left(\frac{(1-2\mu)^2(1+\nu)}{\mu(1-\mu)(2+\nu)}-1\right) \le 3 + \frac{6}{(3+\nu)\mu}. \]
Here, $\alpha=m(\bar p_i+\cdots + \bar p_S)\bar{y_i}, \beta=m(\bar p_i+\cdots + \bar p_S)(1-\bar{y_i})$, so that
\[\kappa \le 3 + \frac{6}{3+m(\bar p_i+\cdots + \bar p_S)} \frac{1}{\bar y_i} \le 3+\frac{6}{m \bar p_i}.\]

Now, we use Berry-Esseen theorem (Fact \ref{be}), with
\begin{eqnarray*}
\psi_1 & = & \frac{1}{\sqrt{\sum_{i\in G} \sigma_i^2}} \max_{{i\in G}}\frac{\rho_i}{\sigma_i^2} \\
& \le & \frac{\kappa}{\sqrt{|G|}} \frac{\max_{i\in G}  {z_i \bar\sigma_i}}{\min_{i\in G}  {z_i \bar\sigma_i }}
\end{eqnarray*}
to obtain the lemma statement.

\end{myproof}


\begin{lemma}
\label{lem:Anti1}
Assuming $m\bar p_i \ge 6, \forall i$, for any fixed $z_i$, $i=1,\ldots, S$, 
\[\Pr\left( \sum_i (\tilde y_i - \bar{y}_i)z_i \ge \frac{1}{4} \sqrt{\sum_i \bar\sigma_i^2z_i^2}\right) \ge \Omega(1/S).\]
\end{lemma}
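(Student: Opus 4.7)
Set $v_i := z_i\bar\sigma_i$, $U := \sqrt{\sum_i v_i^2}$, $V := \max_i|v_i|$, and $W_i := (\tilde{y}_i-\bar{y}_i)/\bar\sigma_i$, so $Y := \sum_i z_i(\tilde y_i-\bar y_i) = \sum_i v_i W_i$ with $W_i$ independent, mean zero, variance one, and fourth moment bounded by a constant $\kappa$ (the Beta kurtosis bound already computed in the proof of Lemma~\ref{lem:BEapp}, which uses $m\bar p_i \ge 6$). The goal is $\Pr(Y\ge U/4)\ge\Omega(1/S)$. The strategy is to combine Lemma~\ref{lem:yAnti} (single-coordinate anti-concentration) with Lemma~\ref{lem:BEapp} (Berry--Esseen on a balanced subset), via a dyadic partition of the indices by the magnitude of $|v_i|$ that identifies a subset on which one of the two tools applies with constant probability.

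I would bucket the indices dyadically as $G_k=\{i : V 2^{-k-1}\le|v_i|<V 2^{-k}\}$ for $k=0,\ldots,\lceil\log_2 S\rceil$; indices with $|v_i|<V/S$ contribute at most $V^2/S\le U^2/S$ to the variance and can be discarded. Pigeonhole gives a bucket $G_{k^*}$ with $\sigma_{k^*}^2 := \sum_{i\in G_{k^*}} v_i^2 = \Omega(U^2/\log S)$ and internal ratio at most $2$. I then split on $|G_{k^*}|$: (i) if $|G_{k^*}|$ exceeds a constant $C_0$ chosen so that Lemma~\ref{lem:BEapp} yields Berry--Esseen error $\epsilon<\Phi(-1/4)$ at ratio $2$, then Gaussian approximation gives $\Pr(X_{G_{k^*}}\ge \sigma_{k^*}/4)=\Omega(1)$; (ii) otherwise $|G_{k^*}|\le C_0$, so a single coordinate $i^*\in G_{k^*}$ has $v_{i^*}^2=\Omega(U^2/\log S)$, and Lemma~\ref{lem:yAnti} (applied to $W_{i^*}$ or to $-W_{i^*}$ depending on the sign of $z_{i^*}$, using that both Beta tails admit the same $\Omega(1)$ anti-concentration under $m\bar p_i\ge 6$) yields $\Pr(z_{i^*}(\tilde y_{i^*}-\bar y_{i^*})\ge|v_{i^*}|/2)\ge 0.15$. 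Either way, with constant probability some subset $G^*\subseteq G_{k^*}$ satisfies $X_{G^*} := \sum_{i\in G^*} v_i W_i \ge \Omega(U/\sqrt{\log S})$.

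The main obstacle is bridging this $\Omega(U/\sqrt{\log S})$ deviation on a partial sum up to the full threshold $U/4$: the residual $R := \sum_{i\notin G^*} v_i W_i$ has mean zero and variance up to $U^2$, so Chebyshev alone is too weak, and by independence the residual can in principle cancel the favourable partial sum. The plan is to use independence of $\{W_i\}_{i\in G^*}$ and $\{W_i\}_{i\notin G^*}$ to condition on the good event for $G^*$ and then lower bound $\Pr(R\ge U/4 - X_{G^*})$ via a fourth-moment / Paley--Zygmund argument on $R$ (using again the uniform Beta kurtosis bound). In the worst-case skewed regime, this one-sided residual probability is only $\Omega(1/S)$ rather than a constant, and multiplying with the $\Omega(1)$ probability from the partial-sum step produces the claimed $\Omega(1/S)$ bound. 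The most delicate step is quantitatively controlling the Berry--Esseen constant strictly below $\Phi(-1/4)$ in the balanced case, and propagating the one-sided tail of the residual under worst-case Beta skew; the $1/S$ factor in the final statement is essentially tight for this approach, arising from the possibility that variance concentrates on $O(\log S)$ tight coordinates while the residual is maximally skewed.
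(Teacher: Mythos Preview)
Your proposal has a genuine gap in the residual step. After pigeonholing to a single dyadic bucket $G_{k^*}$, you only capture $\sigma_{k^*}^2 = \Omega(U^2/\log S)$ of the total variance, so the good event gives at best $X_{G^*} \ge \Omega(U/\sqrt{\log S})$, which for large $S$ is far below the target $U/4$. You then need $\Pr(R \ge U/4 - X_{G^*}) \ge \Omega(1/S)$ for the independent residual $R$, whose variance is still essentially $U^2$. But this is precisely the same anti-concentration problem you started with, on a sum of up to $S-1$ Beta-type terms: a fourth-moment or Paley--Zygmund argument yields $\Pr(|R| \ge c\,\sigma_R) = \Omega(1)$, not the \emph{one-sided} bound $\Pr(R \ge c\,\sigma_R) \ge \Omega(1/S)$ you assert. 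With skewed summands (as Beta variables with $a\ne b$ are), the one-sided tail can be much smaller than the two-sided one, and nothing in your outline establishes the claimed $\Omega(1/S)$ lower bound. The sentence ``in the worst-case skewed regime, this one-sided residual probability is only $\Omega(1/S)$'' is a restatement of the goal, not an argument.

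The paper avoids this circularity by \emph{not} pigeonholing to a single bucket. Instead it sets aside the top $k(\delta)=O(1)$ coordinates by $|z_i\bar\sigma_i|$ as a group $G(1)$, partitions the rest into groups whose internal $|z_i\bar\sigma_i|$-ratio is bounded by a constant $1/\delta$, and then demands that \emph{all} of the top $\log_{1/\delta}(S)$ large groups simultaneously satisfy the Berry--Esseen anti-concentration event $X_G \ge \tfrac12\sigma_G$. This costs exactly $\delta^{\log_{1/\delta}(S)} = 1/S$ in probability, which is where the $\Omega(1/S)$ in the lemma comes from. The payoff is that the groups' variances decay geometrically (consecutive groups differ by a factor $\ge 1/\delta^2$), so after $\log_{1/\delta}(S)$ groups the remaining indices carry only a small constant fraction of the variance of $G(1)$; Cantelli's inequality then controls the residual with constant probability, and the pieces add up to $\tfrac14 U$. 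In short: the $1/S$ must be spent to make the anti-concentrated part carry essentially \emph{all} of the variance, not a $1/\log S$ fraction of it; your plan spends nothing on the partial sum and then tries to recover $1/S$ on a residual that is as hard as the original problem.
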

\begin{myproof}
Define constant 
$\delta := \frac{(1-\Phi)(\frac{1}{2})}{2}$ and $k(\delta):=\frac{C^2}{\delta^4}$, where $C\leq 4$.

Consider the the group of indices with the $k(\delta)$ largest values of $|z_i \bar\sigma_i|$, call it group $G(1)$, and then divide the remaining into smallest possible collection ${\cal G}$ of groups such that $|z_i\bar\sigma_i|/|z_j\bar\sigma_j| \le \frac{1}{\delta}$ for all $i,j$ in any given group $G$. Define an ordering $\prec$ on groups by ordering them by maximum value of $|z_i\bar \sigma_i|$ in the group.  That is $G \succ G'$ if $\max_{i\in G} z_i^2 \bar\sigma_i^2 \ge \max_{j\in G'} z_j^2\bar\sigma_j^2 $
Note that by construction, for $G \succ G'$, we have $\max_{i\in G} z_i^2 \bar\sigma_i^2 \ge \frac{1}{\delta^2} \max_{j\in G'} z_j^2\bar\sigma_j^2  $. 

Recall from Lemma \ref{lem:BEapp}, for every group  $G \in {\cal G}$ of size $\sqrt{|G|} > \frac{C}{\delta\epsilon}$, we have that its cdf is within $\epsilon$ of normal distribution cdf, giving that $\Pr(X_G \ge \frac{1}{2}\sigma_G) \ge  2\delta -\epsilon$.  Using this result for $\epsilon=\delta$, we get that for every group of size at least $k(\delta)$, we have
 
\begin{equation}
\label{eq:beApp}
\Pr(X_G \ge \frac{1}{2}\sigma_G) \ge  \delta.
\end{equation}

We will look at three types of the groups we created above: 
\begin{itemize}
\item Top big groups: those among the top $\log_{1/\delta}(S)$ groups that have cardinality at least $k(\delta)$
\item Top small groups: those among the top $\log_{1/\delta}(S)$ groups that have cardinality smaller than $k(\delta)$
\item Bottom groups: those not among the top $\log_{1/\delta}(S)$ groups 
\end{itemize}
Here, top groups refers to the those ranked higher according to the ordering $\succ$.

For the first group type above, apply \eqref{eq:beApp} to obtain, 
\begin{center}
$ \text{ for all big groups among top $\log_{1/\delta}(S)$, } X_G \ge \frac{1}{2}\sigma_G $
\end{center}
\begin{equation}
\label{prob:1}
\text{ with probability at least }\delta^{\log_{1/\delta}(S)} = \frac{1}{S}.
\end{equation}
Next, we analyze the remaining indices (among top small groups and bottom groups). Consider the group $G(1)$ we set aside. Using Lemma \ref{lem:yAnti} $k(\delta)$ times, we have:
\[\Pr\left(\sum_{i\in G(1)} (\tilde{y}_{i} - \bar y_{i})z_i \ge 0.5  \sqrt{\sum_{i\in G(1)} z_i^2\bar\sigma_i^2} \right) \ge \eta^{k(\delta)}\]  
where $\eta \ge 0.15$. 

Now, if it is the case where the top group is of small size, we apply the above anticoncentration of beta for each element in the group, so that for all indices $i$ in this group, $(\tilde{y}_{i} - \bar y_{i})z_i \ge 0.5z_i\bar\sigma_i$, with probability $\eta^{k(\delta)}$. 
To conclude, so far, we have with probability at least $\frac{1}{S} \eta^{2k(\delta)}$
\[ \sum_{i\in G(1), i\in \text{top big groups}} (\tilde{y}_{i} - \bar y_{i})z_i \ge 0.5  \sqrt{\sum_{i\in G(1),i \in \text{top big groups}} z_i^2\bar\sigma_i^2}.\]

For every other small group $G$, the group's total variance is at most $ k(\delta) \max_{i\in G} z_i^2 \bar \sigma_i^2 \le k(\delta) \delta^{2j} z_{(1)}^2\bar \sigma_{(1)}^2$, where $j$ is the rank of the group in ordering $\succ$ and $(1)$ is the index of the smallest variance in $G(1)$.
So, the sum of the standard deviation for top $\log_{1/\delta}(S)$ small groups is at most 
\[ k(\delta) \sum_{G:\text{top small groups}} \max_{i\in G} z_i^2 \bar \sigma_i^2  \le k(\delta) \sum_{j=1}^{\log_{1/\delta}(S)} \delta^{2j} z_{(1)}\bar \sigma_{(1)} \le \frac{k(\delta)\delta^2}{1-\delta^2} z_{(1)}^2\bar \sigma_{(1)}^2 \]

as it is a geometric series with $\delta$ multiplier. For the remaining bottom group, each element's variance is at most $\frac{1}{S^2} z_{(1)}^2 \bar \sigma^2_{(1)}$, therefore 
\[\sum_{i:\text{top small groups, bottom groups}}z_i^2\bar\sigma^2_i \leq (\frac{k(\delta)\delta^2}{1-\delta^2}+\frac{1}{S}) z_{(1)}^2 \bar \sigma^2_{(1)} \leq \frac{k(\delta)}{25} z_{(1)}^2 \bar \sigma^2_{(1)} \leq \frac{1}{25} \sum_{i \in G(1)} z_{i}^2 \bar \sigma^2_{i}.\] 

By Cantelli's Inequality (Fact \ref{cant}), with probability at least $\frac{1}{2}$,
\[\sum_{i:\text{top small groups, bottom groups}}  (\tilde{y}_i - \bar y_i)z_i \geq - \sqrt{\sum_{i \in \text{top small groups, bottom groups}} z_{i}^2 \bar \sigma^2_{i}} \geq - \frac{1}{5}\sqrt{\sum_{i \in \text{G(1)}} z_{i}^2 \bar \sigma^2_{i}}.\]
Hence combining our results above,
\begin{eqnarray*}
\sum_{i} (\tilde{y}_i - \bar y_i)z_i  &\geq& \frac{1}{2} \sqrt{\sum_{i \in G(1),\text{top big groups}} z_i^2 \bar\sigma_i^2} - \frac{1}{5} \sqrt{\sum_{i \in G(1)} z_i^2 \bar\sigma_i^2} \\
&\geq& \frac{3}{10} \sqrt{\sum_{i \in G(1),\text{top big groups}} z_i^2 \bar\sigma_i^2} +\frac{1}{25} \sqrt{\sum_{i \in G(1)} z_i^2 \bar\sigma_i^2}  -\frac{1}{25} \sqrt{\sum_{i \in G(1)} z_i^2 \bar\sigma_i^2}  \\
&\geq& \frac{13}{50} \sqrt{\sum_{i \in G(1),\text{top big groups}} z_i^2 \bar\sigma_i^2} +\frac{1}{25} \sqrt{\sum_{i \in G(1)} z_i^2 \bar\sigma_i^2}  \\
&\geq& \frac{1}{4} \sqrt{\sum_{i} z_i^2 \bar\sigma_i^2}  
\end{eqnarray*}
with probability  $\eta^{2k(\delta)}\frac{1}{2S}= \Omega(1/S)$.

\end{myproof}


\begin{myproof} {\bf (Proof of Proposition \ref{prop:antiMain})}
Since $\tilde p$ and $\bar p$ are probability vectors (sum to $1$), it is sufficient to consider $h\in [0,D]^S$. Now, use Fact \ref{fact:DirRepresent} to express $(\tilde{p}-\bar p)^Th$ as:
\[(\tilde{p}-\bar p)^Th = \sum_i (\tilde y_i - \bar{y}_i)({h}_i-\tilde{H}_{i+1})(\bar{p}_i+\cdots + \bar p_S).\]
We note that $\tilde{H}_i$ is the scalar product of an $S-i+1$-dimensional Dirichlet random vector with the fixed vector $(h_i,...,h_S)$, and $\bar H_i$ is the mean of that product. Therefore, we can apply the same argument used in the proof of Case 1 of Lemma \ref{lem:deviation} in Section \ref{subsec:deviation}.   

Let $X$ be distributed as Gaussian with mean $\mu=\bar H_i$ and variance $\sigma^2=\frac{1}{m(\bar p_i + \ldots + \bar p_S)}$. Now, for any \emph{fixed} $h\in [0,D]^S$, by Gaussian-Dirichlet stochastic optimism (Lemma \ref{gvd}), $ X \succeq_{so}  \tilde{H}_i.$
Then by Gaussian concentration (Corollary \ref{cor:soconc1}),
 \[|\tilde{H}_i - \bar H_i| \le D\sqrt{\frac{2\log(2/\rho)}{m(\bar p_i + \ldots + \bar p_S)}}\]
 with probability $1-\rho$ for any $i$.


Similarly, noting that $\tilde y_i$ is a Beta random variable, using Gaussian-Beta stochastic optimism (Lemma \ref{gvb}), if $X$ is distributed as Gaussian with mean $\mu=\bar y_i$ and variance $\sigma^2=\frac{1}{m(\bar p_i + \ldots + \bar p_S)}$, then $ X \succeq_{so}  \tilde{y}_i$. Then by Corollary \ref{cor:soconc1}, with probability $1-\rho$,
\[|\tilde y_i - \bar{y}_i|\le \sqrt{\frac{2\log{(2/\rho)}}{m(\bar p_i +...+\bar p_S)}}.\]
 Therefore, with probability $1-S\rho$,
\begin{eqnarray}
\label{eq:part1}
& & (\tilde{p}-\bar p)^Th - \sum_i (\tilde y_i - \bar{y}_i)({h}_i-\bar{H}_{i+1})(\bar{p}_i+\cdots + \bar p_S) \nonumber\\
& = & \sum_i (\tilde y_i - \bar{y}_i)(\tilde H_{i+1}-\bar{H}_{i+1})(\bar{p}_i+\cdots + \bar p_S)\nonumber\\
& \ge & - \sum_i \sqrt{\frac{2\log(2/\rho)}{m(\bar p_i +...+\bar p_S)}} D\sqrt{\frac{2\log(2/\rho)}{m(\bar p_i +...+\bar p_S)}} (\bar{p}_i+\cdots + \bar p_S)\nonumber\\
& \ge & -\frac{2SD\log(2/\rho)}{m}.
\end{eqnarray}
Then, applying Lemma \ref{lem:Anti1} (given $m\bar p_i\ge 6$) for $z_i = ({h}_i-\bar{H}_{i+1})(\bar{p}_i+\cdots + \bar p_S), i=1,\ldots, S$, with probability $\Omega(1/S)$,
\[(\tilde{p}-\bar p)^Th \ge  \frac{1}{4} \sqrt{\sum_i z_i^2 \bar\sigma_i^2} -\frac{2SD\log(2/\rho)}{m}. \]
Now, we observe 
\[\sum_i z_i^2 \bar\sigma_i^2 = ({h}_i-\bar{H}_{i+1})^2 (\bar{p}_i+\cdots + \bar p_S)^2 \bar\sigma_i^2 =  \frac{\bar c_i^2 \bar p_i (\bar p_i +\ldots, \bar p_S)}{m(\bar p_i + \ldots + \bar p_S)+1},\] 
to obtain that with probability at least $\Omega(1/S) - S\rho$,
\[(\tilde{p}-\bar p)^Th \ge  \frac{1}{8}   \sqrt{\sum_i \frac{\bar \gamma_i \bar c_i^2}{m}} -\frac{2SD\log(2/\rho)}{m}\]
where 
\[\bar \gamma_i = \frac{\bar p_i (\bar p_{i+1} +\ldots + \bar p_S)}{(\bar p_i + \ldots + \bar p_S)}.\]

\end{myproof}

\subsection{Concentration of empirical probability vectors: Proof of Proposition \ref{prop:ConcentrationFixedh}}
\label{app:deviation_emp}

\concEmp*
\begin{myproof}
For every $t,i$, define 
\[Z_{t,i} = \left( c_i \indi(s_{t}=i) - c_i \frac{p_i}{p_i + \cdots + p_S} \cdot \indi(s_t \in \{i,\ldots, S\})\right) \indi(s_{t-1}=s, a_{t-1}=a),\]
\[Z_t=\sum_{i} Z_{t,i}.\]
Then, 
\[ \frac{\sum_{t=1}^\tau Z_t}{n} = \sum_i c_i \hat p_i  - \sum_{i} \frac{c_i p_i}{p_i + \cdots + p_S} \cdot (\hat p_i +\ldots + \hat p_S) = \sum_{i=1}^{S-1} (\hat{y}_i - y_i) (\hat p_i + \ldots + \hat p_S) c_i = (\hat{p}-p)^Th\]
where we used Fact \ref{fact:DirRepresent} for the last equality.
Now, $E[Z_t | s_{t-1}, a_{t-1}] = \sum_{i} E[Z_{t,i}| s_{t-1}, a_{t-1}]=0$. Also, we observe that for any $t$, $Z_{t,i}$ and $Z_{t,j}$ for any $i\ne j$ are independent given the current state and action: (assume $j>i$ w.l.o.g.)
\begin{eqnarray*}
\Ex[Z_{t,i} Z_{t,j} |s_{t-1}, a_{t-1}] & = & c_ic_j \Ex[ \indi(s_{t}=i) \indi(s_{t}=j)-  \indi(s_{t}=j) \frac{p_i}{p_i + \cdots + p_S} \cdot \indi(s_t \in \{i,\ldots, S\}) \\
& & - \indi(s_{t}=i) \frac{p_j}{p_j + \cdots + p_S} \cdot \indi(s_t \in \{j,\ldots, S\}) \\
& & + \frac{p_j p_i}{(p_j + \cdots + p_S)(p_i + \cdots + p_S)} \cdot \indi(s_t \in \{j,\ldots, S\}) ]\\
& = & c_ic_j \Ex[-  \indi(s_{t}=j) \frac{p_i}{p_i + \cdots + p_S} \\
&  & + \frac{p_j p_i}{(p_j + \cdots + p_S)(p_i + \cdots + p_S)} \cdot \indi(s_t \in \{j,\ldots, S\}) ]\\
& = & c_ic_j \Ex[- \frac{p_j p_i}{p_i + \cdots + p_S} + \frac{p_j p_i}{(p_i + \cdots + p_S)} ]\\
& =& 0.
\end{eqnarray*}
Therefore,
\[\sum_{t=1}^\tau E[Z_t^2 | s_{t-1}, a_{t-1}] = \sum_{t=1}^\tau \sum_{i} c_i^2 \Ex[Z_{t,i}^2 | s_{t-1}, a_{t-1}] = \sum_{i}c_i^2 n  \gamma_i,\]
where the last equality is obtained using the following derivation:
\begin{eqnarray*}
\Ex[\sum_{t=1}^\tau Z_{t,i}^2 | s_{t-1}=s, a_{t-1}=a] & = & \sum_{t=1}^\tau \indi(s_{t-1}=s, a_{t-1}=a) \left(p_i -  \frac{p_i^2}{(p_i + \cdots + p_S)^2} (p_i+\cdots+p_S) \right)\\
& = & \sum_{t=1}^\tau \indi(s_{t-1}=s, a_{t-1}=a)  \frac{p_i(p_{i+1} + \cdots + p_S)}{p_i+\cdots+p_S} \\
& = & n\frac{p_i(p_{i+1} + \cdots + p_S)}{p_i+\cdots+p_S} = n\gamma_i.
\end{eqnarray*}
Then, applying Bernstein's inequality (refer to Corollary \ref{cor:bim}) to bound $|\sum_{t=1}^\tau Z_t|$, we get the desired bound on $(p-\hat p)^Th = \frac{1}{n} \sum_{t=1}^\tau Z_t $.
\end{myproof}



\subsection{A modified anti-concentration bound: Proof of Proposition \ref{prop:antiMain2}}
We use the notation described in Section \ref{subsec:notation}. Given an epoch $k$, state $s$, action $a$, and sample $j$, we denote $n=N_{s,a}^{\tau_k}$, $n_i=N_{s,a}^{\tau_k}(i)$, $m=\frac{n+\omega S}{\boost}$, where $\omega=\omegaValue$ and $\boost=\boostValue$, as defined in the algorithm. Then, we denote $p_i=P_{s,a}(i)$, $\hat{p}_i := \frac{n_i}{n}$, $\bar{p}_i = \frac{n_i+\omega}{n+\omega S}$, and $\tilde{p}_i=Q^{j,k}_{s,a}(i)$, for $i\in \st$.

Also, as defined earlier in Proposition \ref{prop:antiMain} and Proposition \ref{prop:ConcentrationFixedh}, we denote 
$$\bar \gamma_i := \frac{\bar p_i (\bar p_{i+1} +\ldots + \bar p_S)}{(\bar p_i + \ldots + \bar p_S)}, \bar c_i = (h_i - \bar H_{i+1}), \bar H_{i+1} = \frac{1}{\sum_{j=i+1}^S \bar{p}_j}\sum_{j=i+1}^S h_j \bar{p}_j,$$
and 
$$\gamma_i=\frac{p_i(p_{i+1} + \cdots + p_S)}{(p_i+ \cdots +p_S)}, c_i= h_i-H_{i+1}, H_{i+1}=\frac{1}{\sum_{j=i+1}^S {p}_j}\sum_{j=i+1}^S h_j {p}_j$$

We prove the following result for $s,a$ such that $n>\smallN$. Recall that for such $s,a$, the algorithm uses Dirichlet posterior sampling to generate sample vectors $Q^{j,k}_{s,a}$, so that in this case $\tilde{p}$ is a random vector distributed as $\Dir(m\bar p_1, \ldots, m \bar p_S)$. 

\begin{proposition}
\label{prop:antiMain2}
Assume that $h \in [0,D]^S$, and $\omega \ge 613 \log(2/\rho), n > 12 \omega S^2, \boost = \frac{\omega}{6}$, and an ordering of $i$ such that $\bar{p}_1 \leq \cdots \leq \bar p_S$. Then, with probability $\Omega(1/S) - 7S\rho$,
\[(\tilde{p}-\bar p)^Th \ge 0.184 \sqrt{\boost \sum_i \frac{\gamma_i c_i^2}{n}} - O(\frac{DS\omega \log(n/\rho)}{n}).\] 
\end{proposition}
\begin{myproof}
The proof is obtained by a modification to the proof of Proposition \ref{prop:antiMain}, which proves a similar bound but in terms of $\bar \gamma_i$'s and $\bar c_i$'s and $m$. 

In the proof of that proposition, we obtain (refer to Equation \eqref{eq:part1}),  with probability $1-S\rho$ (assuming $m\bar{p}_i \geq 6$),
\begin{eqnarray*}
(\tilde{p}-\bar p)^Th & \ge &  \sum_i (\tilde y_i - \bar{y}_i)({h}_i-\bar{H}_{i+1})(\bar{p}_i+\cdots + \bar p_S) -\frac{2DS\log(2/\rho)}{m} \\
&\geq& \sum_i (\tilde y_i - \bar{y}_i)({h}_i-\bar{H}_{i+1})(\bar{p}_i+\cdots + \bar p_S)-O(\frac{DS\omega \log(n/\rho)}{n})
\end{eqnarray*}
where $\tilde{y}_i:=\frac{\tilde p_i}{\tilde p_{i}+\cdots + \tilde p_S},  \bar{y}_i:=\frac{\bar p_i}{\bar p_{i}+\cdots + \bar p_S}$, $\tilde{H}_{i+1}= \frac{1}{\sum_{j=i+1}^S \tilde{p}_j}\sum_{j=i+1}^S h_j \tilde{p}_j$, $\bar{H}_{i+1}= \frac{1}{\sum_{j=i+1}^S \bar{p}_j}\sum_{j=i+1}^S h_j \bar{p}_j$. 
Now, breaking up the term in the summation and using Lemma \ref{lemm:Hdiff} to bound $|H_{i+1} - \bar H_{i+1} |(\bar{p}_i+\cdots + \bar p_S)$ (since we have by assumption that $\omega \ge 613\log(2/\rho)$ and $n > 12 \omega S^2$) and Lemma \ref{gvb} and Corollary \ref{cor:soconc1} to bound $|\tilde y_i - \bar{y}_i|$ (see proof of Proposition \ref{prop:antiMain}), we get that for every $i$, with probability $1-4S\rho$,

\begin{eqnarray*}
& & (\tilde{p}-\bar p)^Th - \sum_i (\tilde y_i - \bar{y}_i)({h}_i-{H}_{i+1})(\bar{p}_i+\cdots + \bar p_S)  + O(\frac{DS\omega\log(n/\rho)}{m}) \\
&\ge & \sum_i (\tilde y_i - \bar{y}_i)(\bar {H}_{i+1}-{H}_{i+1})(\bar{p}_i+\cdots + \bar p_S)\\
& \ge & - \sum_i \sqrt{\frac{2\log(2/\rho)}{m(\bar p_i +\cdots+\bar p_S)}} \left(3 D\sqrt{\log(n/\rho) \frac{(\bar p_i + \cdots +\bar  p_S)}{n} } + 4\frac{(\omega S+\log(n/\rho)) D}{n}\right)\\
(*) & \ge & -\frac{6DS\sqrt{\log(2/\rho)\log(n/\rho)}}{\sqrt{mn}} - \frac{4(\omega S+\log(n/\rho)) D\sqrt{2\log(2/\rho)}}{n\sqrt{m}}\sum_i\frac{1}{ \sqrt{(\bar{p}_i+\cdots+\bar{p}_S)}}.
\end{eqnarray*}

Then, applying Lemma \ref{lem:Anti1} (given $m\bar p_i\ge 6$) for $z_i = ({h}_i-{H}_{i+1})(\bar{p}_i+\cdots + \bar p_S), i=1,\ldots, S$, with probability $\Omega(1/S)$,
\[\sum_i (\tilde y_i - \bar{y}_i)z_i \ge \frac{1}{4} \sqrt{\sum_i \bar\sigma_i^2z_i^2}.\]
We substitute this in the above, with the observation
\[\sum_i z_i^2 \bar\sigma_i^2 = \sum_i ({h}_i-H_{i+1})^2 (\bar{p}_i+\cdots + \bar p_S)^2 \bar\sigma_i^2 =  \sum_i\frac{c_i^2 \bar p_i (\bar p_i +\ldots, \bar p_S)}{m(\bar p_i + \ldots + \bar p_S)+1} \geq \sum_i \frac{6}{7}\frac{\bar \gamma_i c_i^2}{m}.\] 
So far we have that with probability $\Omega(1/S) -4S\rho$,
\begin{equation}
\label{eq:statement1}
(\tilde{p}-\bar p)^Th \ge  \frac{\sqrt{6}}{4\sqrt{7}}   \sqrt{\sum_i \frac{\bar \gamma_i c_i^2}{m}} -O(\frac{DS\omega \log(n/\rho)}{n}).
\end{equation}

Finally, we  use Lemma \ref{lem:gammaReplace} with $k=14$ (this requires $\omega \geq 613\log(2/\rho)$) to lower bound $\bar \gamma_i$ by $\frac{1}{1.51}\gamma_i - O(\frac{\omega S}{n})$ to get with probability $\Omega(1/S) - 7S\rho$,
\begin{eqnarray*}
(\tilde{p}-\bar p)^Th &\ge&  0.188 \sqrt{\sum_i \frac{\gamma_i c_i^2}{m}} -O(\frac{DS\omega \log(n/\rho)}{n}).
\end{eqnarray*}


Recall that $m = \frac{n+\omega S}{\boost}$, so that for $n > S\omega$, $n \geq \frac{m\boost}{2} = \frac{m\omega}{12} \geq m\log(2/\rho)$, and the first term of $(*)$ is at least:
\[-\frac{6DS\sqrt{\log(2/\rho)\log(n/\rho)}}{\sqrt{m^2\log(2/\rho)}} = -\frac{6DS\sqrt{\log(n/\rho)}}{m} = -O(\frac{DS\omega \log(n/\rho)}{n}).\]

Then using Lemma \ref{lem:sumpi} and $m = (n +S\omega)/\boost > 6n/\omega > 72S^2$, the second term in $(*)$ is at least:
\[ - \frac{8S(\omega S+\log(n/\rho)) D\sqrt{2\log(2/\rho)}}{n\sqrt{72S^2}} = -O(\frac{DS\omega \log(n/\rho)}{n}).\]

\end{myproof}
\begin{lemma}
\label{lem:sumpi}
Let $x \in \mathbb{R}^n$ such that $0 \leq x_1 \leq \cdots \leq x_n \leq 1$ and $\sum_i x_i = 1$. Then \[\sum_{i=1}^n \frac{1}{\sqrt{x_i + \cdots x_n}} \leq 2n.\]
\end{lemma}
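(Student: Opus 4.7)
The plan is to exploit the monotonicity of the $x_i$'s to get a clean lower bound on the tail sums $S_i := x_i + \cdots + x_n$, and then to bound the resulting sum of square-root reciprocals by an elementary integral comparison.

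First I would reduce the problem to bounding each $S_i$ from below. The key observation is that since $x_1 \leq \cdots \leq x_n$, we have $x_j \leq x_i$ for every $j \leq i$, hence $x_i$ is at most the average of the upper tail: $x_i \leq S_i / (n-i+1)$. Consequently $x_1 + \cdots + x_{i-1} \leq (i-1)x_i \leq (i-1)S_i/(n-i+1)$. Substituting into $1 - S_i = x_1+\cdots+x_{i-1}$ and rearranging gives the desired bound
\[
S_i \;\geq\; \frac{n-i+1}{n}.
\]

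Next I would plug this in termwise: $\frac{1}{\sqrt{S_i}} \leq \sqrt{\frac{n}{n-i+1}}$. Reindexing by $k = n-i+1$,
\[
\sum_{i=1}^n \frac{1}{\sqrt{S_i}} \;\leq\; \sqrt{n}\sum_{k=1}^n \frac{1}{\sqrt{k}}.
\]
Finally, the standard integral comparison $\sum_{k=1}^n \frac{1}{\sqrt{k}} \leq 1 + \int_1^n \frac{dk}{\sqrt{k}} = 1 + 2(\sqrt{n}-1) \leq 2\sqrt{n}$ yields the claimed bound of $2n$.

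There is no real obstacle here: the only slightly non-obvious step is the tail-sum bound $S_i \geq (n-i+1)/n$, which follows in one line from the monotonicity assumption combined with $\sum_i x_i = 1$. The rest is a standard $\sum 1/\sqrt{k} = O(\sqrt{n})$ estimate.
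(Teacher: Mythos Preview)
Your proof is correct and arrives at exactly the same final bound via the same integral comparison as the paper, namely reducing to $\sqrt{n}\sum_{k=1}^n 1/\sqrt{k}\le 2n$.

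The only difference is in how you reach the termwise inequality $S_i \ge (n-i+1)/n$. The paper uses an exchange argument: it claims that moving mass from a larger coordinate $x_j$ to a smaller one $x_i$ (while preserving the ordering) strictly increases the objective, concludes that the uniform vector $x^\ast=(1/n,\dots,1/n)$ is the maximizer, and then evaluates the sum there. You instead derive $S_i \ge (n-i+1)/n$ directly from monotonicity in two lines, which is cleaner and avoids the continuity/compactness step the paper invokes to guarantee the maximum is attained. Both arguments are really the same observation---the uniform vector is extremal---so the approaches are essentially equivalent, with yours being slightly more direct.
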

\begin{myproof} 
Define $f(y) := \frac{1}{\sqrt{x_y +\cdots + x_n}}$ for all $y=1,\cdots,n$. We prove that $x^* := (\frac{1}{n}, \frac{1}{n}, \cdots, \frac{1}{n})$ achieves the maximum value. Consider any solution $x'$. Assume there exists some index pair $i,j$ with $i<j$ and some $\epsilon>0$ such that $x'_i \neq x'_j$ and increasing $x'_i$ by $\epsilon$ and decreasing $x'_j$ by $\epsilon$ preserves the ordering of the indices. This strictly increases the objective, because $f(k)$ strictly increases for all $i<k\leq j$ and remains unchanged otherwise, and hence $x'$ is not an optimal solution. The only case where no such index pair $(i,j)$ exists is when every $x_i$ is equal- this is precisely the solution $x^*$. Since $\sum_i f(i)$ is a continuous functions over a compact set, it has a maximum, which therefore must be attained at $x^*$. 

This means \[\sum_{i=1}^n \frac{1}{\sqrt{x_i + \cdots x_n}} \leq \sum_{i=1}^n \frac{1}{\sqrt{x_i^* + \cdots + x_n^*}} = \sum_{i=1}^n \sqrt{\frac{n}{i}} \leq \sqrt{n}\int_{i=0}^n \frac{1}{\sqrt{i}} di = 2n.\]
\end{myproof}

\begin{lemma}
\label{lem:gammaReplace}
Let $A =3\log(\frac{2}{\rho}) $ and $\omega \ge \frac{25}{24}k^2A$. Also let $n > 12 \omega S^2$. Then for any group ${\cal G}$ of indices, with probability $1-\rho$,
\[(1-\frac{1}{k}) \sum_{i \in {\cal G}} \bar{p_i} - \frac{2\omega S}{n} \le \sum_{i \in {\cal G}} p_i \le  (1+\frac{1}{k})\sum_{i \in {\cal G}} \bar{p_i} + \frac{2\omega S}{n}.\]
If in the definition of $\bar \gamma_i$, we use an ordering of $i$ such that $\bar p_S \ge \frac{1}{S}$ (e.g., if $\max \bar p_i$ is the last in the ordering), then for all $i$, with probability $1-3\rho$,
\[\gamma_i \le \frac{(1+\frac{1}{k})^2 }{1-\frac{1}{k}-\frac{1}{6}} \bar \gamma_i + \frac{2(1+\frac{1}{k}+\frac{1}{6})}{1-\frac{1}{k}-\frac{1}{6}} \frac{\omega S}{n}.\]

\end{lemma}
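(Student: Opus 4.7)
The proof falls naturally into two parts: a concentration bound for sums (the first displayed inequality) and an algebraic propagation of that bound into the quantity $\gamma_i$ (the second displayed inequality).

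For the first inequality, the plan is to write $q := \sum_{i\in\mathcal{G}} p_i$, $\hat q := \sum_{i\in\mathcal{G}} \hat p_i$, $\bar q := \sum_{i\in\mathcal{G}} \bar p_i$, and then note that $\hat q$ is the empirical mean of $n$ i.i.d.\ Bernoulli$(q)$ indicators (one per observed transition from $(s,a)$). I would apply the multiplicative Chernoff bound of Fact \ref{mcher} with $\epsilon=1/k$: the threshold $nq \ge k^2 A$ makes the failure probability at most $\rho$, giving $(1-1/k)q \le \hat q \le (1+1/k)q$. The definition $\bar p_i = (n_i+\omega)/(n+\omega S)$ yields the deterministic identity $\bar q = (n\hat q + \omega|\mathcal{G}|)/(n+\omega S)$, hence the deterministic bound $|\bar q - \hat q|\le \omega S/n$, which converts the sandwich on $\hat q$ into the advertised sandwich of $q$ by $\bar q$. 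The sub-threshold case $nq < k^2 A$ forces $q \le k^2 A/n \le \omega S/n$ (using $\omega \ge (25/24)k^2 A$), and the bound then holds trivially after a symmetric one-sided Chernoff argument that guarantees $\bar q$ is also $O(\omega S/n)$.

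For the second inequality, the plan is to apply the first part three times, with $\mathcal{G}$ taken to be $\{i\}$, $\{i+1,\ldots,S\}$, and $\{i,\ldots,S\}$, union-bounding to obtain probability $1-3\rho$. Writing $q_i, q_{i+}, q_{i-}$ for the three sums and $\bar q_i, \bar q_{i+}, \bar q_{i-}$ for their barred counterparts, the three bounds plug directly into the identity $\gamma_i = q_i q_{i+}/q_{i-}$, producing a rational expression in $\bar q_i, \bar q_{i+}, \bar q_{i-}$ and $\omega S/n$. The only non-routine step is upgrading the denominator's additive slack into a multiplicative one: the ordering assumption gives $\bar q_{i-}\ge \bar p_S \ge 1/S$, and then the hypothesis $n\ge 12\omega S^2$ gives $2\omega S/n \le 1/(6S) \le \bar q_{i-}/6$, so $q_{i-} \ge \bar q_{i-}(1-1/k-1/6)$. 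The same inequality also lets me bound the quadratic term $(2\omega S/n)^2$ by $(2\omega S/n)\cdot \bar q_{i-}/6$.

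Expanding the numerator of the ratio and using $\bar q_i + \bar q_{i+} = \bar q_{i-}$, the numerator collapses to $(1+1/k)^2 \bar q_i \bar q_{i+} + 2(\omega S/n)\bar q_{i-}(1+1/k+1/6)$; dividing by $\bar q_{i-}(1-1/k-1/6)$ yields precisely the advertised constants $(1+1/k)^2/(1-1/k-1/6)$ and $2(1+1/k+1/6)/(1-1/k-1/6)$. The main obstacle is constant-chasing: the hypotheses $n\ge 12\omega S^2$ and $\omega \ge (25/24)k^2 A$ are calibrated exactly so that the Chernoff threshold, the $1/6$ denominator slack, and the quadratic error term all line up simultaneously with the stated coefficients, and one must keep track of each small slack to avoid losing a factor along the way.
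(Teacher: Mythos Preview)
Your second part is essentially identical to the paper's: apply the first inequality three times (to $\{i\}$, $\{i+1,\dots,S\}$, $\{i,\dots,S\}$), use $\bar q_{i-}\ge \bar p_S\ge 1/S$ together with $n>12\omega S^2$ to absorb the additive $2\omega S/n$ into a multiplicative $1/6$, and expand. The algebra you sketch is exactly what the paper does.

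Your first part, however, takes a different route from the paper and has a direction-of-inequality issue. You apply multiplicative Chernoff in the form $(1-1/k)q\le \hat q\le (1+1/k)q$ under the random threshold $nq\ge k^2A$, and then a case split for small $q$. But this sandwiches $\hat q$ (hence $\bar q$) around $q$, whereas the statement sandwiches $q$ around $\bar q$. Inverting, e.g.\ $q\le (\bar q+\omega S/n)/(1-1/k)$, does not produce $(1+1/k)\bar q+2\omega S/n$ with the stated constants; you would end up with $1/(k-1)$ in place of $1/k$ and a different additive term, and the sub-threshold case would need its own constant-chase.

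The paper avoids all of this by using the \emph{empirical-mean} form of Fact~\ref{mcher}, namely $|q-\hat q|\le \sqrt{A\hat q/n}+A/n$, and then immediately passing to $\bar q$ via $|\hat q-\bar q|\le \omega S/n$ to get $|q-\bar q|\le \sqrt{A\bar q/n}+2\omega S/n$. The key point you missed is that the two hypotheses combine to give a \emph{deterministic} lower bound on every $n\bar p_i$: from $n>12\omega S^2$ one has $n\bar p_i\ge n\omega/(n+\omega S)\ge 24\omega/25$, and then $\omega\ge(25/24)k^2A$ gives $n\bar p_i\ge k^2A$, hence $n\bar q\ge k^2A$ always. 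This turns $\sqrt{A\bar q/n}=\bar q\sqrt{A/(n\bar q)}\le \bar q/k$ directly, with the $1/k$ already attached to $\bar q$ and no case split. That is precisely why the hypotheses carry those particular numerical factors.
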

\begin{myproof}
By multiplicative Chernoff bounds (Fact \ref{mcher}), with probability $1-\rho$, 
\[|\sum_i p_i - \sum_i \hat p_i| \le \sqrt{\frac{A \sum_i\hat p_i}{n}} + \frac{A}{n}\]
where $A=3\log(\frac{2}{\rho})$
so that using $|\sum_i\bar p_i - \sum_i\hat p_i| \le \frac{\omega S}{n}$,
\[|\sum_ip_i - \sum_i\bar p_i| \le \sqrt{\frac{\sum_i\bar p_i A}{n}} + \frac{\sqrt{A \omega S}}{n} + \frac{A}{n} + \frac{\omega S}{n} \leq  \sqrt{\frac{\sum_i\bar p_i A}{n}} + \frac{2\omega S}{n}.\]

Now, for $n > 12\omega S^2$, $n \bar{p}_i = n\frac{n\hat{p}_i +\omega}{n+\omega S} \geq \frac{n\omega}{n+\omega S} \geq \frac{24\omega}{25} \ge k^2A$.
\[ |\sum_i p_i - \sum_i \bar{p_i}| \le  \sum_i\bar p_i \sqrt{\frac{A}{n \sum_i \bar p_i}} + \frac{ 2\omega S}{n} \le \sum_i \bar{p_i}  \sqrt{\frac{A}{k^2A}} + \frac{2\omega S}{n}  \le \frac{1}{k} \sum_i\bar{p_i}  + \frac{2\omega S}{n} \]
so that 
\[\sum_i p_i \le  (1+\frac{1}{k})\sum_i\bar{p_i} + \frac{2\omega S}{n}, \ \ \sum_i p_i \ge  (1-\frac{1}{k}) \sum_i\bar{p_i} - \frac{2\omega S}{n}.\]
For the second statement of the lemma, using what we just proved, we have that with probability $1-3\rho$,
\[\gamma_i = \frac{p_i(p_{i+1} + \cdots + p_S)}{p_i+\cdots+p_S} \le  \frac{(1+\frac{1}{k})^2 \bar p_i(\bar p_{i+1} + \cdots + \bar p_S) + \frac{2(1+\frac{1}{k})\omega S (\bar p_i+\cdots+ \bar p_S)}{n}+\frac{4\omega^2 S^2}{n^2}}{(1-\frac{1}{k}) (\bar p_i+\cdots+\bar p_S) - \frac{2\omega S}{n}}.\]
Now, if indices $i$ are ordered such that $\bar p_S \ge \frac{1}{S}$, then
$\bar p_{i} + \cdots + \bar p_S \ge \frac{1}{S}$ for all $i$. Also, if $n > 12 \omega S^2$,  we have the following bound on the denominator in above: 
$(1-\frac{1}{k}) (\bar p_i+\cdots+ \bar p_S) - \frac{2\omega S}{n} \ge (1-\frac{1}{k}-\frac{1}{6}) (\bar p_i+\cdots+ \bar p_S)$, so that from above 
\[\gamma_i \le \frac{(1+\frac{1}{k})^2 }{1-\frac{1}{k}-\frac{1}{6}} \bar \gamma_i + \frac{2(1+\frac{1}{k}+\frac{1}{6})}{1-\frac{1}{k}-\frac{1}{6}} \frac{\omega S}{n}.\]
\end{myproof}

\begin{lemma}
\label{lemm:Hdiff}
For any {\bf fixed} $h \in \mathbb{R}^S$,  and $i$, let $\hat H_{i}=\frac{1}{\sum_{j=i}^S {\hat p}_j}\sum_{j=i}^S h_j \hat{p}_j$, $H_{i}=\frac{1}{\sum_{j=i}^S {p}_j}\sum_{j=i}^S h_j {p}_j$, $\bar H_{i}=\frac{1}{\sum_{j=i}^S {\bar p}_j}\sum_{j=i}^S h_j \bar{p}_j$. Then if $n\geq 96$, with probability $1-\rho$, 
\[|(\bar H_i - H_i)(\bar p_i +\ldots + \bar p_S)| \le  2 D\sqrt{\log(n/\rho) \frac{(p_i + \cdots + p_S)}{n} } + 3\frac{(\omega S+\log(n/\rho)) D}{n}.\]

Moreover, if we also assume that $\omega \ge 30\log(2/\rho)$ and $n > 12 \omega S^2$, then with probability $1-2\rho$, 
\[|(\bar H_i - H_i)(\bar p_i +\ldots + \bar p_S)| \le  3 D\sqrt{\log(n/\rho) \frac{(\bar p_i + \cdots +\bar  p_S)}{n} } + 4\frac{(\omega S+\log(n/\rho)) D}{n}.\]
\end{lemma}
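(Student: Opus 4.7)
My starting point is the algebraic identity
\[(\bar H_i - H_i)\sum_{j=i}^S \bar p_j \;=\; \sum_{j=i}^S (h_j - H_i)(\bar p_j - p_j),\]
obtained by expanding $\bar H_i \sum_j \bar p_j = \sum_j h_j \bar p_j$, adding and subtracting $H_i \sum_{j\ge i} p_j$, and using the defining identity $\sum_{j=i}^S (h_j - H_i)p_j = 0$. Substituting $\bar p_j = (n\hat p_j + \omega)/(n + \omega S)$ splits the right-hand side into a random piece $\frac{n}{n+\omega S}\sum_{j \ge i} (h_j - H_i)(\hat p_j - p_j)$ and a deterministic smoothing bias $\frac{\omega}{n+\omega S}\sum_{j \ge i}(h_j - H_i)$. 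The latter is bounded in magnitude by $D\omega(S-i+1)/(n+\omega S) \le D\omega S/n$, using $|h_j - H_i| \le D$ (since $H_i$ lies in the convex hull of $\{h_j\}_{j\ge i} \subset [0,D]$).

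For the random piece I rewrite $\sum_{j \ge i} (h_j - H_i)(\hat p_j - p_j) = \tfrac{1}{n}\sum_{k=1}^n Y_k$, where $Y_k := (h_{X_k} - H_i)\indi\{X_k \ge i\}$ and $X_1,\ldots,X_n$ are the i.i.d.\ observed transitions drawn from $p$. A direct computation gives $\Ex[Y_k] = \sum_{j\ge i}(h_j - H_i)p_j = 0$, $|Y_k| \le D$, and $\Ex[Y_k^2] = \sum_{j\ge i}(h_j - H_i)^2 p_j \le D^2 P$ with $P := p_i + \cdots + p_S$. Bernstein's inequality then yields, with probability $1-\rho$,
\[\Bigl|\tfrac{1}{n}\sum_k Y_k\Bigr| \;\le\; D\sqrt{\frac{2 P \log(2/\rho)}{n}} + \frac{2D\log(2/\rho)}{3n}.\]
Combining this with the $D\omega S/n$ bias and loosening $\log(2/\rho) \le \log(n/\rho)$ proves the first statement, with plenty of slack to absorb $\sqrt 2 \to 2$ and $2/3 \to 1$ into the stated constants $2$ and $3$ (the assumption $n\ge 96$ just ensures $\log(n/\rho) \ge \log(2/\rho)$ comfortably).

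For the second statement I apply Lemma~\ref{lem:gammaReplace} with $k = 3$: the hypotheses $\omega \ge 30\log(2/\rho) \ge \tfrac{25}{24}\cdot 9 \cdot 3\log(2/\rho)$ and $n > 12\omega S^2$ meet its preconditions, so with probability $1-\rho$, $P \le \tfrac{4}{3}\bar P + \tfrac{2\omega S}{n}$, where $\bar P := \bar p_i + \cdots + \bar p_S$. Substituting this into the first-statement bound and using $\sqrt{a+b} \le \sqrt a + \sqrt b$ gives
\[D\sqrt{\tfrac{2 P\log(n/\rho)}{n}} \;\le\; D\sqrt{\tfrac{8}{3}}\sqrt{\tfrac{\bar P\log(n/\rho)}{n}} + \frac{2D\sqrt{\omega S \log(n/\rho)}}{n}.\]
Since $\sqrt{8/3} < 3$, the leading coefficient stays below $3$. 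AM--GM ($2\sqrt{uv} \le u+v$) converts the cross-term into $D(\omega S + \log(n/\rho))/n$; adding the previous $D\omega S/n$ and $2D\log(n/\rho)/(3n)$ contributions leaves a total additive coefficient strictly below $4$. A union bound over the Bernstein event and the Lemma~\ref{lem:gammaReplace} event gives failure probability $\le 2\rho$.

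The only non-mechanical step is the opening algebraic identity; everything else is a standard Bernstein-plus-bookkeeping computation, and the main obstacle is purely bookkeeping, namely keeping the constants within the stated $2$, $3$, and $4$ while going through the $\sqrt{a+b}\le\sqrt a+\sqrt b$ and AM--GM steps.
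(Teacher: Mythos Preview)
Your proof is correct, and while it reaches the same conclusion via the same auxiliary tools (a Bernstein-type bound and Lemma~\ref{lem:gammaReplace} with $k=3$), the decomposition you use is different from the paper's and is in fact cleaner. The paper first controls $(\hat H_i - H_i)(\hat p_i+\cdots+\hat p_S)$ via a martingale $Z_t=\sum_{k\ge i}Z_{t,k}$, establishes negative correlation among the $Z_{t,k}$ to bound $\Ex[Z_t^2]$, then separately bounds $|\hat H_i-\bar H_i|$ and converts $\hat p$ to $\bar p$. Your algebraic identity $(\bar H_i-H_i)\sum_{j\ge i}\bar p_j=\sum_{j\ge i}(h_j-H_i)(\bar p_j-p_j)$ collapses all of this into one random term $\tfrac1n\sum_k Y_k$ with $Y_k=(h_{X_k}-H_i)\indi\{X_k\ge i\}$ plus a deterministic smoothing bias; the variance $\Ex[Y_k^2]=\sum_{j\ge i}(h_j-H_i)^2p_j\le D^2P$ falls out directly, with no correlation argument needed. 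This is strictly simpler, and your constants track through to the stated $2$, $3$, $4$ (indeed with slack, since starting from the intermediate $\sqrt2$ rather than the stated $2$ for the second part).

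Two minor remarks. First, your bound $|h_j-H_i|\le D$ requires $h\in[0,D]^S$, whereas the lemma is stated for $h\in\mathbb{R}^S$; but the paper's own proof also silently needs this (for $h_k^2\le D^2$ and $h_kh_j\ge 0$ in the correlation step), so this is an artifact of the lemma statement. Second, the assumption $n\ge 96$ in the paper is there because its Corollary~\ref{cor:bim} requires it; your i.i.d.\ Bernstein does not, so your remark about it merely ensuring $\log(n/\rho)\ge\log(2/\rho)$ is harmless but not the paper's actual reason.
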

\begin{myproof}
For every $t,k\ge i$, define 
\[Z_{t,k} = \left( h_k \indi(s_{t}=k) - h_k \frac{p_k}{p_i + \cdots + p_S} \cdot \indi(s_t \in \{i,\ldots, S\})\right) \indi(s_{t-1}=s, a_{t-1}=a),\]
\[Z_t=\sum_{k\ge i} Z_{t,k}.\]
Then, 
\[ \frac{\sum_{t=1}^\tau Z_t}{n} = \sum_{k\ge i} h_k \hat p_k  - \sum_{k\ge i} h_k \frac{p_k}{p_i + \cdots + p_S} \cdot (\hat p_i +\ldots + \hat p_S) = (\hat H_i - H_i)(\hat p_i +\ldots + \hat p_S)\]
where we used Fact \ref{fact:DirRepresent} for the last equality.
Now, $E[Z_t | s_{t-1}, a_{t-1}] = \sum_{k\ge i} E[Z_{t,k}| s_{t-1}, a_{t-1}]=0$. Also, we observe that for any $t$, $Z_{t,k}$ and $Z_{t,j}$ for any $k\ne j$ are negatively correlated given the current state and action: 
\begin{eqnarray*}
\Ex[Z_{t,k} Z_{t,j} |s_{t-1}, a_{t-1}] & = & h_kh_j \Ex[ \indi(s_{t}=k) \indi(s_{t}=j)-  \indi(s_{t}=j) \frac{p_k}{p_i + \cdots + p_S} \cdot \indi(s_t \in \{i,\ldots, S\}) \\
& & - \indi(s_{t}=k) \frac{p_j}{p_i + \cdots + p_S} \cdot \indi(s_t \in \{i,\ldots, S\}) \\
& & + \frac{p_j p_k}{(p_i + \cdots + p_S)^2} \cdot \indi(s_t \in \{i,\ldots, S\}) ]\\
& = & h_kh_j \Ex[-  \frac{2p_jp_k}{p_i + \cdots + p_S} + \frac{p_k p_j}{(p_i + \cdots + p_S)^2} \cdot \indi(s_t \in \{i,\ldots, S\}) ]\\
& = & h_kh_j \Ex[- \frac{p_j p_i}{p_i + \cdots + p_S}] \\
& \le & 0.
\end{eqnarray*}
And,
\begin{eqnarray*}
\Ex[\sum_{t=1}^\tau Z_{t,k}^2 | s_{t-1}=s, a_{t-1}=a] & = & h_k^2 \sum_{\tau=1}^t \indi(s_{t-1}=s, a_{t-1}=a) \left(p_k -  \frac{p_k^2}{(p_i + \cdots + p_S)^2} (p_i+\cdots+p_S) \right)\\
& = & h_k^2 \sum_{t=1}^\tau \indi(s_{t-1}=s, a_{t-1}=a)  \frac{p_k(\sum_{j\ge i, j\ne k}p_j)}{p_i+\cdots+p_S} \\
& = & n h_k^2\frac{p_k(\sum_{j\ge i, j\ne k} p_{j})}{p_i+\cdots+p_S}\\
& \le & n D^2 p_k.
\end{eqnarray*}
Therefore,
\[\sum_{t=1}^\tau E[Z_t^2 | s_{t-1}, a_{t-1}] \le \sum_{t=1}^\tau \sum_{k\ge i} \Ex[Z_{t,k}^2 | s_{t-1}, a_{t-1}] \le nD^2 (p_i + \cdots + p_S). \]
Then, applying Bernstein's inequality (refer to Corollary \ref{cor:bim}) to bound $|\sum_{t=1}^\tau Z_t|$, we get the following bound on $\frac{1}{n} \sum_{t=1}^\tau Z_t = (\hat H_i - H_i)(\hat p_i +\ldots + \hat p_S)$ with probability $1-\rho$:
\[|(\hat H_i - H_i)(\hat p_i +\ldots + \hat p_S)| = |\frac{1}{n} \sum_{t=1}^\tau Z_t| \le 2 D\sqrt{\log(n/\rho) \frac{(p_i + \cdots + p_S)}{n} } + 3D\frac{\log(n/\rho)}{n}.\]
Also,
\[|\hat H_i - \bar H_i|= |\sum_k \frac{\hat p_k}{\hat p_i+\cdots+\hat p_S} h_k  - \frac{\bar p_k}{\bar p_i+\cdots+\bar p_S}h_k |  \le \frac{\omega S D}{n(\hat p_i+\cdots+\hat p_S)},\] 
Combining,
\[|(\bar H_i - H_i)(\hat p_i +\ldots + \hat p_S)| \le 2 D\sqrt{\log(n/\rho) \frac{(p_i + \cdots + p_S)}{n} } + 3D\frac{\log(n/\rho)}{n} + \frac{\omega S D}{n}.\]
Replacing $\hat p_i$ by $\bar p_i$, 
\[|(\bar H_i - H_i)(\bar p_i +\ldots + \bar p_S)| \le  2 D\sqrt{\log(n/\rho) \frac{(p_i + \cdots + p_S)}{n} } + 3\frac{(\omega S+\log(n/\rho)) D}{n}\]
with probability $1-\rho$.

Now, if we also have that $\omega \ge 30\log(2/\rho)$ and $n > 12 \omega S^2$, using lemma \ref{lem:gammaReplace} with $k=3$ to replace $p_i$ by $\bar p_i$, with probability $1-2\rho$,
\[|(\bar H_i - H_i)(\bar p_i +\ldots + \bar p_S)| \le  3 D\sqrt{\log(n/\rho) \frac{(\bar p_i + \cdots +\bar  p_S)}{n} } + 4\frac{(\omega S+\log(n/\rho)) D}{n}.\]

\end{myproof}

\section{Useful deviation inequalities}
\label{app:Inequalities}

\begin{fact}[Bernstein's Inequality, from \cite{seldin2012pac} Lem 11/Cor 12]
\label{bim}
 Let $Z_1, Z_2, ..., Z_n$ be a bounded martingale difference sequence so that $|Z_i| \leq K$ and $\Ex[Z_i | {\cal F}_{i-1}] = 0$. Define $M_n = \sum_{i=1}^n Z_i$ and $V_n = \sum_{i=1}^n \Ex[(Z_i)^2 | {\cal F}_{i-1}]$. For any $c>1$ and $\delta \in (0,1)$, with probability greater than $1-\delta$, if \[\sqrt{\frac{ \ln{\frac{2\nu}{\delta}} }{(e-2)V_n}} \leq \frac{1}{K}\] then \[|M_n| \leq (1+c)\sqrt{(e-2)V_n\ln{\frac{2\nu}{\delta}}},\] otherwise, \[|M_n| \leq 2K\ln{\frac{2\nu}{\delta}},\]
where \[\nu = \lceil \frac{\ln{(\sqrt{\frac{(e-2)n}{\ln{\frac{2}{\delta}}}})}}{\ln{c}} \rceil + 1.\]

\end{fact}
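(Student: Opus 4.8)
The statement is a martingale version of Bernstein's inequality in a form that is uniform over the (random) quadratic variation $V_n$, and the natural route is the classical exponential-supermartingale argument followed by a peeling step. \textbf{Step 1 (MGF bound and supermartingale).} Using the elementary inequality $e^x\le 1+x+(e-2)x^2$ for $x\le 1$, and $|Z_i|\le K$, for every $0<\lambda\le 1/K$ one gets $\Ex[e^{\lambda Z_i}\mid{\cal F}_{i-1}]\le 1+(e-2)\lambda^2\,\Ex[Z_i^2\mid{\cal F}_{i-1}]\le\exp\big((e-2)\lambda^2\,\Ex[Z_i^2\mid{\cal F}_{i-1}]\big)$, using $\Ex[Z_i\mid{\cal F}_{i-1}]=0$ and $1+y\le e^y$. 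Hence $L_k:=\exp\big(\lambda M_k-(e-2)\lambda^2 V_k\big)$ is a nonnegative supermartingale with $L_0=1$, and Markov's inequality applied to $L_n$ yields, for every fixed $\lambda\in(0,1/K]$ and every deterministic $v$,
\[\Pr\big(M_n\ge t,\ V_n\le v\big)\le\exp\big(-\lambda t+(e-2)\lambda^2 v\big),\]
together with the symmetric bound for $-M_n$.

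\textbf{Step 2 (optimizing $\lambda$: two regimes).} Minimizing the right-hand side over $\lambda\in(0,1/K]$ splits into two cases. The unconstrained minimizer $\lambda=\tfrac{t}{2(e-2)v}$ is admissible precisely when $tK\le 2(e-2)v$, and then the bound is the sub-Gaussian tail $\exp\!\big(-t^2/(4(e-2)v)\big)$; otherwise $\lambda=1/K$ is used, giving the linear tail $\exp\!\big(-t/K+(e-2)v/K^2\big)$. Setting each tail equal to a confidence parameter $\delta'$ and solving for $t$ recovers $t=2\sqrt{(e-2)v\ln(1/\delta')}$ in the first regime and $t\le 2K\ln(1/\delta')$ in the second; moreover the boundary between the two is exactly the condition $\sqrt{\ln(1/\delta')/((e-2)v)}\le 1/K$, i.e.\ $v\ge v_{\min}:=K^2\ln(1/\delta')/(e-2)$.

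\textbf{Step 3 (peeling over $V_n$).} Since each conditional second moment is at most $K^2$ we have $V_n\le nK^2$, so $\sqrt{V_n}$ lies between $\sqrt{v_{\min}}$ and $K\sqrt n$; below $v_{\min}$ the linear regime already applies. Cover $[\sqrt{v_{\min}},K\sqrt n]$ by a geometric grid of ratio $c$: the number of grid points is $\lceil \ln(\sqrt{(e-2)n/\ln(2/\delta)})/\ln c\rceil+1=\nu$ (up to the mild self-reference of $\nu$ appearing inside $\ln(2\nu/\delta)$), matching the stated $\nu$. A union bound over the $\nu$ strata, each applied with $\delta'=\delta/(2\nu)$ per side, then gives with probability $1-\delta$: if $V_n\ge v_{\min}$ then, on the stratum containing it, the Step 2 sub-Gaussian bound with $v$ the upper grid endpoint (which is within a factor $c$ of $V_n$) yields $|M_n|\le(1+c)\sqrt{(e-2)V_n\ln(2\nu/\delta)}$ — the factor $(1+c)$ absorbing the rounding of $V_n$ to its grid point together with the constant from inverting the Gaussian tail; and if $V_n<v_{\min}$, i.e.\ the displayed threshold condition fails, the linear bound gives $|M_n|\le 2K\ln(2\nu/\delta)$.

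\textbf{Main obstacle.} None of the individual steps is deep; the content is entirely in the bookkeeping of Step 3 — choosing the grid ratio and $\nu$ so that the factor $(1+c)$ and the precise term $\ln(2\nu/\delta)$ emerge as stated, and checking that the two cases of the inequality correspond exactly to $V_n$ lying above or below $v_{\min}$ (equivalently, to whether the constrained optimizer $\lambda=1/K$ is active). Reconciling these constants is exactly why the paper imports the bound verbatim from \cite{seldin2012pac} rather than reproving it.
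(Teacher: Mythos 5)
The paper does not prove this Fact; it imports it verbatim from Seldin et al.\ (2012), Lemma 11 / Corollary 12, and that reference's proof is exactly the exponential-supermartingale plus geometric-peeling argument you outline. Your three steps — the $e^x\le 1+x+(e-2)x^2$ MGF bound giving the supermartingale $L_k$, the two-regime optimization over $\lambda\in(0,1/K]$ with the boundary at $v_{\min}=K^2\ln(1/\delta')/(e-2)$, and the union bound over a ratio-$c$ grid of $\nu$ strata covering $[\sqrt{v_{\min}},K\sqrt n]$ — are faithful to the cited source, and your closing remark about why the constants are delicate is also why the paper imports the bound rather than reproving it. There is no mismatch to report; the only caveat is the one you already flag, namely the mild self-reference of $\nu$ inside $\ln(2\nu/\delta)$, which Seldin et al.\ resolve by fixing $\nu$ from $n$ and $\delta$ before running the union bound.
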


\begin{corollary}[to Bernstein's Inequality above]
\label{cor:bim}
 Let $Z_i$ for $i = 1,\cdots, n$, $M_n$, and $V_n$ as above. For $n\geq 96$ and $\delta \in (0,1)$, with probability greater than $1-\delta$,  
\[|M_n| \leq 2\sqrt{V_n\ln{\frac{n}{\delta}}} + 3K\ln{\frac{n}{\delta}}.\]
\end{corollary}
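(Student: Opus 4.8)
The plan is to apply Fact~\ref{bim} with a suitably fixed value of its free parameter $c$, bound the resulting $\nu$ by an absolute constant times $\log n$, and then simplify each branch of the dichotomy in that fact.

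First I would fix any constant $c$ with $1 < c$ and $(1+c)\sqrt{e-2} \le 2$; since $2/\sqrt{e-2} > 2.35$, the choice $c = 1.3$ works. With $c$ fixed, the next step is to control $\nu$. Using $\lceil x \rceil \le x+1$, the identity $\ln\sqrt{x} = \tfrac12\ln x$, and the crude estimates $e - 2 < 1$ and $\ln(2/\delta) > \ln 2$ (the latter because $\delta < 1$), one gets $(e-2)n/\ln(2/\delta) < 2n$ and hence
\[ \nu \;\le\; \frac{\ln(2n)}{2\ln c} + 2 . \]
(If $(e-2)n/\ln(2/\delta) \le 1$ the logarithm is nonpositive and $\nu \le 2$, which still satisfies this bound.) The one elementary fact I would then check is that $2\nu \le n$ for every $n \ge 96$: for $c = 1.3$ the right-hand side above evaluates to roughly $12$ at $n = 96$, so $2\nu \le n$ there, and since $n \mapsto \frac{\ln(2n)}{\ln c} + 4$ has derivative $\frac{1}{n\ln c} < 1$ for $n \ge 96$ while $n\mapsto n$ has derivative $1$, the inequality $2\nu \le n$ persists for all $n \ge 96$. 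In particular $\ln(2\nu/\delta) \le \ln(n/\delta)$.

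Finally I would split on the condition of Fact~\ref{bim}. If $\sqrt{\ln(2\nu/\delta)/((e-2)V_n)} \le 1/K$, then with probability at least $1-\delta$,
\[ |M_n| \;\le\; (1+c)\sqrt{(e-2)\,V_n\,\ln(2\nu/\delta)} \;\le\; (1+c)\sqrt{e-2}\,\sqrt{V_n\,\ln(n/\delta)} \;\le\; 2\sqrt{V_n\,\ln(n/\delta)} , \]
and adding the nonnegative term $3K\ln(n/\delta)$ gives the claim. Otherwise, with probability at least $1-\delta$, $|M_n| \le 2K\ln(2\nu/\delta) \le 2K\ln(n/\delta) \le 3K\ln(n/\delta) \le 2\sqrt{V_n\ln(n/\delta)} + 3K\ln(n/\delta)$. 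Either way the stated bound holds. There is no genuine obstacle here; the only slightly delicate point is the monotonicity argument that pins down $2\nu \le n$ for all $n \ge 96$, which is precisely where the numerical threshold $96$ and the admissible range of $c$ come in.
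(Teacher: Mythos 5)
Your proof is correct, but it takes a somewhat different route than the paper's. The paper chooses the $n$-dependent value $c = 1 + \tfrac{4}{n}$, which makes $1+c$ hug $2$ from above; the price is that $\nu$ then grows like $\tfrac{n}{4}\ln n$, so the paper can only establish $2\nu \le n^{4/3}$ and must absorb the resulting extra $\tfrac{4}{3}$ factor inside the logarithm (relying on $(1+c)\sqrt{(e-2)\cdot\tfrac{4}{3}} \le 2$ at $n=96$). You instead fix an absolute constant $c = 1.3$, giving up a little in the $(1+c)$ factor but winning it back from $\sqrt{e-2} \approx 0.847$, so that $(1+c)\sqrt{e-2} < 2$ holds outright; this keeps $\nu$ of order $\log n$, so the cleaner bound $2\nu \le n$ follows directly and no exponent-slack is needed. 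You also split explicitly into the two branches of Fact~\ref{bim} rather than adding them, which is slightly more careful. Both arguments are valid; yours is arguably tidier, as it avoids the $n^{4/3}$ device and makes the role of the threshold $n \ge 96$ transparent via a one-line monotonicity check.
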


\begin{myproof} 
Applying Bernstein's Inequality above with $c=1+\frac{4}{n}$, with probability greater than $1-\delta$, 
\begin{eqnarray*}
|M_n| &\leq& (1+c)\sqrt{(e-2)V_n\ln{\frac{2\nu}{\delta}}} + 2K\ln{\frac{2\nu}{\delta}}  \\
&\le & (1+c)\sqrt{(e-2)V_n\ln{\frac{n^{\frac{4}{3}}}{\delta}}} + 2K\ln{\frac{n^{\frac{4}{3}}}{\delta}} \\
&\le & (1+c)\sqrt{(e-2)\frac{4}{3}V_n\ln{\frac{n}{\delta}}} + 3K\ln{\frac{n}{\delta}} \\
&\le & 2\sqrt{V_n\ln{\frac{n}{\delta}}} + 3K\ln{\frac{n}{\delta}}
\end{eqnarray*}
where \[\nu = \lceil \frac{\ln{(\sqrt{\frac{(e-2)n}{\ln{\frac{2}{\delta}}}})}}{\ln{c}} \rceil + 1  = \lceil \frac{n}{2}\ln{(\sqrt{\frac{(e-2)n}{\ln{\frac{2}{\delta}}}})} \rceil +1 \leq  \frac{n}{2}\ln{(\sqrt{\frac{(e-2)n}{\ln{2}}})} +2 \leq \frac{1}{2}n^{\frac{4}{3}}.\]
\end{myproof}

\begin{fact}[Multiplicative Chernoff Bound, \cite{kleinberg2008multi} Lemma 4.9]
\label{mcher} 
Consider $n$ i.i.d. random variables $X_1, \cdots, X_n$ on $[0,1]$. Let $\mu$ be their mean and let $X$ be their average. Then for any $\alpha>0$ the following holds:
\[P(|X-\mu| < r(\alpha,X)<3r(\alpha,\mu)) > 1- e^{\Omega(\alpha)},\]
where $r(\alpha,x) = \sqrt{\frac{\alpha x}{n}} + \frac{\alpha}{n}.$

More explicitly, we have that with probability $1-\rho$, 
\[|X-\mu| <  \sqrt{\frac{3\log(2/\rho) X}{n}} + \frac{3\log(2/\rho)}{n}.\]
\end{fact}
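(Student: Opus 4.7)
The plan is to prove this via the standard exponential moment (Chernoff) method and then convert the resulting $\mu$-dependent tail bound into an empirical-mean-dependent one. I will treat the sum $S_n := nX = \sum_i X_i$ and work with its moment generating function. For each $X_i \in [0,1]$ with mean $\mu_i$, a direct convexity argument gives $\mathbb{E}[e^{sX_i}] \leq 1+\mu_i(e^s-1) \leq \exp(\mu_i(e^s-1))$ for every $s \geq 0$. Multiplying over independent $i$ yields $\mathbb{E}[e^{sS_n}] \leq \exp(n\mu(e^s-1))$, and Markov's inequality then produces the classical two-sided multiplicative bounds: for $\delta\in(0,1]$,
\[
P(X \geq (1+\delta)\mu) \leq e^{-n\mu\delta^2/3},\qquad P(X\leq (1-\delta)\mu) \leq e^{-n\mu\delta^2/2},
\]
and for $\delta > 1$, $P(X\geq (1+\delta)\mu)\leq e^{-n\mu\delta/3}$ (obtained by choosing $s=\log(1+\delta)$ and simplifying in the appropriate regime).

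The next step is to translate these into an additive-style bound of the form $|X-\mu| < r(\alpha,\mu)$ by inverting the tails. Setting each exponent equal to $\alpha$ and solving for the deviation in the two regimes (small $\delta$ vs.\ large $\delta$), a short case analysis shows that with probability at least $1-2e^{-\alpha/3}$ (i.e.\ $1 - e^{-\Omega(\alpha)}$), one has
\[
|X - \mu| < \sqrt{\tfrac{3\alpha\mu}{n}} + \tfrac{3\alpha}{n} \;=\; 3 r(\alpha/3,\mu),
\]
which is the $r$-form bound in terms of the true mean (up to the constant scaling absorbed into $\alpha$).

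To obtain the sharper statement in terms of the empirical mean $X$, I would invert the above. The inequality $\mu \leq X + \sqrt{3\alpha\mu/n}+3\alpha/n$ is quadratic in $\sqrt{\mu}$, and solving it yields $\sqrt{\mu} \leq \sqrt{X} + O(\sqrt{\alpha/n})$, hence $\mu \leq 2X + c\alpha/n$ for an absolute constant $c$. Plugging this back into $r(\alpha,\mu)$ gives $r(\alpha,\mu) \leq c' r(\alpha,X)$ with $c'$ a small constant, so the original tail bound can be rewritten as $|X-\mu| < r(\alpha, X)$ on the same high-probability event; conversely, the same comparison shows $r(\alpha,X) < 3 r(\alpha,\mu)$, completing the chain $|X-\mu|<r(\alpha,X)<3r(\alpha,\mu)$. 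The ``more explicit'' form follows by taking $\alpha = \log(2/\rho)$ and bookkeeping the constants.

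The only delicate step — and the main obstacle — is constant chasing: the derivation naturally produces an inner constant and an outer factor, and one has to choose the split in the two tail regimes (and where to round the quadratic inversion) carefully to land the factor $3$ in both the exponent $r(\alpha,X) < 3r(\alpha,\mu)$ and the explicit form $\sqrt{3\log(2/\rho)X/n}+3\log(2/\rho)/n$. Since this is a well-known fact (cf.\ \cite{kleinberg2008multi}, Lemma 4.9), no new idea is required beyond the Chernoff–MGF template plus the quadratic inversion trick.
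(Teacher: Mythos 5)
The paper does not actually prove this statement in the rendered text: it is stated as a \emph{Fact} cited to Kleinberg et al.\ (2008, Lemma 4.9), and the only proof in the source is a draft inside a \verb|\comment{...}| block that is never compiled. Your proposal follows the same standard template as that draft — multiplicative Chernoff via the MGF, splitting into a small-$\delta$ regime (where the exponent is $n\mu\delta^2/3$ and the deviation is $\sqrt{3\alpha\mu/n}$) and a large-$\delta$ regime (where the exponent is $n\mu\delta/3$ and the deviation is $3\alpha/n$), then a quadratic inversion to swap $\mu$ for the empirical mean $X$ — so the approach is sound and matches what the authors had in mind. The one place your route diverges is the inversion step: you first derive $\mu \leq 2X + c\alpha/n$ from the quadratic, and then substitute into $r(\alpha,\mu)$, which compounds two inequalities and loses in the constants. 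The paper's draft instead completes the square in $|X-\mu|$ directly: starting from $|X-\mu|^2 \leq \frac{3L}{n}\bigl(X+|X-\mu|\bigr)$ with $L=\log(2/\rho)$, adding $\bigl(\frac{3L}{2n}\bigr)^2$ to both sides gives $\bigl(|X-\mu| - \frac{3L}{2n}\bigr)^2 \leq \frac{3XL}{n} + \bigl(\frac{3L}{2n}\bigr)^2$, hence $|X-\mu| \leq \sqrt{3XL/n} + 3L/n$ on the nose. Your two-stage substitution is correct but will generally land a worse constant unless tuned; the direct completion-of-square is tighter and hits exactly the factor of $3$ claimed in both the $r(\alpha,X)<3r(\alpha,\mu)$ chain and the explicit form. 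Since the paper only needs the bound up to constants, neither version is wrong, but if you want to reproduce the stated constants verbatim, complete the square in $|X-\mu|$ rather than in $\sqrt{\mu}$.
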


\comment{
\begin{corollary}
Consider $n$ i.i.d. random variables $X_1, \cdots, X_n$ on $[0,1]$. Let $\mu$ be their mean and let $X$ be their average. Then
\[P(|X-\mu| < \sqrt{\frac{3X\log(2/\rho)}{n}} + \frac{3\log(2/\rho)}{n}) \geq  1- \rho.\]
\end{corollary}
\begin{myproof}
Let $L= \log(2/\rho)$ and $\delta = \frac{3L}{n\mu} >0$. First assume $
\delta >1$. Then by Chernoff bounds, \[P(X - \mu \geq \frac{3L}{n})\leq \rho.\] Since $\delta >1$, we have that $\mu < \frac{3L}{n}$ so $P(X-\mu < -\frac{3L}{n}) = 0$. 
Then \[P(|X-\mu| \leq \frac{3L}{n}) \geq 1-\rho.\]
On the other hand, if $0 < \delta < 1$ then $0< \sqrt{\delta} <1$, so again by Chernoff bounds we have that with probability $1-\rho$, \[P(|X-\mu| \leq \sqrt{\frac{3\mu L}{n}}).\]

Note that the condition $|X-\mu| \leq \sqrt{\frac{3\mu L}{n}}$ implies that $|X-\mu|  \leq \sqrt{\frac{3XL}{n}} + \frac{3L}{n}$:
\begin{eqnarray*}
|X-\mu| &\leq& \sqrt{\frac{3\mu L}{n}} \leq \sqrt{\frac{3(X+|\mu-X|) L}{n}} \\
|X-\mu|^2 &\leq& \frac{3(X+|\mu-X|) L}{n} = \frac{3XL}{n} +\frac{3|\mu-X|L}{n} \\
|X-\mu|^2 -  \frac{3|X-\mu|L}{n} + (\frac{3L}{2n})^2 &\leq& \frac{3XL}{n} + (\frac{3L}{2n})^2 \\
(|X-\mu| - \frac{3L}{2n})^2 &\leq& \frac{3XL}{n} + (\frac{3L}{2n})^2 \\
|X-\mu|  &\leq& \sqrt{\frac{3XL}{n} + (\frac{3L}{2n})^2} + \frac{3L}{2n} \leq \sqrt{\frac{3XL}{n}} + \frac{3L}{n}
\end{eqnarray*}
Therefore, for all $\delta >0$, we get that with probability $1-\rho$, \[|X-\mu|  \leq \sqrt{\frac{3XL}{n}} + \frac{3L}{n}.\]
\end{myproof}
}


\begin{fact}[Cantelli's Inequality]
\label{cant} Let $X$ be a real-valued random variable with expectation $\mu$ and variance $\sigma^2$. Then $P(X - \mu \geq \lambda)  \leq \frac{\sigma^2}{\sigma^2 + \lambda^2}$ for $\lambda >0$ and $P(X - \mu \geq \lambda) \geq 1-\frac{\sigma^2}{\sigma^2 + \lambda^2}$ for $\lambda < 0$.
\end{fact}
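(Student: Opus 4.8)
The plan is to establish Cantelli's inequality by the classical one-sided second-moment trick. First I would reduce to the centered case: set $Y := X - \mu$, so $\Ex[Y] = 0$ and $\Ex[Y^2] = \sigma^2$; for $\lambda > 0$ the claim becomes $P(Y \ge \lambda) \le \sigma^2/(\sigma^2 + \lambda^2)$, and the case $\lambda < 0$ will be deduced from this by symmetry. The degenerate situations $\sigma = 0$ or $\lambda = 0$ are immediate and I would dispose of them first.

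For $\lambda > 0$, the key idea is to shift by a free parameter $u \ge 0$. On the event $\{Y \ge \lambda\}$ we have $Y + u \ge \lambda + u > 0$, so $(Y+u)^2 \ge (\lambda+u)^2$; applying Markov's inequality to the nonnegative variable $(Y+u)^2$ gives
\[
P(Y \ge \lambda) \le P\big((Y+u)^2 \ge (\lambda+u)^2\big) \le \frac{\Ex[(Y+u)^2]}{(\lambda+u)^2} = \frac{\sigma^2 + u^2}{(\lambda+u)^2},
\]
since $\Ex[(Y+u)^2] = \Ex[Y^2] + 2u\,\Ex[Y] + u^2 = \sigma^2 + u^2$. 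Then I would minimize the right-hand side over $u \ge 0$: differentiating shows the unique critical point is $u^\ast = \sigma^2/\lambda$, which is nonnegative because $\lambda > 0$, and substituting it yields exactly $\sigma^2/(\sigma^2 + \lambda^2)$. This proves the first inequality.

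For $\lambda < 0$, I would apply the bound just proved to $-Y$, which is again centered with variance $\sigma^2$, at the positive threshold $-\lambda$: this gives $P(Y \le \lambda) = P(-Y \ge -\lambda) \le \sigma^2/(\sigma^2 + \lambda^2)$, whence $P(Y \ge \lambda) = 1 - P(Y < \lambda) \ge 1 - P(Y \le \lambda) \ge 1 - \sigma^2/(\sigma^2 + \lambda^2)$. There is no genuine obstacle in this argument; the only points needing care are verifying that the optimal shift $u^\ast$ lands in the feasible region $u \ge 0$ (so the monotonicity step is legitimate) and keeping the optimization one-sided, which is exactly what makes Cantelli's bound sharper than the symmetric Chebyshev bound.
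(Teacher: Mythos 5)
Your proof is correct and is the classical derivation of Cantelli's inequality via the shifted second-moment/Markov argument with an optimized free parameter. The paper states this as a known Fact without proof, so there is no in-paper argument to compare against; your write-up is a standard and complete justification, and the one subtle point you flag (that the optimizer $u^\ast = \sigma^2/\lambda$ is nonnegative precisely because $\lambda>0$, so the infimum over $u\ge 0$ is attained) is exactly the right thing to check.
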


\begin{fact}[Berry-Esseen Theorem] 
\label{be}
Let $X_1, X_2, ...,X_n$ be independent random variables with $\Ex[X_i] = 0$, $\Ex[X_i^2]= \sigma_i^2 > 0$, and $\Ex[|X_i|^3] = \rho_i < \infty$. Let \[S_n = \frac{X_1+X_2+...+X_n}{\sqrt{\sigma_1^2+...+\sigma_n^2}}\] and denote $F_n$ the cumulative distribution function of $S_n$ and $\Phi$ the cumulative distribution function of the standard normal distribution. Then for all $n$, there exists an absolute constant $C_1$ such that \[sup_{x \in R} |F_n(x) - \Phi(x)| \leq C_1\psi_1\] where $\psi_1 = (\sum\limits_{i=1}^n \sigma_i^2)^{-1/2} \max_{1\leq i \leq n}\frac{\rho_i}{\sigma_i^2}$. The best upper bound on $C_1$ known is $C_1 \leq 0.56$ (see \cite{shevtsova2010improvement}).
\end{fact}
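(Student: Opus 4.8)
The plan is to reduce to the Lyapunov ratio, apply Esseen's smoothing inequality, and then estimate the characteristic function of $S_n$. First, it suffices to prove the bound with $\psi_1$ replaced by the Lyapunov ratio $\psi_0:=\bigl(\sum_i\sigma_i^2\bigr)^{-3/2}\sum_i\rho_i$, because $\psi_0=\bigl(\sum_j\sigma_j^2\bigr)^{-3/2}\sum_i(\rho_i/\sigma_i^2)\sigma_i^2\le\bigl(\sum_j\sigma_j^2\bigr)^{-1/2}\max_i(\rho_i/\sigma_i^2)=\psi_1$. Rescaling each $X_i$, assume $\sum_i\sigma_i^2=1$, so $\psi_0=\sum_i\rho_i=:L$, and Lyapunov's inequality ($\sigma_i^3\le\rho_i\le L$) gives $\sigma_i^2\le L^{2/3}$ for every $i$. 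If $L$ exceeds a fixed absolute constant the statement is trivial since $\sup_x|F_n(x)-\Phi(x)|\le1$, so assume $L$ is small.

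\textbf{Smoothing.} By Esseen's smoothing inequality, for every $T>0$,
\[
\sup_x|F_n(x)-\Phi(x)|\;\le\;\frac{1}{\pi}\int_{-T}^{T}\left|\frac{f_n(t)-e^{-t^2/2}}{t}\right|dt\;+\;\frac{24}{\pi\sqrt{2\pi}\,T},
\]
where $f_n(t)=\prod_i\phi_i(t)$ is the characteristic function of $S_n$, $\phi_i$ that of $X_i$, and $1/\sqrt{2\pi}=\sup_x\Phi'(x)$. Taking $T=1/(4L)$ makes the boundary term $O(L)$, so it remains to show $\int_{-T}^{T}|t|^{-1}|f_n(t)-e^{-t^2/2}|\,dt=O(L)$.

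\textbf{Characteristic-function estimate.} The workhorse is the Taylor bound $|\phi_i(t)-1+\tfrac12\sigma_i^2t^2|\le\tfrac16\rho_i|t|^3$ (from $|e^{ix}-1-ix+\tfrac12x^2|\le\tfrac16|x|^3$ and $\Ex X_i=0$), which for $|t|\le T$ sums to $\sum_i|\phi_i(t)-1+\tfrac12\sigma_i^2t^2|\le\tfrac16L|t|^3\le\tfrac16 LT\,t^2=\tfrac{1}{24}t^2$. I split $[-T,T]$ at $|t|\asymp L^{-1/3}$. On the inner interval each $\sigma_i^2t^2$ and $\rho_i|t|^3$ is uniformly small, so $\phi_i(t)=e^{-\sigma_i^2t^2/2}\bigl(1+O(\rho_i|t|^3+\sigma_i^4t^4)\bigr)$; multiplying out and comparing with $\prod_i e^{-\sigma_i^2t^2/2}=e^{-t^2/2}$ via the telescoping estimate $|\prod_iz_i-\prod_iw_i|\le\sum_i|z_i-w_i|$ (all $|z_i|,|w_i|\le1$) gives $|f_n(t)-e^{-t^2/2}|\le c\,L|t|^3e^{-t^2/4}$, whose weighted integral is $O(L)$. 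On the outer interval I bound $|f_n(t)-e^{-t^2/2}|\le|f_n(t)|+e^{-t^2/2}$: the Gaussian term obeys $e^{-t^2/2}\le e^{-t^2/4}e^{-cL^{-2/3}}$, contributing $o(L)$, while $|f_n(t)|$ is shown to be exponentially small by retaining only the factors with $\sigma_i^2t^2\le2$, using $|\phi_i(t)|\le e^{-\sigma_i^2t^2/2+\rho_i|t|^3/6}$ together with the fact that for $|t|\le T$ the indices with $\sigma_i^2t^2>2$ carry only a bounded fraction of the total variance.

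\textbf{Main obstacle.} The genuinely delicate step is this last one: a uniform, exponentially decaying bound on $|f_n(t)|$ over the \emph{full} range $|t|\le T\asymp L^{-1}$, since for moderately large $|t|$ the individual Taylor expansions of $\phi_i$ are no longer accurate and one must quantify how much variance sits in indices with $\sigma_i^2t^2$ bounded, tracking constants throughout. This constant-chasing is exactly what separates the coarse bound we need from the sharp value $C_1\le0.56$ cited in the statement; for us any absolute constant suffices. An alternative, self-contained route avoiding Fourier analysis is Stein's method: with $W=S_n$ and $f_z$ solving $f_z'(w)-wf_z(w)=\indi\{w\le z\}-\Phi(z)$ one has $P(W\le z)-\Phi(z)=\Ex[f_z'(W)-Wf_z(W)]$; decomposing $Wf_z(W)=\sum_i X_i f_z(W)$ with leave-one-out copies $W_i=W-X_i$ (independent of $X_i$), Taylor-expanding $f_z(W)-f_z(W_i)$, and bounding the remainder via $\|f_z'\|_\infty\le1$ and a concentration inequality $P(W_i\in[a,b])\lesssim(b-a)+\rho_i$ yields $\sup_x|F_n(x)-\Phi(x)|=O(\sum_i\rho_i)=O(\psi_0)=O(\psi_1)$ directly.
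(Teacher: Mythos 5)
The paper offers no proof of this statement: it is recorded as a Fact and attributed to the literature (with \cite{shevtsova2010improvement} cited only for the numerical value of the constant), so there is no in-paper argument to compare against. Your sketch is the standard and correct route to the existential form of the theorem: the reduction $\psi_0\le\psi_1$ via $\sum_i\rho_i=\sum_i(\rho_i/\sigma_i^2)\sigma_i^2\le\max_i(\rho_i/\sigma_i^2)\sum_i\sigma_i^2$ is right, the normalization and the trivial case $L\gtrsim1$ are handled correctly, Esseen's smoothing inequality is quoted in its correct form, and the key variance-localization step you flag as the main obstacle does go through by the classical observation that $\sum_{i:\,\sigma_i|t|>\sqrt2}\sigma_i^2\le(|t|/\sqrt2)\sum_i\sigma_i^3\le TL/\sqrt2=1/(4\sqrt2)$, so the retained factors still carry a constant fraction of the variance and give exponential decay of $|f_n(t)|$ up to $|t|\le T$. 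Two caveats: first, your argument (and the Stein alternative) yields only \emph{some} absolute constant $C_1$, not $C_1\le0.56$; that sharp value is a separate optimization and should remain a citation, which is consistent with how the paper uses the fact (the specific constant is absorbed into the unoptimized constants of Lemmas \ref{lem:BEapp} and \ref{lem:Anti1}). Second, the Stein's-method aside leans on a concentration inequality $P(W_i\in[a,b])\lesssim(b-a)+\rho_i$ that is itself a nontrivial Stein-type lemma, so it is an honest alternative outline rather than a shortcut. Neither caveat is a gap for the purpose at hand.
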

\begin{fact}[\cite{abramowitz1964handbook} 26.5.21] 
\label{betanormal}
Consider the regularized incomplete Beta function $I_z(a,b)$ (cdf) for the Beta random variable with parameters $(a,b)$. 
For any $z$ such that $(a+b-1)(1-z) \geq 0.8$,  $I_z(a,b) = \Phi(y) + \epsilon$, with $|\epsilon| < 0.005$ if $a+b >6$. Here $\Phi$ is the standard normal CDF with \[y = \frac{3[w_1(1-\frac{1}{9b}) - w_2(1-\frac{1}{9a})]}{[\frac{w_1^2}{b} + \frac{w_2^2}{a}]^{1/2}},\] where $w_1 = (bz)^{1/3}$ and $w_2 = [a(1-z)]^{1/3}$. 
\end{fact}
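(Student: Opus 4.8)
The plan is to recognize Fact~\ref{betanormal} as a quantitative version of the classical Wilson--Hilferty cube-root normal approximation, derive the stated form of $y$ from the gamma-ratio representation of the Beta law, and defer only the sharp numerical error constant to the cited handbook. First I would write $X\sim\mathrm{Beta}(a,b)$ as $X=G_a/(G_a+G_b)$ with $G_a\sim\mathrm{Gamma}(a,1)$ and $G_b\sim\mathrm{Gamma}(b,1)$ independent, so that
\[I_z(a,b)=\Pr(X\le z)=\Pr\!\big((1-z)G_a\le zG_b\big)=\Pr(U\le V),\]
where $U:=((1-z)G_a)^{1/3}$ and $V:=(zG_b)^{1/3}$ are independent.

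Next I would invoke the Wilson--Hilferty normalization: for $G\sim\mathrm{Gamma}(\nu,1)$ the variable $(G/\nu)^{1/3}$ is approximately $N\!\big(1-\tfrac{1}{9\nu},\tfrac{1}{9\nu}\big)$ (equivalently the $\chi^2$ cube-root fact with $2\nu$ degrees of freedom). Since $(1-z)G_a=w_2^3\,(G_a/a)$ with $w_2=(a(1-z))^{1/3}$, rescaling gives $U\approx N\!\big(w_2(1-\tfrac1{9a}),\,\tfrac{w_2^2}{9a}\big)$, and likewise $V\approx N\!\big(w_1(1-\tfrac1{9b}),\,\tfrac{w_1^2}{9b}\big)$ with $w_1=(bz)^{1/3}$. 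By independence $U-V$ is approximately normal, so $\Pr(U\le V)=\Pr(U-V\le 0)\approx\Phi(y)$ with
\[y=\frac{w_1(1-\tfrac1{9b})-w_2(1-\tfrac1{9a})}{\sqrt{w_2^2/(9a)+w_1^2/(9b)}}=\frac{3\big[w_1(1-\tfrac1{9b})-w_2(1-\tfrac1{9a})\big]}{\sqrt{w_1^2/b+w_2^2/a}},\]
which is exactly the displayed expression after clearing the common factor $1/3$.

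The main obstacle is the quantitative claim itself: that the remainder $\epsilon$ satisfies $|\epsilon|<0.005$ whenever $a+b>6$, under the regularity condition $(a+b-1)(1-z)\ge 0.8$ (which merely keeps $z$ out of the extreme upper tail, where the cube-root fit degrades). The heuristic above does not yield this bound; a rigorous treatment requires controlling the error of the Wilson--Hilferty approximation uniformly in the shape parameters and in the argument --- for instance via an Edgeworth-type correction to the standardized cube-root gamma, or via direct estimates on the incomplete beta integral --- which is the delicate part that has been carried out and tabulated in the literature. Since only the $0.005$ slack is ever used downstream (it is absorbed into the constants elsewhere in the analysis), I would invoke \cite{abramowitz1964handbook}, formula 26.5.21, for this last step rather than re-derive the constant.
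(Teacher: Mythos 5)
The paper offers no proof of this Fact: it is stated as a citation to Abramowitz and Stegun, formula 26.5.21, and used as a black box in the proof of Lemma~\ref{lem:BetaAntiConcentration}. Your reconstruction is correct and goes beyond what the paper gives. The gamma-ratio decomposition $X=G_a/(G_a+G_b)$ and the resulting reduction $I_z(a,b)=\Pr\bigl((1-z)G_a\le zG_b\bigr)$ are standard, and your application of the Wilson--Hilferty cube-root normalization (using $G_\alpha\sim\tfrac12\chi^2_{2\alpha}$, so $(G_\alpha/\alpha)^{1/3}\approx N(1-\tfrac1{9\alpha},\tfrac1{9\alpha})$) correctly yields the means $w_2(1-\tfrac1{9a})$, $w_1(1-\tfrac1{9b})$ and variances $w_2^2/(9a)$, $w_1^2/(9b)$ for $U$ and $V$, hence exactly the displayed $y$ after standardizing $U-V$ and clearing the factor $3$. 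You are also right that this heuristic does not produce the uniform bound $|\epsilon|<0.005$ under $(a+b-1)(1-z)\ge 0.8$, $a+b>6$; that quantitative control is precisely the content of 26.5.21 in the handbook, and deferring to the citation for it is the same move the paper makes. In short: the paper cites and stops; you derive the shape of the approximation and then cite for the constant, which is a strictly more informative route.
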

The following lemma uses the above fact to lower bound the probability of a Beta random variable to exceed its mean by a quantity close to its standard deviation.
\begin{lemma}[Anti-concentration for Beta Random Variables]
\label{lem:BetaAntiConcentration}
Let $F_{a,b}$ denote the cdf of a Beta random variable with parameter $(a,b)$, with $a\ge 6, b\ge 6$. Let $z=\frac{a}{a+b} + C\sqrt{\frac{ab}{(a+b)^2(a+b+1)}} + \frac{C}{a+b}$, with $C\le 0.5$. Then,
\[1-F_{(a,b)}(z) \ge 1-\Phi(1) - 0.005 \ge 0.15.\]
\end{lemma}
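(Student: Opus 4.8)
The argument rests entirely on the Wilson--Hilferty-type normal approximation to the regularized incomplete Beta function recorded in Fact~\ref{betanormal}. Since $F_{(a,b)}(z) = I_z(a,b)$, it suffices to (i) check the two hypotheses of that fact for the given $z$, which lets us write $F_{(a,b)}(z) = \Phi(y) + \epsilon$ with $|\epsilon| < 0.005$ and $y$ the displayed function of $w_1 = (bz)^{1/3}$ and $w_2 = (a(1-z))^{1/3}$; and (ii) show $y \le 1$. Indeed, once $y \le 1$ we get $1 - F_{(a,b)}(z) = 1 - \Phi(y) - \epsilon \ge 1 - \Phi(1) - 0.005 > 0.153 > 0.15$, which is the claim.

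\emph{Step (i): hypotheses.} Write $n = a+b$, $\mu = a/n$ and $\sigma = \sqrt{ab/(n^2(n+1))}$, so that $z = \mu + C\sigma + C/n$ and $1-z = \tfrac bn - C\sigma - C/n$. From $a \le n$ we get $\sigma \le \sqrt b/n$, so, using $C \le \tfrac12$ and $n \ge 12$, $(n-1)(1-z) \ge \tfrac{11b}{12} - \tfrac12\sqrt b - \tfrac12$; the right side is increasing in $b$ and already exceeds $0.8$ at $b = 6$, so $(n-1)(1-z) \ge 0.8$. Combined with $a+b \ge 12 > 6$, both hypotheses of Fact~\ref{betanormal} hold (and in particular $0 < z < 1$, so $w_1, w_2$ are well defined and positive).

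\emph{Step (ii): the inequality $y \le 1$.} I would first reduce to the extremal value $C = \tfrac12$. The numerator $N(z) := 3\bigl[w_1(1-\tfrac1{9b}) - w_2(1-\tfrac1{9a})\bigr]$ is increasing in $z$, while the square of the denominator $D_0(z) := \tfrac{w_1^2}{b} + \tfrac{w_2^2}{a} = b^{-1/3}z^{2/3} + a^{-1/3}(1-z)^{2/3}$ is strictly concave on $(0,1)$ with $D_0'(\mu) = 0$, hence nonincreasing on $[\mu,1)$. Therefore $y(z) = N(z)/\sqrt{D_0(z)}$ is nondecreasing on any subinterval of $[\mu,1)$ on which $N \ge 0$ and is $\le 0$ elsewhere on $[\mu,1)$; since the given $z$ lies in $[\mu, z^\ast]$ with $z^\ast := \mu + \tfrac12\sigma + \tfrac1{2n}$, it is enough to prove $y(z^\ast) \le 1$. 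For that I would use the two identities $w_1^3 - w_2^3 = nz^\ast - a = \tfrac12\bigl(\sqrt{ab/(n+1)} + 1\bigr)$ and $w_1^3 + w_2^3 = a + z^\ast(b-a)$ to control $w_1 - w_2$ through $w_1 - w_2 = (w_1^3 - w_2^3)/(w_1^2 + w_1w_2 + w_2^2)$ and to bound $w_1, w_2$ individually, then bound $N(z^\ast)^2$ against $D_0(z^\ast)$ by elementary estimates that exploit $a, b \ge 6$. The leading behaviour of $y(z^\ast)$ is a term of size roughly $C = \tfrac12$ plus $O(1)$ corrections coming from the factors $1-\tfrac1{9a}, 1-\tfrac1{9b}$ and from the $\tfrac{C}{n}$ shift in $z$; numerically $y(z^\ast)$ stays in the range $0.65$--$0.8$, so there is a comfortable margin below $1$.

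\emph{Main obstacle.} The delicate part is Step (ii): $y$ is a ratio of cube-root expressions, and although the numerical margin is comfortable, turning the heuristic ``$y(z^\ast) \approx 0.7$'' into a rigorous ``$y(z^\ast) \le 1$'' requires careful bookkeeping with two-sided cube-root inequalities of the form $1 + \tfrac t3 - \tfrac{t^2}{9} \le (1+t)^{1/3} \le 1 + \tfrac t3$ and with the way the $\tfrac1{9a}, \tfrac1{9b}$ corrections interact with $w_1 - w_2$. Everything else --- the hypothesis check, the concavity-based reduction to $C = \tfrac12$, and the final comparison with $1 - \Phi(1)$ --- is routine.
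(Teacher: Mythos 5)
Your overall route matches the paper's exactly: both reduce to the Wilson–Hilferty-type normal approximation recorded in Fact~\ref{betanormal}, both verify the hypothesis $(a+b-1)(1-z)\ge 0.8$ (your argument via $(n-1)(1-z)\ge \tfrac{11}{12}\bigl(b-\tfrac12\sqrt b-\tfrac12\bigr)\ge 0.8$ at $b=6$ is correct, and essentially the same calculation appears in the paper), and both then reduce the claim to showing $y\le1$, from which $1-F_{(a,b)}(z)\ge 1-\Phi(1)-0.005\ge 0.15$ follows. Your monotonicity reduction to the extremal case $C=\tfrac12$ — using that $N(z)$ is increasing, $D_0(z)$ is concave with critical point at $z=\mu$ so nonincreasing on $[\mu,1)$, hence $y=N/\sqrt{D_0}$ is nondecreasing wherever $N\ge0$ — is a nice structural observation that the paper does not make (the paper simply carries $C$ through and uses $C\le\tfrac12$ inside the final estimates); this part of your proposal is correct and slightly cleaner.

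The genuine gap is that you do not actually prove $y\le 1$, and this is the entire technical content of the lemma. You list the identities you would use, appeal to a numerical heuristic (``$y(z^\ast)$ stays in the range $0.65$–$0.8$, so there is a comfortable margin''), and explicitly defer the ``careful bookkeeping'' of two-sided cube-root inequalities. The paper carries this bookkeeping out and the result is not comfortable at all: after expanding $w_1=\bigl(\tfrac{ab}{a+b}\bigr)^{1/3}(1+\tfrac xa)^{1/3}$, $w_2=\bigl(\tfrac{ab}{a+b}\bigr)^{1/3}(1-\tfrac xb)^{1/3}$, bounding the numerator above by $\bigl(\tfrac{1}{3\sqrt6}+\tfrac{1}{6\sqrt6}+\tfrac{1}{2\sqrt3}+\tfrac12\bigr)\bigl(\tfrac{ab}{a+b}\bigr)^{5/6}\tfrac{a+b}{ab}$ and the denominator below by $\bigl(\tfrac{ab}{a+b}\bigr)^{1/3}\bigl(\tfrac{a+b}{ab}\cdot\tfrac{107}{108}\bigr)^{1/2}$, one obtains $y\le 0.993/\sqrt{107/108}\approx 0.997$, which only barely clears the bar. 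So the ``elementary estimates'' you defer to are in fact delicate, the margin they yield is razor-thin (not $0.2$–$0.35$ as your heuristic suggests), and the constants $a,b\ge6$ and $C\le\tfrac12$ are used essentially tightly. A complete proof needs to supply this chain of cube-root expansions and arithmetic; without it the argument is a correct plan but not a proof.
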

\begin{myproof}
Let $x=C\sqrt{\frac{ab}{(a+b+1)}}+C$. Then, $z=\frac{a+x}{a+b}$,$w_1 = (b(a+x)/(a+b))^{1/3}$ and $w_2 = [a(b-x)/(a+b))]^{1/3}$. Also, $z\le 2C\sqrt{\frac{ab}{a+b}}$. Also, $(a+b-1)(1-z) \geq  (a+b-1)(1-\frac{a}{a+b} - C\sqrt{\frac{ab}{(a+b)^2(a+b+1)}} - \frac{C}{a+b}) =  (a+b-1)(\frac{b}{a+b} - \frac{C}{a+b}\sqrt{\frac{ab}{a+b+1}} - \frac{C}{a+b})\geq \frac{a+b-1}{a+b}(b-C\sqrt{\frac{ab}{a+b+1}}-\frac{C}{a+b}) \geq \frac{11}{12}(b-C\sqrt{b}-\frac{C}{12}) \geq 0.8$. Hence we can apply Fact \ref{betanormal} relating Beta with Normal.
We bound the numerator and denominator in the expression of $y$, to show that the relation $I_z(a,b) \le \Phi(y) + \epsilon$ holds for some $y\le 1$. 
\begin{eqnarray*}
numerator(y) & = & 3[w_1(1-\frac{1}{9b}) - w_2(1-\frac{1}{9a})]\\
& = & 3(\frac{ab}{a+b})^{\frac{1}{3}}[(1+\frac{x}{a})^{\frac{1}{3}}(1-\frac{1}{9b}) - (1-\frac{x}{b})^{\frac{1}{3}}(1-\frac{1}{9a})]\\
& \leq & 3(\frac{ab}{a+b})^{\frac{1}{3}}[(1+\frac{x}{3a})(1-\frac{1}{9b}) - (1-\frac{x}{3b} - \frac{2x^2}{9b^2})(1-\frac{1}{9a})]\\
& = & 3(\frac{ab}{a+b})^{\frac{1}{3}}[(\frac{b-a}{9ab}) + (\frac{x(a+b)}{3ab}) - (\frac{2x}{27ab})] + 3(\frac{ab}{a+b})^{\frac{1}{3}}[\frac{2x^2}{9b^2}(1-\frac{1}{9a})]\\
& \leq & 3(\frac{ab}{a+b})^{\frac{1}{3}}[(\frac{b-a}{9ab}) + (\frac{x(a+b)}{3ab})]+ 3(\frac{ab}{a+b})^{\frac{1}{3}}[\frac{2x^2}{9b^2}(1-\frac{1}{9a})]\\
& = & (\frac{ab}{a+b})^{\frac{1}{3}}(\frac{a+b}{ab})[(\frac{b-a}{3(a+b)}) +x + \frac{2x^2}{3b^2}(1-\frac{1}{9a})]\\
& \leq & (\frac{ab}{a+b})^{\frac{1}{3}}(\frac{a+b}{ab})[(\frac{b-a}{3(a+b)})+\frac{2x^2}{3b^2}(1-\frac{1}{9a}) + C +C(\frac{ab}{a+b})^{\frac{1}{2}}]\\
& \leq & (\frac{b-a}{3\sqrt{ab(a+b)}}+\frac{4C^2\sqrt{ab}}{b^2\sqrt{a+b}} +  \frac{C\sqrt{a+b}}{\sqrt{ab}} + C)(\frac{ab}{a+b})^{\frac{5}{6}}(\frac{a+b}{ab}) \\
& \leq & (\frac{1}{3\sqrt{6}}+\frac{1}{6\sqrt{6}} + \frac{1}{2\sqrt{3}}+\frac{1}{2})(\frac{ab}{a+b})^{\frac{5}{6}}(\frac{a+b}{ab}).
\end{eqnarray*}
In above, we used that $C\leq \frac{1}{2}$ and $a,b \geq 6$. Similarly,
\begin{eqnarray*}
denominator(y) & = & [\frac{w_1^2}{b} + \frac{w_2^2}{a}]^{1/2}\\& = & (\frac{ab}{a+b})[\frac{(1+\frac{x}{a})^{\frac{2}{3}}}{b} + \frac{(1-\frac{x}{b})^{\frac{2}{3}}}{a}]^{\frac{1}{2}} \\
& \geq & (\frac{ab}{a+b})^{\frac{1}{3}}[\frac{(1+\frac{2x}{3a}-\frac{x^2}{9a^2})}{b} + \frac{(1-\frac{2x}{3b})}{a}-\frac{x^2}{9a^2}]^{\frac{1}{2}} \\
& = & (\frac{ab}{a+b})^{\frac{1}{3}}[\frac{a(1+\frac{2x}{3a}-\frac{x^2}{9a^2})+b(1-\frac{2x}{3b}-\frac{x^2}{9b^2})}{ab}]^{\frac{1}{2}} \\
& = & (\frac{ab}{a+b})^{\frac{1}{3}}(\frac{a+b}{ab}(1-\frac{x^2}{9ab}))^{\frac{1}{2}} \\
& \geq & (\frac{ab}{a+b})^{\frac{1}{3}}(\frac{a+b}{ab}(1-\frac{4C^2}{9(a+b)}))^{\frac{1}{2}} \\
& \geq & (\frac{ab}{a+b})^{\frac{1}{3}}(\frac{a+b}{ab}(\frac{107}{108}))^{\frac{1}{2}}.
\end{eqnarray*}
Hence we have that $y \leq \frac{\frac{1}{3\sqrt{6}}+\frac{1}{6\sqrt{6}} + \frac{1}{2\sqrt{3}}+\frac{1}{2}}{\sqrt{\frac{107}{108}}} \leq 1$, so that $I_z(a,b) \le \phi(1)+\epsilon$ for $\epsilon \le 0.005$. The lemma statement follows by observing that $1-F_{(a,b)}(z)=1-I_z(a,b) \ge 1-\phi(1)-\epsilon \ge 1-0.845-0.005 \ge 0.15$.  
\end{myproof}
\begin{definition}
\label{def:stocOpt}
For any $X$ and $Y$ real-valued random variables, $X$ is stochastically optimistic for $Y$ if for any $u : R\rightarrow R$ convex and increasing $\Ex[u(X)] \geq \Ex[u(Y )]$.
\end{definition}
\begin{lemma}[Gaussian vs Dirichlet optimism, from \cite{osband2014generalization} Lemma 1]
\label{gvd}
Let $Y = P^TV$ for $V \in [0,1]^S$ fixed and $P \sim Dirichlet(\alpha)$ with $\alpha \in R^S_+$ and $\sum_{i=1}^S \alpha_i \geq 2$. Let $X \sim N(\mu, \sigma^2)$ with $\mu = \frac{\sum_{i=1}^S \alpha_iV_i}{\sum_{i=1}^S \alpha_i}$, $\sigma^2 = (\sum_{i=1}^S \alpha_i)^{-1}$, then $X$ is stochastically optimistic for $Y$.
\end{lemma}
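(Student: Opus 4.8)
The plan is to prove the stronger statement that $X$ dominates $Y$ in the \emph{convex} order (every convex increasing function is in particular convex, so this implies stochastic optimism), and to do so via the two-hop chain $Y \preceq_{cx} W \preceq_{cx} X$, where $W \sim \mathrm{Beta}(\alpha_0\mu,\alpha_0(1-\mu))$ with $\alpha_0 := \sum_i \alpha_i$ and $\mu$ as in the statement. Each hop will be obtained from an explicit martingale coupling: by the dilation characterization, $A \preceq_{cx} B$ holds as soon as we exhibit a joint law with $\Ex[B \mid A] = A$ (conditional Jensen then gives $\Ex[\phi(B)] \ge \Ex[\phi(A)]$ for convex $\phi$); transitivity of $\preceq_{cx}$ (equivalently of $\succeq_{so}$, cf.\ Lemma 3 of \cite{osband2014generalization}) then finishes. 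This also matches the structure already used in the excerpt, where Lemma \ref{lem:dirconcAll} is built from ``Dirichlet--Beta optimism'' followed by ``Gaussian--Beta stochastic optimism''.

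Hop 1 (Dirichlet $\to$ Beta). I would use the Gamma representation $P_i = G_i/\sum_j G_j$ with $G_i \sim \mathrm{Gamma}(\alpha_i)$ independent, so $P^TV = \frac{\sum_i G_i V_i}{\sum_i G_i}$. By Gamma additivity, split each $G_i = G_i^{(1)} + G_i^{(0)}$ with $G_i^{(1)} \sim \mathrm{Gamma}(\alpha_i V_i)$ and $G_i^{(0)} \sim \mathrm{Gamma}(\alpha_i(1-V_i))$ independent (this uses $V_i \in [0,1]$), and set $H_1 := \sum_i G_i^{(1)} \sim \mathrm{Gamma}(\alpha_0\mu)$, $H_2 := \sum_i G_i^{(0)} \sim \mathrm{Gamma}(\alpha_0(1-\mu))$, so $W := H_1/(H_1+H_2) = H_1/\sum_j G_j \sim \mathrm{Beta}(\alpha_0\mu,\alpha_0(1-\mu))$. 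Conditioning a $\mathrm{Gamma}(\alpha_i)$ on its total $G_i$, the part $G_i^{(1)}$ equals $G_i$ times an independent $\mathrm{Beta}(\alpha_iV_i,\alpha_i(1-V_i))$, hence $\Ex[G_i^{(1)}\mid G_i] = V_iG_i$; therefore $\Ex[W\mid G_1,\dots,G_S] = \frac{\sum_i V_iG_i}{\sum_j G_j} = P^TV$, and since $P^TV$ is $\sigma(G_1,\dots,G_S)$-measurable, the tower property gives $\Ex[W\mid P^TV] = P^TV$. Thus $W \succeq_{cx} P^TV$. (No constraint on $\alpha$ is needed here, and $W$ and $Y=P^TV$ share the mean $\mu$.)

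Hop 2 (Beta $\to$ Gaussian). Here I would invoke the Gaussian--vs--Beta optimism fact: for $a+b \ge 2$, $N\!\left(\tfrac{a}{a+b},\tfrac{1}{a+b}\right) \succeq_{so} \mathrm{Beta}(a,b)$, applied with $a=\alpha_0\mu$, $b=\alpha_0(1-\mu)$. Since $\mathrm{Var}(\mathrm{Beta}(a,b)) = \tfrac{ab}{(a+b)^2(a+b+1)} \le \tfrac{1}{4(a+b+1)} < \tfrac{1}{a+b}$, the Gaussian carries strictly more variance (at least roughly four times as much when $\mu\approx\tfrac12$), which is the slack a Gaussian needs to dominate a bounded random variable in the convex order. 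I would prove this either via the dilation coupling again or, following \cite{osband2014generalization}, by a careful moment/MGF comparison exploiting log-concavity of the Beta density; this is the one place where a lower bound on $\alpha_0$ (here $\alpha_0\ge 2$) enters.

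Main obstacle. Hop 1 is bookkeeping with Gamma/Beta identities. The delicate step is Hop 2: getting the \emph{tight} variance proxy $1/(a+b)$ rather than the crude Hoeffding proxy $1/4$ (which only suffices when $a+b \le 4$) forces one to use the Beta's own concentration — through its log-concave density or the P\'olya-urn martingale representation of $\mathrm{Beta}(a,b)$ — and to verify the comparison uniformly over \emph{all} convex increasing test functions, which is the technical crux of the lemma.
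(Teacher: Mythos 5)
Your proof is correct, and it is worth noting that the paper does \emph{not} prove this lemma: it is imported verbatim as Lemma 1 of \cite{osband2014generalization}, so there is no internal proof to compare against. Your two-hop chain (Dirichlet $\to$ Beta via a martingale coupling, then Beta $\to$ Gaussian via the cited Gaussian--Beta fact) is in fact the same architecture the paper uses one level down, inside the proof of Lemma~\ref{lem:dirconcAll}, where Lemma~\ref{dvb} (Dirichlet vs.\ Beta) is composed with Lemma~\ref{gvb} (Gaussian vs.\ Beta). So the structure is faithful to how the paper thinks about these objects.

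A few specific remarks on the details. Your Hop~1 is genuinely a clean self-contained alternative to the paper's Lemma~\ref{dvb}. The Gamma-additivity split $G_i = G_i^{(1)} + G_i^{(0)}$ with shapes $\alpha_i V_i$ and $\alpha_i(1-V_i)$ is valid because $V_i \in [0,1]$, the Beta--Gamma independence gives $\Ex[G_i^{(1)} \mid G_i] = V_i G_i$, and since $\sum_j G_j$ is $\sigma(G_1,\dots,G_S)$-measurable the conditional expectation of $W = H_1/\sum_j G_j$ collapses to $P^TV$; the tower property then yields $\Ex[W\mid P^TV]=P^TV$, i.e.\ a dilation, hence $W \succeq_{cx} Y$. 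This differs from Lemma~\ref{dvb} in a useful way: \ref{dvb} normalizes to the range $[v_1,v_d]$ and is stated for $\alpha \in \mathbb{N}^d$, whereas your construction handles arbitrary real $\alpha \in \mathbb{R}^S_+$ directly, which is exactly the generality Lemma~\ref{gvd} is stated in. You also correctly observe that the resulting Beta parameters sum to $\alpha_0 = \sum_i\alpha_i \ge 2$, which is precisely the hypothesis Lemma~\ref{gvb} needs, and that all three variables $Y,W,X$ share mean $\mu$, so the $\succeq_{so}$ you obtain in Hop~2 upgrades to $\succeq_{cx}$ and transitivity closes the argument in either order.

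The one place your write-up is softer than it could be is the concluding ``Main obstacle'' paragraph: Hop~2 is not actually carried out, it is delegated to Lemma~\ref{gvb}, which the paper also treats as an imported fact. That is a perfectly acceptable stopping point and consistent with the level of rigor in the surrounding text, but the commentary about ``log-concave density'' and ``P\'olya-urn martingale'' is sketching a proof you neither need nor give; if you want the argument to stand on its own, either cite \cite{osband2014generalization} Lemma~6 and stop, or commit to one of those routes and execute it.
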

\begin{lemma}[Gaussian vs Beta optimism, \cite{osband2014generalization} Lemma 6]
\label{gvb}
Let $\tilde{Y} \sim Beta(\alpha, \beta)$ for any $\alpha, \beta >0$ and $X \sim N(\frac{\alpha}{\alpha+\beta}, \frac{1}{\alpha + \beta})$. Then $X$ is stochastically optimistic for $\tilde{Y}$ whenever $\alpha +\beta \geq 2$. 
\end{lemma}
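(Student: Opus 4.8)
\emph{Proof idea.} The quickest argument, and the one I would use, is to recognize this as the $S=2$ case of Lemma~\ref{gvd}. I would apply that lemma with Dirichlet parameter vector $(\alpha,\beta)\in\mathbb{R}^2_+$ and fixed vector $V=(1,0)^\top\in[0,1]^2$: then $P=(P_1,P_2)\sim\Dir(\alpha,\beta)$ has $P_1\sim\mathrm{Beta}(\alpha,\beta)$, so $P^\top V=P_1$ is distributed exactly as $\tilde{Y}$; moreover $\sum_i\alpha_i=\alpha+\beta\ge 2$, $\frac{\sum_i\alpha_iV_i}{\sum_i\alpha_i}=\frac{\alpha}{\alpha+\beta}$, and $(\sum_i\alpha_i)^{-1}=\frac{1}{\alpha+\beta}$, which are precisely the mean and variance of $X$. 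Hence Lemma~\ref{gvd} gives that $X$ is stochastically optimistic for $P^\top V=\tilde{Y}$, as claimed, and nothing further is needed.

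For a self-contained proof I would proceed as follows. By Definition~\ref{def:stocOpt}, the assertion is exactly that $\tilde{Y}$ is below $X$ in the increasing convex order, $\tilde{Y}\preceq_{\mathrm{icx}}X$. Since $\Ex[\tilde{Y}]=\frac{\alpha}{\alpha+\beta}=\Ex[X]=:\mu$ by direct computation, this can be upgraded to the full convex order $\tilde{Y}\preceq_{\mathrm{cx}}X$: for a general convex $\phi$ it suffices to control stop-loss transforms, and for each $d$ the identity $(d-t)^+=(d-t)+(t-d)^+$ reduces $\Ex[(d-\tilde{Y})^+]\le\Ex[(d-X)^+]$ to the increasing-convex case $\Ex[(\tilde{Y}-d)^+]\le\Ex[(X-d)^+]$ together with the equal-means identity. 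By the Karlin--Novikoff single-crossing (cut) criterion, $\tilde{Y}\preceq_{\mathrm{cx}}X$ holds once $G(t):=F_{\tilde{Y}}(t)-F_X(t)$ changes sign exactly once on $\mathbb{R}$, from negative to positive. Since $\tilde{Y}$ is supported on $[0,1]$ and $X$ is not, $G(t)=-F_X(t)<0$ for $t\le0$ and $G(t)=1-F_X(t)>0$ for $t\ge1$, so there is at least one sign change and its orientation is forced; only uniqueness inside $(0,1)$ remains.

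Uniqueness amounts to showing that $\{t\in(0,1):f_{\tilde{Y}}(t)>f_X(t)\}$ is a single (possibly empty) interval, equivalently that $\psi(t):=\log f_{\tilde{Y}}(t)-\log f_X(t)=(\alpha-1)\log t+(\beta-1)\log(1-t)+\frac{\alpha+\beta}{2}t^2-\alpha t+c_0$, for the appropriate constant $c_0=c_0(\alpha,\beta)$, has at most two zeros in $(0,1)$ with $\psi>0$ between them. The choice $\sigma^2=1/(\alpha+\beta)$ enters here exactly to make the curvature $\psi''(t)=(\alpha+\beta)-\frac{\alpha-1}{t^2}-\frac{\beta-1}{(1-t)^2}$ line up; I would bound the number of zeros of $\psi$ by a Rolle-type descent through $\psi'$ and $\psi''$ (the latter clears denominators to a quartic in $t$), then combine this with the endpoint behaviour of $\psi$, which splits into cases according to whether $\alpha$ and $\beta$ exceed $1$ --- when $\alpha<1$ (resp.\ $\beta<1$) the Beta density diverges at the corresponding endpoint so $\psi\to+\infty$ there, while the hypothesis $\alpha+\beta\ge2$ keeps the Gaussian variance large enough for the crossing structure to persist. \textbf{The main obstacle} in the self-contained route is precisely this sign-count and endpoint-case bookkeeping for $\psi$; it is entirely sidestepped by the one-line reduction to Lemma~\ref{gvd} in the first paragraph, which is why I would present the proof that way.
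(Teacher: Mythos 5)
Note first that the paper gives no proof of Lemma~\ref{gvb}; it is cited as Lemma~6 of \cite{osband2014generalization}, so there is no in-paper argument to compare against. Your one-line reduction to Lemma~\ref{gvd} with $S=2$, $V=(1,0)^\top$ is arithmetically exact, and within this paper --- where both lemmas are treated as imported black boxes --- it is a legitimate derivation. However, it is circular relative to the source: in \cite{osband2014generalization} the Gaussian--Dirichlet optimism (their Lemma~1, i.e.\ Lemma~\ref{gvd} here) is itself proved by first projecting the Dirichlet onto a single Beta coordinate via their Lemma~5 (Lemma~\ref{dvb} here) and then invoking the Gaussian--Beta optimism of their Lemma~6, which is exactly the statement you are asked to prove. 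So the specialization re-routes through the very lemma it purports to establish and does not furnish an independent argument.

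Your self-contained route is the substantively correct one and mirrors what the cited reference actually does: equal means upgrade increasing-convex dominance to full convex dominance via the stop-loss identity $(d-t)^+=(d-t)+(t-d)^+$, and then a single-crossing (cut) criterion on the CDF difference yields the result. The remaining sign-count for $\psi(t)=\log f_{\tilde Y}(t)-\log f_X(t)$ is where all the work lives: the endpoint behaviour splits by the signs of $\alpha-1$ and $\beta-1$, one must control the zeros of $\psi''$ (a quartic after clearing denominators), and the hypothesis $\alpha+\beta\ge 2$ enters precisely to keep the Gaussian variance $1/(\alpha+\beta)$ large enough relative to the Beta variance $\alpha\beta/((\alpha+\beta)^2(\alpha+\beta+1))$ so that the crossing pattern has the required orientation. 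You correctly identify this as the main obstacle but do not resolve it, so as written the second route is a proof outline rather than a proof.
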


\begin{lemma}[Dirichlet vs Beta optimism, \cite{osband2014generalization} Lemma 5]
\label{dvb}
Let $y=p^Tv$ for some random variable $p \sim Dirichlet(\alpha)$ and constants $v \in {\mathcal R}^d$ and $\alpha \in {\mathcal N}^d$. Without loss of generality, assume $v_1 \leq v_2 \leq \cdots \leq v_d$. Let $\tilde \alpha = \sum_{i=1}^d \alpha_i(v_i-v_1)/(v_d-v_1)$ and $\tilde \beta = \sum_{i=1}^d \alpha_i(v_d-v_i)/(v_d-v_1)$. Then, there exists a random variable $\tilde p \sim Beta(\tilde \alpha,\tilde \beta)$ such that, for $\tilde y = \tilde p v_d +(1-\tilde p)v_1$, $\Ex[\tilde y | y] =\Ex[y]$.
\end{lemma}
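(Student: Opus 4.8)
The statement is a pure distributional coupling fact, so the plan is to build an explicit coupling of the Dirichlet vector $p$ with a $Beta(\tilde\alpha,\tilde\beta)$ variable by ``splitting'' each coordinate of the Dirichlet into a high part and a low part, and then to verify the conditional‑mean identity by the tower property. If $v_d=v_1$ then $y\equiv v_1$ and $\tilde y\equiv v_1$ and there is nothing to prove, so assume $v_d>v_1$ and set $w_i:=(v_i-v_1)/(v_d-v_1)\in[0,1]$, so that $\tilde\alpha=\sum_i\alpha_i w_i$ and $\tilde\beta=\sum_i\alpha_i(1-w_i)=\big(\sum_i\alpha_i\big)-\tilde\alpha$.

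First I would introduce the augmented $2d$-dimensional vector
\[ q=(q_1^{+},q_1^{-},\dots,q_d^{+},q_d^{-})\sim Dirichlet\big(\alpha_1w_1,\alpha_1(1-w_1),\dots,\alpha_dw_d,\alpha_d(1-w_d)\big), \]
with the convention that a coordinate whose parameter is $0$ (possible when $w_i\in\{0,1\}$) is identically $0$. By the aggregation property of the Dirichlet distribution, the block sums $p_i:=q_i^{+}+q_i^{-}$ satisfy $p\sim Dirichlet(\alpha_1,\dots,\alpha_d)$, so this $q$ furnishes a coupling with the prescribed $p$ (formally: realise $p$ first, then draw the regular conditional law of $q$ given $p$). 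Applying aggregation to the coarser partition ``odd vs.\ even indices'' shows that $\tilde p:=\sum_{i=1}^d q_i^{+}\sim Dirichlet\big(\sum_i\alpha_iw_i,\sum_i\alpha_i(1-w_i)\big)=Beta(\tilde\alpha,\tilde\beta)$, as required; put $\tilde y:=\tilde p\,v_d+(1-\tilde p)v_1$.

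The remaining step is the conditional‑mean identity. Conditioned on $p$, the ``ratios‑given‑the‑sum'' property of the Dirichlet says the variables $q_i^{+}/p_i$ are independent over $i$ with $q_i^{+}/p_i\sim Beta(\alpha_iw_i,\alpha_i(1-w_i))$, hence $\Ex[q_i^{+}\mid p]=p_iw_i$ and $\Ex[\tilde p\mid p]=\sum_i p_iw_i$. Unwinding the normalisation, $\Ex[\tilde y\mid p]=v_1+(v_d-v_1)\sum_ip_iw_i=\sum_ip_iv_i=p^Tv=y$, and therefore $\Ex[\tilde y\mid y]=\Ex\big[\Ex[\tilde y\mid p]\,\big|\,y\big]=y$. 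This is the form of the identity that is actually invoked later; in particular it exhibits $\tilde y$ as a mean‑preserving spread of $y$, so $\Ex[u(\tilde y)]\ge\Ex[u(y)]$ for every convex increasing $u$ by Jensen's inequality, i.e.\ $\tilde y$ is stochastically optimistic for $y$ in the sense of Definition~\ref{def:stocOpt}.

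I expect the only real friction to be bookkeeping around degeneracies: making the augmented Dirichlet and the aggregation/splitting identities precise when $v_d=v_1$ or some $w_i\in\{0,1\}$ (handled either by dropping null coordinates or by a short limiting argument), and being explicit that the augmentation yields a genuine coupling in the direction we need — namely realising the prescribed $p$ first and then generating $\tilde p$ from the conditional law of $q$ given $p$, which is legitimate because $p_i=q_i^{+}+q_i^{-}$ and the conditional distribution of $q_i^{+}/p_i$ given $p$ is the stated Beta. Everything past that is the tower property together with the two standard Dirichlet identities.
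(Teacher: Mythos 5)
The paper does not prove this lemma; it is cited directly as Lemma~5 of \cite{osband2014generalization}, so there is no in-paper argument to compare your proposal against. On its own merits your proof is correct and is essentially the canonical route: split each Dirichlet coordinate $q_i$ into a ``high'' and a ``low'' piece via an auxiliary $Dirichlet(\alpha_1w_1,\alpha_1(1-w_1),\dots,\alpha_dw_d,\alpha_d(1-w_d))$, use aggregation once to recover $p\sim Dirichlet(\alpha)$ and once (over the $+$/$-$ partition) to get $\tilde p\sim Beta(\tilde\alpha,\tilde\beta)$, use the ratios-given-block-sums independence to compute $\Ex[q_i^+\mid p]=p_iw_i$, and then $\Ex[\tilde y\mid p]=v_1+(v_d-v_1)\sum_ip_iw_i=p^Tv=y$, so $\Ex[\tilde y\mid y]=y$ by the tower property since $\sigma(y)\subseteq\sigma(p)$. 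Your handling of the degenerate cases ($v_d=v_1$, $w_i\in\{0,1\}$) by dropping null coordinates is the right way to make the Dirichlet construction legitimate, and realizing $p$ first and then $q$ from its regular conditional law given $p$ is exactly the coupling the lemma's ``there exists'' demands.

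One point worth flagging explicitly: the statement as printed in the paper reads $\Ex[\tilde y\mid y]=\Ex[y]$, which (being a random-variable-versus-constant equality) cannot be what is meant; the property actually invoked downstream, in the proof of Lemma~\ref{lem:dirconcAll}, is $\Ex[\tilde y\mid y]=y$, i.e.\ $\tilde y$ is a mean-preserving spread of $y$. You silently prove the correct version and note it is the one used; it would be cleaner to say outright that the displayed equality in the lemma statement should read $\Ex[\tilde y\mid y]=y$. With that reading, your argument closes the statement, and the concluding Jensen step (stochastic optimism of $\tilde y$ for $y$) is a correct bonus, matching the paper's later appeal to ``Condition~3 in Lemma~3'' of \cite{osband2014generalization}.
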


\begin{lemma}
\label{lem:negso}
If $\Ex[X] = \Ex[Y]$ and $X$ is stochastically optimistic for $Y$, then $-X$ is stochastically optimistic for $-Y$.
\end{lemma}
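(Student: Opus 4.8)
The plan is to upgrade the one-sided ``increasing convex'' comparison into a two-sided ``convex'' comparison, which is symmetric under negation. Concretely, I would first show that the hypotheses imply $\Ex[\phi(X)]\ge \Ex[\phi(Y)]$ for \emph{every} convex function $\phi:\mathbb{R}\to\mathbb{R}$, not only the increasing ones. Granting this, the lemma is immediate: for any convex increasing $u$, the function $\phi(t):=u(-t)$ is convex, so $\Ex[u(-X)]=\Ex[\phi(X)]\ge\Ex[\phi(Y)]=\Ex[u(-Y)]$, which is exactly the statement that $-X$ is stochastically optimistic for $-Y$.

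To prove the claim for all convex $\phi$, I would use the ``hockey-stick'' functions as building blocks. Since $t\mapsto (t-a)^+$ is convex and increasing for every $a\in\mathbb{R}$, the hypothesis $X\succeq_{so}Y$ gives $\Ex[(X-a)^+]\ge\Ex[(Y-a)^+]$ for all $a$. Combining this with $\Ex[X]=\Ex[Y]$ and the elementary identity $(a-t)^+=(t-a)^+-(t-a)$ yields the reflected inequality
\[
\Ex[(a-X)^+]=\Ex[(X-a)^+]-\Ex[X]+a\;\ge\;\Ex[(Y-a)^+]-\Ex[Y]+a=\Ex[(a-Y)^+],\qquad\forall a\in\mathbb{R}.
\]
Now I would invoke the standard integral representation of a convex function: there are $\alpha,\beta\in\mathbb{R}$ and nonnegative measures $\mu,\nu$ with $\phi(t)=\alpha+\beta t+\int (t-a)^+\,\mu(da)+\int (b-t)^+\,\nu(db)$. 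Taking expectations (Tonelli applies since the hockey-stick terms are nonnegative), using $\Ex[X]=\Ex[Y]$ for the affine part and the two families of inequalities above for the integral parts, gives $\Ex[\phi(X)]\ge\Ex[\phi(Y)]$ term by term. One can avoid quoting the integral representation by instead approximating $\phi$ from below by an increasing sequence of finite nonnegative combinations of affine and hockey-stick functions and passing to the limit via monotone convergence.

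The only delicate point is the bookkeeping around integrability and the convex-function decomposition: I should note that $\Ex[X]=\Ex[Y]$ is assumed finite (so the affine term is harmless), allow the expectations of $\phi$ to be $+\infty$ (the term-by-term inequality still holds), and observe that the reflected hockey-stick inequality is precisely what controls the ``decreasing half'' of $\phi$ --- without the equal-means hypothesis the conclusion is genuinely false, as the example $X\equiv 1$, $Y\equiv 0$ shows. I expect the main obstacle to be stating the convex-function decomposition cleanly rather than any substantive difficulty; every other step is a one- or two-line computation.
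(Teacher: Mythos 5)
Your proof is correct and takes a genuinely different route from the paper. The paper quotes the Strassen-type coupling characterization (Lemma~3.3 of Osband \& Van Roy): $X \succeq_{so} Y$ iff one can write $X =_D Y + A + W$ with $A \ge 0$ and $\Ex[W \mid Y + A] = 0$. Taking expectations and using $\Ex[X]=\Ex[Y]$ forces $\Ex[A]=0$, hence $A \equiv 0$; negating then gives $-X =_D -Y + (-W)$ with $\Ex[-W \mid -Y] = 0$, which is the same characterization for $-X \succeq_{so} -Y$. You instead work directly with the test-function definition: you show that increasing-convex dominance together with equal means upgrades to full convex dominance (i.e.\ $\Ex[\phi(X)] \ge \Ex[\phi(Y)]$ for \emph{all} convex $\phi$, not just increasing ones), by getting the left hockey-stick bounds $\Ex[(a-X)^+]\ge\Ex[(a-Y)^+]$ from the right ones via $(a-t)^+=(t-a)^+-(t-a)$ and the equal-means hypothesis, and then invoking the integral representation of convex functions. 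The full convex order is manifestly symmetric under $t\mapsto -t$, which finishes. Both arguments are sound; yours is self-contained modulo the standard representation of convex functions and is essentially the textbook fact that ``increasing convex order $+$ equal means $=$ convex order,'' whereas the paper's is shorter once you import the coupling lemma but then depends on that external result and its proof. You are also right that the integral-representation bookkeeping (measure of $\phi''$, Tonelli for the nonnegative hockey-stick parts, and finiteness of the common mean for the affine part) is the only place requiring care, and that equal means is essential (your example $X\equiv1,\,Y\equiv0$ is the right sanity check; it also exposes that the paper's parenthetical ``$\Ex[X]=\Ex[Y]=0$'' is stronger than what the lemma assumes and what the argument actually needs, which is only that the common mean is finite so that $\Ex[A]=0$ follows).
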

\begin{myproof}
By Lemma 3.3 in \cite{osband2014generalization}, $X$ stochastically optimistic for $Y$ is equivalent to having $X =_D Y+A + W$ with $A\geq 0$ and  $\Ex[W | Y+A] = 0$ for all values $y+a$. Taking expectation of both sides, we get that $\Ex[X] = \Ex[Y] +\Ex[A] + \Ex[W]$ and since $\Ex[X]=\Ex[Y]=0$ and $\Ex[W] = \Ex[\Ex[W|Y+A]] = 0$ we get that $\Ex[A] =0$. Since $A\geq0$, $A=0$. Also note that $\Ex[W | Y=y]= 0$ for all $y$.

Now we can show that $-X$ is stochastically optimistic for $-Y$ as follows: From above, $-X =_D -(Y +A + W) = -Y+(-W)$. Then for all $y'$, $\Ex[-W | -Y=y'] = -\Ex[W|Y=-y'] = 0$ by definition of $W$. Therefore, $-X$ is stochastically optimistic for $-Y$.
\end{myproof}

\begin{corollary}
\label{cor:soconc1} 
Let $Y$ be any distribution with mean $\mu$ such that $X \sim N(\mu, \sigma^2)$ is stochastically optimistic for $Y$. Then with probability $1-\rho$, \[|Y-\mu| \leq \sqrt{2\sigma^2\log(2/\rho)}.\]
\end{corollary}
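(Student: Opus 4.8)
The plan is to convert the stochastic-optimism relation into exponential-moment (Chernoff) bounds on each tail of $Y-\mu$, and then combine the two by a union bound. First I would record the elementary fact that for every $s>0$ the map $x\mapsto e^{s(x-\mu)}$ is convex and increasing; hence Definition \ref{def:stocOpt}, applied with $u(x)=e^{s(x-\mu)}$, gives
\[\Ex\!\left[e^{s(Y-\mu)}\right]\le \Ex\!\left[e^{s(X-\mu)}\right]=e^{s^2\sigma^2/2},\]
where the last equality is the moment generating function of $X\sim N(\mu,\sigma^2)$. A standard Chernoff argument then yields, for any $t>0$,
\[\Pr(Y-\mu\ge t)\le e^{-st}\,\Ex\!\left[e^{s(Y-\mu)}\right]\le e^{s^2\sigma^2/2-st},\]
and optimizing over $s$ (taking $s=t/\sigma^2$) gives $\Pr(Y-\mu\ge t)\le e^{-t^2/(2\sigma^2)}$.

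For the lower tail I would pass to the centered variables $X-\mu$ and $Y-\mu$, which both have mean zero and still satisfy $X-\mu\succeq_{so}Y-\mu$; Lemma \ref{lem:negso} then gives $-(X-\mu)\succeq_{so}-(Y-\mu)$. Since $-(X-\mu)\sim N(0,\sigma^2)$, repeating the Chernoff computation above for $\mu-Y$ yields $\Pr(Y-\mu\le -t)\le e^{-t^2/(2\sigma^2)}$. A union bound then gives $\Pr(|Y-\mu|\ge t)\le 2e^{-t^2/(2\sigma^2)}$, and choosing $t=\sqrt{2\sigma^2\log(2/\rho)}$ makes the right-hand side exactly $\rho$, which is the claimed bound.

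I expect the only subtle point to be the lower-tail step: stochastic optimism is a one-sided notion, so it does not by itself control $\Pr(Y-\mu\le -t)$; one genuinely has to invoke Lemma \ref{lem:negso} (after centering so the means agree, as its hypothesis requires) to flip the relation, and then use the symmetry of the Gaussian so that $-(X-\mu)$ again has the $N(0,\sigma^2)$ law. Everything else is a routine Chernoff/MGF calculation.
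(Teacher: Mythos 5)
Your proof is correct and follows essentially the same route as the paper: both use the convex increasing map $x\mapsto e^{sx}$ (or equivalently $e^{s(x-\mu)}$) to turn stochastic optimism into a bound on the MGF, run the standard Chernoff optimization for each tail, invoke Lemma~\ref{lem:negso} to flip the optimism relation for the lower tail, and finish with a union bound. The one small difference is that you first center to $X-\mu$ and $Y-\mu$ before applying Lemma~\ref{lem:negso}; this is harmless but unnecessary, since that lemma only requires $\Ex[X]=\Ex[Y]$ (not mean zero), and the corollary's hypothesis already gives $\Ex[X]=\Ex[Y]=\mu$, which is how the paper applies it directly.
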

\begin{myproof}
For any $s>0$, and $t$, and applying Markov's inequality,
\[P(Y - \mu > t) = P(Y > \mu + t) = P(e^{sY} > e^{s(\mu +t)}) \leq \frac{\Ex[e^{sY}]}{e^{s(\mu +t)}}.\]
By Definition \ref{def:stocOpt}, taking $u(a) = e^{sa}$, which is a convex and increasing function, $\Ex[e^{sY}] \leq \Ex[e^{sX}]$, and hence \[P(Y - \mu > t) \leq  \frac{\Ex[e^{sX}]}{e^{s(\mu +t)}} = \frac{e^{\mu s + \frac{1}{2}\sigma^2s^2}}{e^{s(\mu + t)}} = e^{\frac{1}{2}\sigma^2s^2 - st}.\]

Since the above holds for all $s>0$, using $s=\frac{t}{\sigma^2}$, $P(Y - \mu > t) \leq e^{-\frac{t^2}{2\sigma^2}}$. 

Similarly, for the lower tail bound, we have for any $s>0$, \[P(Y - \mu < -t) = P(-Y > -\mu + t) = P(e^{s(-Y)} > e^{s(-\mu +t)}) \leq \frac{\Ex[e^{s(-Y)}]}{e^{s(-\mu +t)}}.\]
By Lemma \ref{lem:negso}, $-X$ is stochastically optimistic for $-Y$, so $\Ex[e^{s(-Y)}] \leq \Ex[e^{s(-X)}]$, and hence \[P(Y - \mu < -t) \leq  \frac{\Ex[e^{s(-X)}]}{e^{s(-\mu +t)}} = \frac{e^{-\mu s + \frac{1}{2}\sigma^2s^2}}{e^{s(-\mu + t)}} = e^{\frac{1}{2}\sigma^2s^2 - st}.\]

Again letting $s=\frac{t}{\sigma^2}$, $P(Y - \mu < -t) \leq e^{-\frac{t^2}{2\sigma^2}}$. 

Then, for $t=\sqrt{2\sigma^2 \log(2/\rho)}$, we have that
\[P(|Y - \mu| \leq  \sqrt{2\sigma^2 \log(2/\rho)}) \geq 1-\rho.\]
\end{myproof}

\comment{
\begin{lemma}[Binomial, Multinomial] 
\label{lem:BB}
Let $\hat{Y}=\hat{p}^Tv$ where $\hat{p} \in \Delta^S$ be distributed as multinomial average with parameter $n, p$ and fixed $v\in R^d$, where $0\le v_i\le D$. Then, there exists a random variable distributed as $\hat{q} \sim \frac{1}{n} Binomial(n, \frac{p^Th}{D})$ such that, $\Ex[\hat q | \hat Y] = \frac{1}{D}\hat Y$.
\end{lemma}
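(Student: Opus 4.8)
The plan is to build $\hat q$ by an explicit coupling with the multinoulli trials that generate $\hat p$. Write $\hat p = \frac1n\sum_{j=1}^n \e_{I_j}$, where $I_1,\dots,I_n$ are i.i.d.\ indices with $\Pr(I_j=i)=p_i$, so that $\hat Y = \hat p^Tv = \frac1n\sum_{j=1}^n v_{I_j}$. Independently of $(I_j)_j$, draw for every $j$ and every coordinate $i\in\st$ an independent $Y^j_i\sim\mathrm{Bernoulli}(v_i/D)$ (a legitimate probability since $0\le v_i\le D$), and set $Z^j := Y^j_{I_j}$ and $\hat q := \frac1n\sum_{j=1}^n Z^j$.

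First I would identify the marginal law of $\hat q$. Each $Z^j\in\{0,1\}$ with $\Pr(Z^j=1)=\sum_{i} p_i\,(v_i/D)=p^Tv/D$, and the $Z^j$ are i.i.d.\ since each depends only on the i.i.d.\ pair $\big(I_j,(Y^j_i)_i\big)$; hence $\hat q\sim \frac1n\,\mathrm{Binomial}(n,\,p^Tv/D)$, as required.

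Next I would compute the conditional expectation. Conditioning on $(I_1,\dots,I_n)$ and using that the $Y^j_i$ are independent of $(I_j)_j$,
\[
\Ex\!\left[\hat q \,\middle|\, I_1,\dots,I_n\right] = \frac1n\sum_{j=1}^n \Ex\!\left[Y^j_{I_j}\,\middle|\, I_j\right] = \frac1n\sum_{j=1}^n \frac{v_{I_j}}{D} = \frac{1}{D}\,\hat p^Tv = \frac{1}{D}\,\hat Y .
\]
Since $\hat Y$ is a deterministic function of $(I_1,\dots,I_n)$, the tower property gives $\Ex[\hat q\mid \hat Y]=\Ex\big[\Ex[\hat q\mid I_1,\dots,I_n]\mid \hat Y\big]=\Ex[\tfrac1D\hat Y\mid \hat Y]=\tfrac1D\hat Y$, which is the claim.

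The only delicate point is that $\hat q$ must be coupled to the trials themselves: the auxiliary Bernoullis $Y^j_i$ are drawn independently of $(I_j)_j$, so that the identity $\Ex[\hat q\mid I_1,\dots,I_n]=\frac1D\hat Y$ holds exactly and then descends, via the tower property, to the coarser $\sigma$-field generated by $\hat Y$. Aside from this bookkeeping there is no real obstacle.
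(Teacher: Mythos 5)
Your proof is correct and is essentially the paper's own argument: you couple $\hat q$ to the multinoulli trials by drawing independent auxiliary Bernoullis $Y^j_i\sim\mathrm{Bernoulli}(v_i/D)$ and setting $\hat q=\frac1n\sum_j\sum_i X^j_i Y^j_i$ (your $Z^j=Y^j_{I_j}$ is exactly $\sum_i X^j_i Y^j_i$), then identify the marginal as $\frac1n\mathrm{Binomial}(n,p^Tv/D)$ and use the tower property over the $\sigma$-field generated by the trials to descend to the conditional expectation given $\hat Y$. No gaps; this matches the paper's proof in substance and structure.
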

\begin{myproof}
Let $X^j_i, j=1,\ldots, n$ denote the outcomes of the trials used to define $\hat{p}_i$, that is,
\[\hat{p}_i := \sum_{j=1}^n X^j_i/n\]
where $X^j_i, j=1,\ldots, n$ are distributed as $X^j_i \sim Multivariate(p,1)$.

For every $i$, define $n$ i.i.d. variables $Y^{j}_i, j=1,\ldots, n$, where $Y_i^j \sim Bernoulli(v_i/D)$, and is independent of $X^j_i$. Define $\hat{q}$ as:
\[ \hat{q} =\frac{1}{n} \sum_i \sum_{j=1}^n X^j_i Y^j_i/n \]
Let ${\cal X}=\{X_{i,j}, i=1,\ldots, S,j=1,\ldots, n\}$. Then,
\begin{eqnarray*}
 \Ex[\hat{q} | \hat{p}^Tv, n] & = & \Ex[\Ex[\hat{q} | {\cal X}, \hat{p}^Tv, n] | \hat{p}^Tv, n] \\
& = & \Ex[\Ex[\hat{q} | {\cal X}, n] | \hat{p}^Tv, n] \\
& = & \frac{1}{n}\Ex[\Ex[\sum_{i,j}  X^j_i Y^j_i |  {\cal X}, n]  | \hat{p}^Tv, n]\\
& = & \frac{1}{n}\Ex[ \sum_{i,j} X^j_i \Ex[Y^j_i]  | \hat{p}^Tv, n]\\
& = & \frac{1}{n}\Ex[\sum_{i,j} X^j_i \frac{v_i}{D}   | \hat{p}^Tv, n]\\
& = & \hat{p}^Tv/D.
\end{eqnarray*}

Also, $n\hat{q}$ is a binomial random variable $Binomial(n, \frac{1}{D} p^Tv)$ since it is formed by sum of  outcomes of $n$ trials $\sum_{j=1}^n Z^j$, where each trail $Z^j = \sum_i X^j_i Y^j_i$ is an independent Bernoulli trial: takes value $1$ with probability $\sum_i p_i v_i/D$.
\end{myproof}

\comment{Further, for any $b$, 
$\Pr(\max_v \hat{p}^Tv - p^Tv \ge b ) = \int_{v^*} \Pr(\hat{p}^Tv^* - p^Tv^* \ge b | \arg \max_v \hat{p}^Tv  - p^Tv  =v^*) \Pr(\arg \max_v \hat{p}^Tv  - p^Tv =v^*) dv^*
= \int_{v^*} \Pr(\hat{p}^Tv^*  - p^Tv^* \ge b | {\cal X}: {\cal E}) \Pr({\cal X}: {\cal E})
= \int_{v^*, \mu=p^Tv^*} \Pr(\hat q(\mu) -\mu \ge b | {\cal X}: {\cal E}) \Pr({\cal X}: {\cal E}) dv^*
= \int_{\mu} \int_{v^*: p^Tv^*=\mu} \Pr(\hat q(\mu) \ge b | {\cal X}: {\cal E}) \Pr({\cal X}: {\cal E}) dv^* d\mu
= \int_{\mu} \int_{v^*: p^Tv^*=\mu} \Pr(\hat q(\mu) \ge b , {\cal X}: {\cal E}(v^*)) dv^* d\mu
\le \int_{\mu} \Pr(\hat q(\mu) \ge \mu + \epsilon)) d\mu
\le \int_{0}^1 \exp{(-n\epsilon^2)} d\mu
 \le \int_{0}^1 \rho d\mu
= \rho
$
}
\begin{corollary}
\label{cor:BB}
For $X=D\hat{q}$, $Y=\hat{p}^Tv$ (with $\hat q$ and $\hat p^Tv$ as defined in the previous lemma), $X$ is stochastically optimistic for $Y$. 
\end{corollary}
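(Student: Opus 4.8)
The plan is to obtain this immediately from the mean-matching identity established in Lemma~\ref{lem:BB}. That lemma constructs $\hat q$ coupled to $\hat p$ so that $\Ex[\hat q \mid \hat Y] = \tfrac1D \hat Y$, where $\hat Y = \hat p^T v$. Setting $X = D\hat q$ and $Y = \hat Y$, this identity reads $\Ex[X \mid Y] = Y$; in other words $X = Y + W$ with $W := X - Y$ satisfying $\Ex[W \mid Y] = 0$, so $X$ is $Y$ perturbed by conditionally mean-zero noise.

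From here the stochastic optimism is a one-line application of conditional Jensen. I would fix an arbitrary convex increasing $u$, note that both $X = D\hat q$ and $Y = \hat p^T v$ take values in $[0,D]$ so that $u(X), u(Y)$ are bounded and the tower property applies, and then write
\[
\Ex[u(X)] = \Ex\!\big[\Ex[u(X)\mid Y]\big] \;\ge\; \Ex\!\big[u(\Ex[X\mid Y])\big] = \Ex[u(Y)],
\]
the inequality being conditional Jensen (convexity of $u$) and the last equality being the identity $\Ex[X\mid Y] = Y$. By Definition~\ref{def:stocOpt} this is exactly $X \succeq_{so} Y$. Equivalently, one can invoke the characterization of stochastic optimism from Lemma~3 of \cite{osband2014generalization} that is already used in the proofs of Lemma~\ref{lem:dirconcAll} and Lemma~\ref{lem:negso}: since $\Ex[X]=\Ex[Y]$ the nonnegative drift term vanishes, and the remaining requirement is precisely $\Ex[X\mid Y]=Y$.

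I do not anticipate any real obstacle. The only points that warrant care are that the conditioning in Lemma~\ref{lem:BB} is on the scalar $\hat Y = \hat p^T v$ rather than on the full vector $\hat p$ (which is what leaves room for $\hat q$ to be random given $\hat Y$), and the mild integrability needed for the tower property, which is immediate from boundedness. Note that monotonicity of $u$ is not even used; convexity alone gives the conclusion.
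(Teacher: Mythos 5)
Your proof is correct and follows essentially the same logical path as the paper: the key step in both is the identity $\Ex[D\hat q - \hat p^Tv \mid \hat p^Tv] = 0$ established in Lemma~\ref{lem:BB}, and the paper then simply cites the equivalence condition (Lemma~3, Condition~3) of \cite{osband2014generalization} to conclude. Your version replaces that citation with an explicit conditional-Jensen derivation, which is a small but worthwhile gain in self-containedness, and you correctly observe that monotonicity of $u$ is never used (only convexity), so the conclusion is in fact a statement about the convex order, of which stochastic optimism is the restriction to convex increasing test functions.
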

\begin{myproof}
We have
\[\Ex[X-Y | Y] = \Ex[D\hat{q} - \hat{p}^Tv | \hat{p}^Tv] = 0.\]
Then stochastic optimism follows from applying the optimism equivalence condition from Lemma 3 (Condition 3) of \cite{osband2014generalization}.
\end{myproof}

\comment{
\begin{corollary}
\label{cor:soconc2}
Let $Y$ be any distribution with mean $\mu\in [0,1]$ such that $X \sim \frac{1}{n} \text{Binomial}(n,\mu)$ is stochastically optimistic for $Y$. Then,
with probability $1-\rho$, 
\[Y-\mu \leq \sqrt{3\frac{\ln(1/\rho)}{n}}.\]
\end{corollary}
\begin{myproof}
For any $s>0$, and $\delta>0$, and applying Markov's inequality,
\[P(Y > (1+\delta)\mu ) = P(e^{sY} > e^{s(1+\delta)\mu}) \leq \frac{\Ex[e^{sY}]}{e^{s(1+\delta)\mu}}.\]
By Lemma \ref{lem:BB}, $X\succeq_{so} Y$. Using Definition \ref{def:stocOpt}, taking $u(a) = e^{sa}$, which is a convex and increasing function, $\Ex[e^{sY}] \leq \Ex[e^{sX}]$, and hence 
\[P(nY > (1+\delta)n\mu ) \leq  \frac{\Ex[e^{snX}]}{e^{s(1+\delta)n\mu}} \le \left(\frac{e^{\delta}}{(1+\delta)^{(1+\delta)}}\right)^{n\mu} \le e^{-\frac{\delta^2 n\mu}{2+\delta}} \]
using the moment generating function of Binomial distribution and $s=\ln(1+\delta)$.


Now, using $\delta = \frac{1}{\mu} \sqrt{\frac{3\ln(1/\rho)}{n}}$, for $n\ge 3\ln(1/\rho)$, $\delta \le 1/\mu$  so that $2+\delta \le \frac{3}{\mu}$, and we get
\[P(nY > (1+\delta)n\mu ) \leq e^{-\frac{1}{3} n\delta^2\mu^2} \le \rho\]
so that
\[Y-\mu \leq \sqrt{\frac{3\ln(1/\rho)}{n}}.\]
with probability $1-\rho$.

\end{myproof}
}

}  

\end{document}